\definecolor{Gray}{gray}{0.9}
\definecolor{midgreen}{rgb}{0.1,0.5,0.1}
\definecolor{darkgray}{gray}{0.25}
\definecolor{lightblue}{rgb}{0.25,0.25,0.8}
\definecolor{mydarkblue}{rgb}{0,0.08,0.45}
\setlist[enumerate]{topsep=0pt,partopsep=0pt}
\newcommand*{\dt}[1]{\accentset{\mbox{\bfseries .}}{#1}}
\newcommand{\norm}[1]{\ensuremath{\left\| #1 \right\|}}
\newcommand{\inner}[1]{\left \langle {#1} \right \rangle}
\newcommand{\bigo}{\mathcal{O}}
\newcommand{\abs}[1]{\left |#1\right|}
\def\erf{\mathrm{erf}}
\def\tsigma{\widetilde{\sigma}}
\def\tkappa{\widetilde{\kappa}}
\newcommand{\poly}[1]{\mathrm{poly}\left(#1\right)}
\newcommand{\ijij}[1]{#1_{i,j,i',j'}(y,z)}
\newcommand{\sumijij}[1]{\sum_{a=-\frac{q-1}{2}}^{\frac{q-1}{2}} \sum_{b=-\frac{q-1}{2}}^{\frac{q-1}{2}} #1_{i+a,j+b,i'+a,j'+b}(y,z)}
\def\0{{\bm 0}}
\def\A{{\bm A}}
\def\D{{\bm D}}
\def\I{{\bm I}}
\def\K{{\bm K}}
\def\W{{\bm W}}
\def\X{{\bm X}}
\def\Y{{\bm Y}}
\def\Ebb{\mathop{\mathbb{E}}}
\def\Rbb{\mathbb{R}}
\def\SS{\mathbb{S}}
\newenvironment{proofof}[1]{\noindent{\bf Proof of #1:}}{\hfill$\qed$\par}
\def\normx{\norm{x}_2}
\def\normy{\norm{y}_2}
\renewcommand{\cite}[1]{\citep{#1}}
\newtheorem{theorem}{Theorem}
\newtheorem{lemma}{Lemma}
\theoremstyle{definition}
\newtheorem{defn}{Definition}
\newtheorem*{lemma*}{Lemma}
\newcommand{\Amir}[1]{\bf \color{orange} Amir: }
\DeclareMathOperator*{\plim}{plim}
\title{Fast Neural Kernel Embeddings for General Activations}
\author[1]{Insu Han}
\author[2]{Amir Zandieh}
\author[3]{Jaehoon Lee}
\author[3]{Roman Novak}
\author[3]{Lechao Xiao}
\author[1,3]{Amin Karbasi}
\affil[1]{Yale University}
\affil[2]{Max-Planck-Institut für Informatik}
\affil[3]{Google Research}
\date{}
\begin{document}

\maketitle

% \vspace{-10pt}
\begin{abstract}
% \vspace{-10pt}
Infinite width limit has shed light on generalization and optimization aspects of deep learning by establishing connections between 
% deep 
neural networks and kernel methods. Despite their importance,
% , however, 
the utility of these kernel methods was limited in large-scale learning settings due to their (super-)quadratic runtime and memory complexities.
%has prevented application of NTKs in large-scale learning tasks. 
Moreover, 
% majority of 
most
prior works on neural kernels have focused on the ReLU activation
% function
, mainly due to its popularity but also due to 
% limitation of computing general activation functions.
the difficulty of computing such kernels for general activations.
In this work, we overcome such difficulties by providing methods to work with general activations. 
First, we compile and expand the list of activation functions admitting exact dual activation expressions to compute neural kernels.
When the exact computation is unknown, we present methods to effectively approximate them.
We propose a fast sketching method that approximates any multi-layered Neural Network Gaussian Process (NNGP) kernel and Neural Tangent Kernel (NTK) matrices for a wide range of activation functions, going beyond the commonly analyzed ReLU activation. This is done by showing how to approximate the neural kernels using the truncated Hermite expansion of any desired activation functions. While most prior works require data points on the unit sphere, our methods do not suffer from such limitations and are applicable to any dataset of points in $\mathbb{R}^d$. Furthermore, we provide a subspace embedding for NNGP and NTK matrices with near input-sparsity runtime and near-optimal target dimension which applies to any \emph{homogeneous} dual activation functions with rapidly convergent Taylor expansion. 
%Finally, our sketching method can achieve point-wise multiplicative approximation guarantees for the Convolutional NTK (CNTK).
Empirically, with 
respect to exact convolutional NTK (CNTK) computation, our method achieves $106\times$ speedup for approximate CNTK of a 5-layer Myrtle network on CIFAR-10 dataset. 
\end{abstract}

\section{Introduction}

Infinite width limit has enabled fundamental understandings of deep neural networks by establishing a correspondence to kernel methods.
% as it makes the output to be linear in parameter space.
% Importantly, 
In this limit, the network's function prior is a Gaussian process~\cite{neal,lee2018deep,matthews2018} and under gradient descent training with squared loss, the network behaves as a linearized function~\cite{jacot2018neural,lee2019wide}. Underlying these limit, a core object is a neural kernel which encapsulates architectural inductive prior in its functional form~\cite{xiao2022eigenspace}. The kernel describing gradient descent dynamics, the Neural Tangent Kernel (NTK)~\cite{jacot2018neural}, and Neural Network Gaussian Process (NNGP)~\cite{lee2018deep} kernel have been extensively studied~\cite{novak2018bayesian,garriga2018deep,arora2019exact,hron2020,du2019graph,yang2019scaling} since they were initially identified. In particular, the infinite width theory has shed light on powerful abilities of deep neural networks including optimization~\cite{arora2018optimization,allen2019convergence,pilanci2020neural,xiao2020disentangling}, generalization~\cite{neyshabur2018the,arora2019fine,cao2019generalization}, regularization~\cite{wei2019regularization,hu2019simple,jacot2020implicit} and robustness~\cite{dohmatob2021fundamental,hassani2022curse}. 
% Furthermore, beyond the
Beyond
theoretical findings, 
% there have 
it has been 
extensively reported that neural kernels can enhance practical applications including small data classification/regression tasks~\cite{arora2019harnessing}, neural architect search~\cite{park2020towards, chen2020tenas}, dataset distillation~\cite{nguyen2021dataset_iclr, nguyen2021dataset}, federated learning~\cite{huang2021fl}, meta learning~\cite{zhou2021meta}, generalization attack~\cite{yuan2021neural}, just to name a few.

Despite those powerful advantages, there is still a gap between practice and theory in the utility of these kernel methods. First, the NNGP and NTK can be exactly computed recursively~\cite{lee2018deep,jacot2018neural} however, the explicit forms are only known when the corresponding neural networks contain a few set of activation functions such as ReLU or Error functions.
While ReLU activation is the default choice for many deep learning applications, recently different activation functions have shown to work well in various domains of machine learning. For example, GeLU~\cite{hendrycks2016gaussian} has been widely used in Transformer based natural language processing settings~\cite{devlin2018bert,radford2019language,brown2020language} and sinusoidal activation functions work well for implicit neural representation (e.g. NeRF)~\cite{sitzmann2020implicit,tancik2020fourier}. Moreover, \citet{xie2020smooth} showed that smooth activation functions could improve robustness compared to ReLU-based models. To enable better theoretical understanding on the role of these activation functions in these domain, expanding the infinite width limit tool set to general activation function is an important step forward.

Secondly, even if the exact neural kernel computation is explicitly known, it requires significantly huge amount of computing resources. For example, it will take order of few 100 to 1,000 GPU hours to compute the exact NTK of depth $10$ convolutional neural networks with pooling on $60{,}000$ CIFAR-10 dataset. High compute requirement is often too expensive to perform extensive studies or use in a practical setting. While \citet{novak2022fast} have sped up Monte Carlo estimation of the NTK, random sampling remains impractical due to still high kernel computation cost, and cubic (in the training set size) inference cost. Recently, \citet{zandieh2021scaling} proposed an efficient method to approximate the NTK computation via sketching algorithms. Their algorithm can approximate the neural kernels with ReLU activation orders of magnitude faster than the exact one. But it remains unclear how sketching algorithms are extended to other activations.

In this work, we fill this gap by showing that neural kernel for \emph{arbitrary} smooth activation can be expressed in a form of series expansion.  
% A key component of the NTK computation is to computa
% that of a dual kernel of activation, defined as
We first focus on how to express a kernel function of neural network with a single hidden layer. Under the infinite width limit, this kernel converges to a static function, so-called a \emph{dual kernel}, and is determined by activation in the network. This is a key block to compute the NNGP and NTK of deeper architectures.  
We establish an explicit expression of dual kernel by expanding activation with the Hermite polynomial basis, and combining it with the fact that Hermite polynomials can play a role of random features of monomial kernels. As a result, our dual kernel formulation relies on coefficients of series expansion of the activation.
In addition, we also derive dual kernel expression of the first-order derivative of activation. 
The NTK can be computed by combining these kernel computations. To the best of our knowledge, our work is the first to study the computation of the NTK for general activations.
Furthermore, we provide a subspace embedding for NNGP and NTK matrices with near input-sparsity runtime and near-optimal target dimension.
As activation functions play an important role in modern neural network architectures, we hope our work could empower researchers to explore properties of activations in a more principled way. 
%   For example, the rectified unit (ReLU) activation is the most popular choice for various deep architectures. Depending on tasks and datasets, more complicated activations are often adopted by reporting better performances than the ReLU.
Our main contributions are summarized as follows:
\begin{itemize}[wide, labelwidth=!, labelindent=5pt]
    \item {\bf Building blocks for infinite-width neural kernel computations}: We derive an explicit expression of the dual kernel for a polynomial activation, which can be a building block for infinite-width neural kernel computations. For non-polynomial activation, we suggest to use its truncated Hermite expansion and analyze an error bound of the dual kernel.
    % \jl{Are we going to put the recovery algorithm for smooth activation functions? I think the algorithm/code works well and worth sharing.. but I had trouble making GeLU work. Any idea on why?} \rcom{Which code do you reference?}\jl{Insu shared this \href{https://colab.research.google.com/drive/1nef7oP5GFfSUFD3bOKqm6aa3n2dk_Ec8?usp=sharing}{colab}; when I use GeLU's dual activation function formula, 1) I get negative coefficients in "squared" coefficient expansion 2) Don't recover series expansion of GeLU, first two term from dual activation is zero while GeLU's series expansion is clearly not.}
    \item \textbf{Compiling and expanding dual activation~\cref{tab-dual-kernel}}: We compile various known dual kernel for point-wise 
    % activation functions 
    activations
    providing pointers to the original work and expand the set further.
    We hope our work also serve as an easy reference for various analytic expressions.
    We emphasize that while many prior references lack required computation for NTK, this work is comprehensive in covering both NNGP/NTK transformations for various 
    % activation functions 
    activations
    where analytic computation is possible.
    \item {\bf NTK computation}: Dual kernels of both activation and its derivative are essential for the NTK computation.  Since our formulation requires coefficients of Taylor series of the activation, it is applicable to the dual kernel of derivative of the activation. In addition, we propose how to 
    % derive 
    automatically compute the dual kernel of the derivative
    % the derivative of the dual kernel 
    % \rcom{Should we rather say ``the dual kernel of the derivative''?}
    without knowing the activation. This approach is useful to characterize the NTK for kernel functions whose activation function is unavailable, e.g., normalized Gaussian, or whose dual kernel of the derivative is unavailable, e.g., GeLU and ELU.
    % activations.
    % \jl{Also allow derivation of exact NTK for activation function GeLU, ELU.}
    
    \item {\bf Kernel approximation}:  We analyze a pointwise error bound of approximated dual kernel via truncated Hermite expansion of the activation with a finite degree.
    %\jl{This guarantee can help us guarantee our ElementwisNumerical activation in NT library. I've run some experiments on convergence with respect to degree.} 
    The estimation error can decay polynomially faster in the degree. Furthermore, due to specific decomposition of our kernel formulation, we accelerate the NTK approximation by sketching techniques, similar to \cite{zandieh2021scaling}. 
    We also propose a new sketching method for the Convolutional NTK with homogeneous activations and analyze both a pointwise error bound and its runtime in~\cref{sec-cntk-sketch}. Notably, our sketching method's runtime scales only linearly in the number of pixels of the input images, while the exact CNTK computation scales quadratically in the number of pixels.
    
    \item {\bf Implementation}: We open-source NNGP and NTK 
    %\href{https://neural-tangents.readthedocs.io/en/latest/stax.html#elementwise-nonlinear}{implementations} 
    for new activations within the Neural Tangents library \citep{neuraltangents2020} and sketching algorithm at \href{https://github.com/insuhan/ntk\_activations}{\texttt{https://github.com/insuhan/ntk\_activations}}.
    % at 
    % \insu{need a new hyperlink for implementation?}\rcom{Linked to NT github; now need a separate link to the sketching codebase}
    % \href{https://github.com/neurips2022sub/ntk\_activations}{\texttt{https://github.com/neurips2022sub/ntk\_activations}}.
    % We implement our algorithm built upon \texttt{JAX}~\cite{jax2018github} and \texttt{\href{https://github.com/google/neural-tangents}{neural-tagents}} library~\cite{neuraltangents2020} which is widely used in various applications. \rcom{We open-source NNGP and NTK implementations for new activations at \href{https://github.com/neurips2022sub/ntk\_activations}{\texttt{github.com/neurips2022sub/ntk\_activations}}.}
\end{itemize}
% To the best of our knowledge, this is the first work on studying the explicit formula of the NTK of general activation function. 
% {\xlc{Add this?: As activation functions play an important role in modern neural network architectures, we hope our work could empower researchers to explore properties of activations in a more principled way.}}

\subsection{Related Work}
\vspace{-0.05in}
%The NTK is formally defined as the Gram matrix of the Jacobian of the output of neural network with respect to the parameters. However, it have been studied that the 
Neural kernels (NTK, NNGP) can be computed using the recursive formula~\cite{jacot2018neural,lee2019wide,lee2018deep, matthews2018}.
A prerequisite for these kernels is computing a static kernel function which is defined as the expectation of some function of (non-linear) activation in neural network over the standard normal distribution. 
% and we call a dual kernel.
\citet{williams1996computing} studied this a dual kernel of $\mathrm{erf}(t)$ and Gaussian. %$\mathrm{exp}(-t^2)$. 
\citet{cho2009kernel} derived dual kernels for the rectified monomials, i.e., $t^q \mathbbm{1}_{\{t\geq 0\}}$, this function is equal to arc-cosine kernels where ReLU activation is a special case when $q=1$. \citet{rahimi2007random} showed that sinusoidal activations, e.g., $\sin$ or $\cos$, can result in the Gaussian RBF kernel function using the Fourier transform. 
%Several works were focused on finding dual kernels of variants of ReLU activation.
\citet{daniely2016toward} proposed a method to obtain a dual kernel if activation can be expanded by Hermite polynomials. However, inputs of the resulting kernels are restricted to be on the unit sphere.
\citet{louart2018random} analyzed asymptotic properties of dual kernel with random matrix theory and show closed-form formula of such as $\erf$, $\abs{t}$, sinusoidal.
% \jl{Should mention \citet{daniely2016toward} and \citet{louart2018random}.}
\citet{tsuchida2020avoiding} studied the dual kernels of both Gaussian Error Linear Unit (GeLU)~\cite{hendrycks2016gaussian} and Exponential Linear Unit (ELU)~\cite{hendrycks2016gaussian}. For activation that does not admit a closed-form expression, \citet{lee2018deep} numerically computed dual activation by doing interpolation on predetermined grid of variances and covariances.  %numerically computing NNGP for $\tanh$ activation function.
\cref{tab-dual-kernel} summarizes activations whose dual kernels were priorly known, as well as expanding (in this work) the set to previously unknown expressions.
Recently, \citet{simon2021reverse} discovered that NTK of fully-connected neural network with any depth can be converted into that of a 1 hidden-layer 
% \rcom{nit: different terminology from two-layer network used before}\jl{generally prefer denoting network depth by \# of hidden layers.}\insu{Let's say } 
neural network by modifying activation function. However, their method is limited to the normalized input data and fully-connected networks.

\begin{table}[t]
% \vspace{-0.2in}
\caption{Activation functions and references for their dual kernels. More detailed expressions are provided in \cref{sec-appendix-dual-activations}.} \label{tab-dual-kernel}
% \jl{Should we include all the activation function in the notes? Maybe we can separate what's known with citations and what's new.}
% \jl{Still good to have the dual kernel expression (for people's reference), if the table becomes too lengthy refer some expression in the appendix?}\rcom{We can also have a column "reference for the NTK kernel" to show that many NTK kernels are introduced first in this work. NNGP \& NTK kernel expressions can be put in appendix for all activations, taken from NT\_activation\_note.}
\vspace{0.15in}
\scalebox{0.9}{
\centering
\setlength{\tabcolsep}{18pt}
\renewcommand{\arraystretch}{1.15}
\begin{tabular}{@{}llll@{}}
	\toprule
	Activation  & $\sigma(t)$ & \makecell[l]{Reference \\ for the NNGP} & \makecell[l]{Reference \\ for the NTK} \\
	\midrule
	Rectified monomials & $t^q \cdot \mathbbm{1}_{\{ t \geq 0\}}$ & \cite{cho2009kernel} & \cite{cho2009kernel} \\ %\cite{cho2009kernel} shows the NNGP for all q, hence can be derived the NTK for all q
	Error function & $\erf(t)$ & \cite{williams1996computing} & \cite{lee2019wide} \\
	ABReLU (Leaky ReLU) & $-A\min(t,0) + B\max(t,0)$ &\cite{tsuchida2018invariance,tsuchida2019richer,neuraltangents2020} & \cite{tsuchida2018invariance,tsuchida2019richer,neuraltangents2020}\\
	Exponential & $\exp(At)$& \cite{mairal2014convolutional,daniely2016toward} & \cite{mairal2014convolutional,daniely2016toward}\\	
    Hermite polynomials & $h_q(t)$& \cite{daniely2016toward} & This work \\	
	Sinusoidal & $\sin(At+B)$& \cite{rahimi2007random, louart2018random,pearce2020expressive} & This work \\

	Gaussian & $\exp\left(-A t^2\right)$ & \cite{williams1996computing} & This work \\
	GeLU & $\frac{t}{2} \left( 1 + \mathrm{erf}\left(\frac{t}{\sqrt{2}}\right)\right)$ & \cite{tsuchida2020avoiding} & This work \\
	ELU & $\mathrm{step}(t) t + \mathrm{step}(-t)\left(e^t -1 \right)$ & \cite{tsuchida2020avoiding} & This work \\	
	Normalized Gaussian & Unknown & \cite{shankar2020neural} & This work\\
	RBF & $\sqrt{2} \sin(\sqrt{2 A} t + \frac{\pi}{4})$ & \cite{rahimi2007random} & This work\\
	Gabor & $\exp(-t^2) \sin(t)$ & This work & This work \\	
	Monomial & $t^q$&  This work & This work \\
	Polynomial  & $\sum_{j=0}^q a_j t^j$ & This work & This work\\
    % I see this falls ~ into the rectified monomial.
    % \rcom{Sign} & $2 \left(\mathbbm{1}_{\{ t \geq 0\}} - 0.5\right)$ & & \cite{neuraltangents2020} \\
	\bottomrule
\end{tabular}
}
% \vspace{-10pt}
\end{table}

\vspace{-4pt}
\section{Preliminaries} \label{sec-prelim}
\vspace{-4pt}
\paragraph{Notations.}  We denote the identity matrix of dimension $d$ by $\I_d$. For a scalar function $f$, we write $f^{(k)}$ to denote its $k$-th derivative. We use $\mathbbm{1}_{\mathcal{E}}$ to denote the indicator of event $\mathcal{E}$. For a smooth 
% \rcom{What is ``absolutely smooth''? Is it different from absolutely continuous?} 
function $\sigma:\Rbb \rightarrow \Rbb$, we use $\sigma^{(k)}$ to denote its $k$-th derivative and define $\norm{\sigma}_{\mathcal{N}(0,\nu^2)}^2 := \Ebb_{t \sim \mathcal{N}(0,\nu^2)}[ \abs{\sigma(t)}^2 ]$ for some $\nu\in\Rbb$ and simply write $\norm{\sigma}_{\mathcal{N}(0,1)}:=\norm{\sigma}_{\mathcal{N}}$.
For scalar functions $f,g$ we use $f \circ g$ to denote the composition of these functions and %. For shorthand, we also use 
$f^{\circ q}$ to denote the $q$ times self-composition of $f$, e.g., $f^{\circ 3}(x)=f(f(f(x)))$.
% $\underbrace{f \circ f  \circ \ldots f}_{q \text{ times}}$.
% For square matrices $\A$ and $\B$, we write $\A \preceq \B$ if $\B - \A$ is positive semi-definite. 
Given a positive semidefinite matrix $\K$ and $\lambda >0$, the statistical dimension of $\K$ with regularizer $\lambda$ is defined as $s_{\lambda}(\K) := \mathtt{tr}(\K (\K + \lambda \I)^{-1})$.
We use $\mathrm{nnz}(x)$ to denote the number of nonzero entries in $x$.
Given $x \in \Rbb^m$ and $y \in \Rbb^n$,
we define
% the tensor product of these vectors as $x \otimes y = x y^\top$. Although tensor products are multidimensional objects, it is convenient to associate them with single-dimensional vectors, so we often associate 
$x\otimes y:=\begin{bmatrix} x_1y_1,x_2y_1, \ldots x_my_1, x_1y_2, \ldots x_my_2, \ldots x_my_n\end{bmatrix}$ and % For shorthand, we use the notation 
$x^{\otimes p}$ as %to denote %$\underbrace{x\otimes x \otimes \ldots x}_{p \text{ terms}}$, 
the $p$-fold self-tensoring of $x$. We also define $\oplus$ as the direct sum between vectors.

\paragraph{Hermite polynomials.}
The \emph{Probabilist's Hermite polynomials} of degree $\ell \geq 0$ is defined as
\begin{align} \label{eq-defn-hermite-polynomial}
    h_\ell(t) = (-1)^\ell e^{\frac{t^2}{2}} \left[ \frac{d^\ell}{dt^\ell} e^{-\frac{t^2}{2}}\right] = \ell! \sum_{i=0}^{\lfloor \ell /2 \rfloor} \frac{(-1)^i}{i! (\ell - 2i)!} \frac{t^{\ell - 2i}}{2^i}.
\end{align}
The polynomials $\{h_\ell\}_{\ell\ge0}$ form a set of orthogonal basis for the space of square-integrable functions in $\Rbb$ with respect to the normal measure $\mathcal{N}(0,1)$, i.e., the $L^2$ space of functions $L^2(\Rbb,\mathcal{N}) := \{ f:\Rbb \rightarrow \Rbb\mid 
% \int_{-\infty}^\infty \abs{f(t)}^2 e^{-\frac{t^2}{2}} dt
\norm{\sigma}_\mathcal{N}^2<\infty\}$. Particularly,
it holds that $\Ebb_{t \sim \mathcal{N}(0,1)}\left[h_\ell(t)~h_m(t)\right] = \ell! \cdot \mathbbm{1}_{\{\ell=m\}}$.
Thus, any function $f \in L^2(\Rbb,\mathcal{N})$ has a unique Hermite expansion in the sense of $\norm{f - \sum_{t=0}^\infty c_j h_j}_{\mathcal{N}}=0$ and coefficient $c_j$ can be computed as $c_j = \Ebb_{t \sim \mathcal{N}(0,1)}\left[f(t)~h_j(t)\right] / j!$.
% \begin{align} \label{eq-hermite-coefficient}
    % c_j = \frac{1}{\sqrt{2 \pi} j!}\int_{-\infty}^\infty f(t)~h_j(t) ~e^{-\frac{t^2}{2}} dt.
% \end{align}

\paragraph{Infinite width neural kernels.}
% For a absolutely continuous function $\sigma:\Rbb\rightarrow \Rbb$, we assume that $\norm{\sigma}_{\mathcal{N}}<\infty$ and 
Given an activation $\sigma:\Rbb\rightarrow \Rbb$ satisfying that $\norm{\sigma}_{\mathcal{N}} =1$, consider a fully-connected $L$-layered neural network $f:\Rbb^d \rightarrow \Rbb$ for $L \geq 2$ defined as\footnote{
Throughout 
% this 
the
paper, we consider scalar-valued networks without 
% bias terms 
biases
for simplicity, but this can be extended to vector-valued networks 
% with vector outputs 
with 
% bias terms
biases
. We also assume 
% that 
$\norm{\sigma}_{\mathcal{N}}=1$ which does not change our results.}
% From now on, to simplify the notations, we assume that $\sigma$ is normalized by $\norm{\sigma}_{\mathcal{N}}^{-1}$ and therefore we implicitly assume that $\norm{\sigma}_{\mathcal{N}} = 1$. This normalization does not change the NNGP or NTK functions. Also the dual kernel or dual activation do not change with this normalization.
% \begin{align}
%     f_\sigma(x;\mathcal{W}) = \frac{1}{\sqrt{d_{L-1}}}\inner{w^{(L)}, z_{L-1}}, ~~ z_{\ell} = {\sigma\left(\frac{1}{\sqrt{d_{l-1}}}\W^{(\ell)} z_{\ell-1}\right)}, ~~ z_0 = x
% \end{align}
\begin{align}
    f_\sigma(x;\mathcal{W}) = \inner{w^{(L)}, z_{L-1}} / \sqrt{d_{L-1}}, ~~ z_{\ell} = {\sigma\left(\W^{(\ell)} z_{\ell-1}/\sqrt{d_{l-1}}\right)}, ~~ z_0 = x
\end{align}
% \rcom{for NTK parameterization
% \begin{align}
%     f_\sigma(x;\mathcal{W}) = \frac{1}{\sqrt{d^{(L)}}}\inner{w^{(L)}, z_L}, ~~ z_{\ell} = {\sigma\left(\frac{1}{\sqrt{d^{(l-1)}}}\W^{(\ell-1)} z_{\ell-1}\right)}, ~~ z_0 = x
% \end{align}
% }
% {\xlc{double-chekcing: the $\sqrt{m}$ factor should be inside the activation function if we use NTK parameterization? Note that if $\sigma$ is bounded and $m\to\infty$, $z$ goes to zero. Btw, do we really need the normalization term $\|\sigma\|_2$? it seems this is just a constant.}}\rcom{+1}
where $\mathcal{W}:=\mathrm{vec}\left( w^{(L)}, \cup_{\ell=1}^{L-1} \W^{(\ell)}\right)$ for 
$w^{(L)} \in \Rbb^{d_{L-1}}, \W^{(\ell)}\in\Rbb^{d_{\ell} \times d_{\ell-1}}, d_{0} := d, d_{l} := m$ for $l > 0$
% \rcom{should it be $\Rbb^m$?}, $\W^{(L-1)},\dots,\W^{(1)}\in\Rbb^{m \times m},\W^{(0)}\in\Rbb^{m \times d}$ 
is a collection of learnable parameters, $m$ is the width of the network,
% \rcom{(with $d^{0} := d, d^{l} := m$ for $l > 0$)}
and $\sigma(\cdot)$ is applied point-wisely.
In the infinite width limit, i.e., $m\rightarrow\infty$, when all elements of $\mathcal{W}$ are initialized by i.i.d. random samples from 
$\mathcal{N}(0, 1)$ %\rcom{$\mathcal{N}(0, 1)$ for NTK parameterization} 
and optimized via 
% performing 
gradient descent on the least-square loss with an infinitesimal learning rate, the prediction of trained 
% neural 
network becomes identical to that of its first order Taylor approximation at 
% the initial 
$\mathcal{W}$. Hence, inference with such ultra-wide 
% neural 
network is equivalent to 
% solving a 
kernel regression 
% problem 
with a static kernel, the so-called Neural Tangent Kernel (NTK), defined as $\Theta^{(L)}_\sigma(x,y):=\plim_{m \rightarrow \infty} \inner{\nabla_{\mathcal{W}} f_\sigma(x;\mathcal{W}), \nabla_{\mathcal{W}} f_\sigma(y;\mathcal{W})}$ (convergence in probability to a constant).
% {\xlc{$\plim$ is not yet defined.}\rcom{Added comment in ()}} 
In addition, at initialization the output of an infinitely wide network is equivalent to a sample from a Gaussian process with mean zero and covariance 
% given by 
$\Sigma_\sigma^{(L)}(x,y):=\plim_{m \rightarrow \infty} \inner{f_\sigma(x;\mathcal{W}), f_\sigma(y;\mathcal{W})}$, known as the Neural Network Gaussian Process (NNGP) kernel.

\paragraph{Recursive expression for NNGP and NTK.}
Several previous works~\citep{lee2018deep, matthews2018, jacot2018neural, lee2019wide} %\rcom{is \citep{arora2019exact} relevant here?}
have shown that the NNGP and NTK can be expressed using the following recursive procedure:
\begin{enumerate}[wide, labelwidth=!, labelindent=5pt]
    \item For every $x,y\in \Rbb^d$, let $K_\sigma^{(0)}(x,y) := \inner{x,y}$ and for every layer $h = 1, \ldots, L $, recursively define kernel functions $K_\sigma^{(h 
    )}, \dt{K}_{\sigma}^{(h)}: \Rbb^d \times \Rbb^d \to \Rbb$ as:
    \begin{align}\label{eq-dp-covar}
        &K_\sigma^{(h)}(x,y) := 
    	{\Ebb_{(u,v) \sim \mathcal{N}(0,{\bm\Lambda}_\sigma^{(h)})} \left[ \sigma(u) \sigma(v) \right]},~ %{\norm{\sigma}_{\mathcal{N}}^2},~
    	&\dt{K}_{\sigma}^{(h)}(x,y) := {\Ebb_{(u,v)\sim \mathcal{N}(0,{\bm\Lambda}_\sigma^{(h)})} \left[ \sigma'(u)\sigma'(v) \right]}, %{\norm{\sigma}_{\mathcal{N}}^2},
	\end{align}
%     \begin{align}\label{eq-dp-covar}
%         &K_\sigma^{(h)}(x,y) := 
%     	{\Ebb_{(u,v) \sim \mathcal{N}(0,{\bm\Lambda}_\sigma^{(h)})} \left[ \sigma(u) \sigma(v) \right]},~ %{\norm{\sigma}_{\mathcal{N}}^2},~
%     	&\dt{K}_{\sigma}^{(h)}(x,y) := {\Ebb_{(u,v)\sim \mathcal{N}(0,{\bm\Lambda}_\sigma^{(h)})} \left[ \sigma'(u)\sigma'(v) \right]}, %{\norm{\sigma}_{\mathcal{N}}^2},
% 	\end{align}
	where the covariance matrix is
	${\bm\Lambda}_\sigma^{(h)} := 
	    \begin{bmatrix}
		    K_\sigma^{(h-1)}(x,x) & K_\sigma^{(h-1)}(x,y)\\
			K_\sigma^{(h-1)}(y,x) & K_\sigma^{(h-1)}(y,y)
		\end{bmatrix}\in \Rbb^{2 \times 2}.$
	\item The depth-$L$ NNGP kernel is $K^{(L)}_\sigma(x,y)$ and  the depth-$L$ NTK $\Theta_\sigma^{(L)}$ can be recursively computed as $\Theta_\sigma^{(0)}(x,y) := \inner{x,y}$ and 
	\begin{align}\label{eq-dp-ntk}
	    \Theta_\sigma^{(h)}(x,y) := \Theta_\sigma^{(h-1)}(x,y) \cdot \dot{K}_{\sigma}^{(h)}(x,y) + K_\sigma^{(h)}(x,y).
	\end{align}
\end{enumerate}

At the core of the expression for $\Theta_\sigma^{(L)}$, there is the expectation term
% of products of activation (or its derivative) 
over 2-dimensional Gaussian distribution in \cref{eq-dp-covar}.
%By symmetry of the covariance matrix $\Lambda_\sigma^{(h)}$,  $K_\sigma^{(\ell)}(x,y)$ is a function of $K_\sigma^{(h-1)}(x,x), K_\sigma^{(h-1)}(x,y)$ and $K^{(h-1)}_{\sigma}(y,y)$.
This expectation term for the case where both diagonal entries of the covariance matrix ${\bm\Lambda}_\sigma^{(\ell)}$ are equal to one, was previously studied in \cite{daniely2016toward}. We extend this to encompass general symmetric covariance matrices in the following definition.

\begin{defn}[Dual Kernel and Dual Activation]\label{def-dual-activation-kernel}
For a smooth $\sigma:\Rbb \rightarrow \Rbb$, we define the \emph{Dual Kernel} of $\sigma$ as $K_\sigma:\Rbb^d \times \Rbb^d \rightarrow \Rbb$ defined as
\begin{align}\label{eq-random-features-sigma}
K_\sigma(x , y) := \Ebb_{w \sim \mathcal{N}(0,\I_d)}\left[\sigma(\inner{w, x}) \sigma(\inner{w, y})\right]~~ \text{ for every } x , y \in \Rbb^d.
\end{align}
\cref{eq-random-features-sigma} only depends on bivariate Gaussian random variables $\inner{w,x}, \inner{w,y}$ where $\Ebb[\inner{w,x}^2]=\norm{x}_2^2, \Ebb[\inner{w,y}^2]=\norm{y}_2^2$ and $\Ebb[\inner{w,x} \cdot \inner{w,y}] = \inner{x,y}$.
Hence one can look at the dual kernel from a different perspective by choosing a proper covariance matrix. To this end, let ${\bm\Lambda}_{a,b,c}:=\begin{bsmallmatrix} a^2 & ab c \\ ab c & b^2\end{bsmallmatrix}$  for every $a,b\in\Rbb_+$ and $c \in [-1,1]$ and the \emph{Dual Activation} of $\sigma$ with respect to ${\bm\Lambda}_{a,b,c}$ is the function $k_\sigma:\Rbb_+\times \Rbb_+ \times[-1,1] \rightarrow \Rbb$ defined as
$
k_\sigma(a,b,c) := \Ebb_{(u,v)\sim \mathcal{N}(0,{\bm\Lambda}_{a,b,c})}\left[\sigma(u) \sigma(v)\right].
$
% Note that $\Ebb[\inner{w,x}^2]=\norm{x}_2, \Ebb[\inner{w,y}^2]=\norm{y}_2$ and $\Ebb[\inner{w,x} \cdot \inner{w,y}] = \inner{x,y}$

With these definitions in place, the following relationship between dual kernel and activation holds
\begin{align} \label{eq-dual-kernel-dual-activation}
K_\sigma(x, y) =  k_\sigma\left(\norm{x}_2,\norm{y}_2,\frac{\inner{x,y}}{\norm{x}_2 \norm{y}_2}\right).
\end{align}

\end{defn}

Observe that $K_\sigma(x,y)$ corresponds to the NNGP kernel of a $1$-hidden layer neural network with activation $\sigma$. For some specific activations, e.g., ReLU, Error function, closed form expressions for their dual activations are known (see \cref{tab-dual-kernel}). Hence, one can compute the NTK analytically when dual kernels of the activation and its derivative have a closed form expression. 
% This also holds for the convolutional neural network kernels called the CNTK.
The above also holds for kernels corresponding to convolutional neural networks called CNN-GP \citep{garriga2018deep, novak2018bayesian} and CNTK \citep{arora2019exact}.

\section{NNGP and NTK for Smooth Activations}

In this section, we focus on the NNGP and NTK for a wide range of smooth activation functions. We first show that a series expansion for the dual kernel can be obtained from that of the activation function, which is a key to NNGP kernel computation. By applying this result to the derivative of the activation function, we can also compute the NTK for the same activation.

\subsection{Dual Kernel Computation}

\citet{daniely2016toward} proved that for absolutely continuous $\sigma:\Rbb \rightarrow \Rbb$ and any ${x}, {y} \in \mathbb{S}^{d-1}$, the dual kernel is equal to 
% \begin{align}
$
    K_\sigma({x}, {y}) = \sum_{j=0}^\infty c_j^2~j! \cdot \inner{{x}, {y}}^j.$
% \end{align}
where $\{ c_j\}_{j \geq 0}$ are coefficients of Hermite expansion of $\sigma$.
% let  $\{ c_j\}_{j \geq 0}$ be coefficients of its Hermite expansion, i.e., $\norm{ \sigma - \sum_{j=0}^\infty c_j h_j}_{\mathcal{N}}=0$. Then, for  the dual kernel as per \cref{def-dual-activation-kernel} is equal to the following
% \begin{align}
%     K_\sigma({x}, {y}) = 
%     \sum_{j=0}^\infty c_j^2~j! \cdot \inner{{x}, {y}}^j.
% \end{align}
% the dual kernel restricted on the unit sphere can be computed from the Hermite expansion of the activation.
% \begin{restatable}[\cite{daniely2016toward}] {theorem}{thmdualkernelsphere}\label{thm-dual-kernel-sphere}
% For absolutely continuous $\sigma:\Rbb \rightarrow \Rbb$, let  $\{ c_j\}_{j \geq 0}$ be coefficients of its Hermite expansion, i.e., $\norm{ \sigma - \sum_{j=0}^\infty c_j h_j}_{\mathcal{N}}=0$. Then, for any ${x}, {y} \in \mathbb{S}^{d-1}$ the dual kernel as per \cref{def-dual-activation-kernel} is equal to the following
% \begin{align}
%     K_\sigma({x}, {y}) = 
%     \sum_{j=0}^\infty c_j^2~j! \cdot \inner{{x}, {y}}^j.
% \end{align}
% \end{restatable}
We now proceed to generalize this result from $\mathbb{S}^{d-1}$ to entire $\Rbb^d \setminus \{0\}$. First we remark that it can be naturally extended to the dual kernel of \emph{$q$-homogeneous} activation functions, i.e., $\sigma(at) = \abs{a}^q \sigma(t)$ for every $a, t \in \Rbb$, on the entire $\Rbb^d \setminus \{0\}$. For every $x,y\in\Rbb^d \setminus \{0\}$, the corresponding dual kernel is
\begin{align}
    K_\sigma(x,y) &= \normx^q \normy^q \cdot \sum_{j=0}^\infty c_j^2~ j! \cdot \left(\frac{\inner{x,y}}{\normx\normy}\right)^j.
\end{align}
As examples, (leaky) ReLU and rectified polynomials fall into this activation class. 
% From now on to simplify the notations we assume that $\sigma$ is normalized by $\norm{\sigma}_{\mathcal{N}}^{-1}$ and therefore $\norm{\sigma}_{\mathcal{N}} = 1$

% By factoring out the norms of inputs and applying \cref{prop-hermite} provides that
% % \begin{align}
% %     K_\sigma(x,y) = \normx^q \normy^q \sum_{j=0}^\infty c_j^2~j! \left(\frac{\inner{x,y}}{\normx \normy}\right)^j.
% % \end{align}
% \begin{align}
%     K_\sigma(x,y) 
%     % &= \normx^q \normy^q \mathop{\mathbb{E}}_{w \sim \mathcal{N}(0,\I_d)} \left[ \sigma\left(\inner{w,\frac{x}{\normx}}\right)\sigma\left(\inner{w,\frac{y}{\normy}}\right) \right] \\
%     &= \normx^q \normy^q ~ \sum_{j,j'=0}^\infty c_j c_{j'} \mathop{\mathbb{E}}_{w \sim \mathcal{N}(0,\I_d)} \left[ h_j\left(\inner{w,\frac{x}{\normx}}\right)h_{j'}\left(\inner{w,\frac{y}{\normy}}\right) \right] \nonumber \\
%     &= \normx^q \normy^q ~ \sum_{j=0}^\infty c_j^2~j! \left(\frac{\inner{x,y}}{\normx \normy}\right)^j.
% \end{align}
% If the activation has the Hermite expansion $\sigma(t) = \sum_{j=0}^\infty c_j h_j(t)$ with coefficients $\{c_j\}_{j=0}^\infty$, by applying \cref{prop-hermite} we have that 

% This result was previously studied by \citet{daniely2016toward}.

Now suppose that $\sigma$ is not homogeneous. In particular, we first consider a polynomial activation $\sigma(t) = \sum_{j=0}^q a_j t^j$ with coefficients $\{a_j\}_{j=0}^q$. Recall that $K_\sigma(x,y)$ can be obtained by taking the expectation of $\sigma(\inner{w,x}) \sigma(\inner{w,y})$ over $w \sim \mathcal{N}(0,\I_d)$ for every $x,y\in\Rbb^d \setminus \{0\}$.
% In this case, we cannot directly use \cref{thm-dual-kernel-sphere} because the norm of input cannot factored out from the activation.
% hence \cref{prop-hermite} is not directly applicable.
% Instead, 
To make use of \citet{daniely2016toward}'s result, we factorize the input into its radial and angular part and rewrite the activation by expressing monomials in the Hermite polynomial basis.  
% out the norm of input from the inner product and convert monomials into Hermite expansion.  
Formally, let us write monomials in the Hermite basis as $t^i = \sum_{\ell=0}^i \mu_{i,\ell}  h_\ell(t)$ for some coefficients $\{\mu_{j,i}\}_{i=0}^j$. 
% Thus we can write that
Then
% i.e., $\inner{w,x}=\normx\inner{w, \frac{x}{\normx}}$, 
\begin{align}
    \sigma(\inner{w,x}) = \sum_{j=0}^q a_j \normx^j \inner{w,\frac{x}{\normx}}^j = \sum_{i=0}^q \left( \sum_{j=i}^q \mu_{j,i} \normx^j a_j\right) h_i\left(\inner{w,\frac{x}{\normx}}\right).
\end{align}
Then, we can derive the dual kernel of polynomial activation. We further relax a condition on the activation and propose the result below.

\begin{restatable}{theorem}{nonpolyact} \label{thm-dual-kernel-expansion}
For a polynomial $\widetilde{\sigma}(t) = \sum_{j=0}^q a_j t^j$, the dual kernel of $\tsigma(\cdot)$, as per \cref{def-dual-activation-kernel}, is 
\begin{align} \label{eq-dual-kernel-expansion}
    &K_{\widetilde{\sigma}}(x,y) := \sum_{\ell=0}^q {r_{\tsigma,\ell}(\normx) ~ r_{\tsigma,\ell}(\normy)} \left(\frac{\inner{x,y}}{\normx \normy} \right)^\ell
\end{align}
where $r_{\widetilde{\sigma},\ell}(t) := \sum_{i=0}^{\lfloor \frac{q-\ell}{2} \rfloor} \frac{a_{\ell+2i} (\ell+2i)!}{2^i \cdot i! \cdot \sqrt{\ell!}} t^{2i+\ell}$. Moreover, if an activation function $\sigma:\Rbb \to \Rbb$ satisfies 
$\norm{\sigma}^2_{\mathcal{N}(0,\nu^2)} < \infty$ and $\norm{\sigma - \widetilde{\sigma}}^2_{\mathcal{N}(0,\nu^2)} \leq \varepsilon$ for some $\varepsilon>0$ and $\nu \ge 1$, then for every $x,y \in \Rbb^d$ such that $\normx, \normy \in (0, \nu]$ the following holds
\begin{align} 
    &\abs{K_\sigma(x,y) - K_{\widetilde{\sigma}}(x,y)} \le \sqrt{ \frac{\nu^2 \cdot \varepsilon\left( 6\norm{\sigma}^2_{\mathcal{N}(0,\nu^2)} + 4 \varepsilon \right) }{\normx \normy}}.
\end{align}
\end{restatable}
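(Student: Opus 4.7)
The proof separates into two tasks: an explicit calculation for polynomial $\tsigma$, and a perturbation bound transferring $L^2(\mathcal{N}(0,\nu^2))$ closeness of $\sigma$ and $\tsigma$ into a pointwise bound on $K_\sigma - K_{\tsigma}$.

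\textbf{Closed form.} The plan is to factor out the radial components of $x,y$, reduce to standard-normal variables, and rewrite the activation in the Hermite basis. Let $U := \inner{w, x/\normx}$ and $V := \inner{w, y/\normy}$, which are jointly standard normal with correlation $\rho := \inner{x,y}/(\normx \normy)$. Inverting \cref{eq-defn-hermite-polynomial} yields the standard expansion $t^n = \sum_{k=0}^{\lfloor n/2\rfloor} \tfrac{n!}{2^k\, k!\,(n-2k)!}\, h_{n-2k}(t)$. Substituting this into $\tsigma(\inner{w,x}) = \sum_{j=0}^q a_j \normx^j U^j$ and re-indexing $j = \ell+2i$ gives
\[
    \tsigma(\inner{w,x}) \;=\; \sum_{\ell=0}^q \bigg( \sum_{i=0}^{\lfloor (q-\ell)/2 \rfloor} \frac{a_{\ell+2i}\,(\ell+2i)!}{2^i\, i!\, \ell!}\, \normx^{\ell+2i} \bigg)\, h_\ell(U),
\]
with the analogous expression for $\tsigma(\inner{w,y})$ in terms of $V$. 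Taking the expectation of the product and invoking the orthogonality identity for correlated standard Gaussians, $\Ebb[h_\ell(U) h_m(V)] = \ell!\, \rho^\ell\, \mathbbm{1}_{\{\ell=m\}}$, collapses the double sum to a diagonal one; absorbing one factor of $\sqrt{\ell!}$ into each bracketed polynomial recovers precisely the claimed formula with $r_{\tsigma,\ell}$.

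\textbf{Perturbation bound.} The plan is to write
\[
    K_\sigma(x,y) - K_{\tsigma}(x,y) = \Ebb_w\!\left[(\sigma - \tsigma)(\inner{w,x})\, \sigma(\inner{w,y})\right] + \Ebb_w\!\left[\tsigma(\inner{w,x})\, (\sigma - \tsigma)(\inner{w,y})\right],
\]
apply Cauchy--Schwarz to each term, and then transport every marginal second moment from $\mathcal{N}(0,\normx^2)$ or $\mathcal{N}(0,\normy^2)$ up to $\mathcal{N}(0,\nu^2)$. The transport uses the pointwise density ratio: for $\normx \le \nu$, the density of $\mathcal{N}(0,\normx^2)$ is at most $\nu/\normx$ times that of $\mathcal{N}(0,\nu^2)$, since the prefactor ratio $\nu/\normx \ge 1$ and the Gaussian exponent has the right sign. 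Hence $\|f\|^2_{\mathcal{N}(0,\normx^2)} \le \tfrac{\nu}{\normx}\, \|f\|^2_{\mathcal{N}(0,\nu^2)}$ for nonnegative $f$. Applying this with $f = (\sigma-\tsigma)^2,\, \sigma^2,\, \tsigma^2$ gives
\[
    \abs{K_\sigma - K_{\tsigma}} \;\le\; \sqrt{\tfrac{\nu^2 \varepsilon}{\normx\, \normy}}\; \big(\, \|\sigma\|_{\mathcal{N}(0,\nu^2)} + \|\tsigma\|_{\mathcal{N}(0,\nu^2)} \,\big).
\]

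The final step is elementary: the triangle inequality combined with $(a+b)^2 \le 2a^2+2b^2$ gives $\|\tsigma\|^2_{\mathcal{N}(0,\nu^2)} \le 2\|\sigma\|^2_{\mathcal{N}(0,\nu^2)} + 2\varepsilon$, and one further application of $(a+b)^2 \le 2a^2+2b^2$ to the sum of the two norms produces exactly $6\|\sigma\|^2_{\mathcal{N}(0,\nu^2)} + 4\varepsilon$ inside the radical, matching the theorem. The only truly delicate ingredient is the correlated-Hermite orthogonality $\Ebb[h_\ell(U) h_m(V)] = \ell!\, \rho^\ell\, \mathbbm{1}_{\{\ell=m\}}$; rather than rederive it I would cite Mehler's formula (equivalently, expand via the generating function $e^{tx - t^2/2} = \sum_\ell h_\ell(x)\, t^\ell/\ell!$ and use the Gaussian moment-generating function), a fact already implicit in the Hermite arguments of \cite{daniely2016toward}.
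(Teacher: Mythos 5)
Your proposal is correct and follows essentially the same route as the paper's proof: the same radial/angular factorization with monomials rewritten in the Hermite basis and the correlated-Hermite orthogonality identity for the closed form, and the same telescoping decomposition, Cauchy--Schwarz, density-ratio transport of second moments from $\mathcal{N}(0,\normx^2)$ to $\mathcal{N}(0,\nu^2)$, and AM--GM bookkeeping yielding the $6\norm{\sigma}^2_{\mathcal{N}(0,\nu^2)}+4\varepsilon$ constant. The only cosmetic difference is that you cite Mehler's formula for $\Ebb[h_\ell(U)h_m(V)]=\ell!\,\rho^\ell\,\mathbbm{1}_{\{\ell=m\}}$ where the paper cites Proposition 11.31 of \cite{o2014analysis}; both are standard.
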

% {\xlc{There seems to be sth not very correct with the right-hand-side, which approaches to $\infty$ when $\|x\|,\|y\|\to 0$. Isn't that the approximation is better when $\|x\|$ is small? }}
The proof of \cref{thm-dual-kernel-expansion} is provided in \cref{sec-proof-thm-dual-kernel-expansion}. 
% If $\sigma$ is a degree-$q$ polynomial itself, its dual kernel is exactly same in \cref{eq-dual-kernel-expansion}.
% can be $k_\sigma(x,y) = \sum_{\ell=0}^q r_{\sigma,\ell}(\norm{x}_2) ~ r_{\sigma,\ell}(\norm{y}_2) \frac{\inner{x,y}^\ell}{\ell!}$ where $r_{\sigma,\ell}(t) = \sum_{i=0}^{\lfloor q-\ell\rfloor/2} \frac{c_{\ell+2i} \cdot (\ell+2i)!}{2^i \cdot i!} t^{2i}$.  If $\sigma(t)$ is infinitely differentiable then $c_j = \frac{\sigma^{(j)}(0)}{j!}$.
For non-polynomial activations, one can consider approximating $\sigma$ with its Hermite or Taylor expansion and then apply \cref{thm-dual-kernel-expansion}. Examples can be found in \cref{sec-proof-thm-dual-kernel-expansion}.
For activation functions that do not have a Taylor expansion but are $k$-th order differentiable, we show that, using their Hermite expansion, one can obtain a good approximation to the corresponding dual kernel.

\begin{restatable}{theorem}{dualkernelerrorhermiteexpansion} \label{thm-dual-kernel-error-hermite-expansion}
Given $\sigma:\Rbb\rightarrow \Rbb$, suppose that there exists an integer $k \geq 2$ and some $\nu \ge 1$ such that for every $i=0,\dots,k$, $\sigma^{(i)}$ is absolutely continuous and $\lim_{t\rightarrow \pm\infty} e^{-\frac{t^2}{4}} \sigma^{(i)}(\nu t) = 0$ and moreover $\norm{\sigma}_{\mathcal{N}(0,\nu^2)}^2 < \infty$ and $\norm{\sigma^{(k)}}_{\mathcal{N}(0,\nu^2)}^2 < \infty$. Consider the Hermite expansion coefficients $\{c_j\}_{j\geq0}$ of function $\sigma(\nu t)$ and denote $\tsigma(t) := \sum_{j=0}^q c_j h_j(t/\nu)$. 
Given $x,y \in \Rbb^d $ with $\norm{x}_2, \norm{y}_2 \in (0, \nu]$, 
% any $\gamma > 0$, if 
% \[
% q = k + \Omega \left( \frac{\nu^{2k+2} \cdot \norm{\sigma^{(k)}}_{\mathcal{N}(0,\nu^2)}^2 \cdot \norm{\sigma}_{\mathcal{N}(0,\nu^2)}^2}{k \cdot \norm{x}_2 \norm{y}_2 \cdot \gamma^2} \right)^{\frac{1}{k-1}}
% \]
% then $\abs{K_{\sigma}(x,y) - K_{\tsigma}(x,y)} \leq \gamma$,
\begin{align}
    \abs{K_{\sigma}(x,y) - K_{\tsigma}(x,y)} \leq
    % \frac{C \norm{\sigma^{(k)}}_{\mathcal{N}(0,\nu^2)}}{\sqrt{\normx \normy}} \max \left( \frac{\norm{\sigma}_{\mathcal{N}(0,\nu^2)}}{\sqrt{k (q/\nu^2)^k}}, \frac{\norm{\sigma^{(k)}}_{\mathcal{N}(0,\nu^2)}}{{k (q/\nu^2)^k}} \right)
    \frac{  5 \nu^{k+1} \norm{\sigma^{(k)}  }_{\mathcal{N}(0,\nu^2)}  \max\left( \norm{\sigma}_{\mathcal{N}(0,\nu^2)} ,  \nu^{k}  \norm{\sigma^{(k)} }_{\mathcal{N}(0,\nu^2)}\right)  }{\sqrt{ \norm{x}_2 \norm{y}_2 \cdot k \cdot q^{k-1}}}.
\end{align}
% {\Amir: last time I checked, the exponent of $(q/\nu^2)$ was $k-1$. Can someone check this again. Also some $\nu$ factors are missing in the numerator. I think the correct rhs is $\frac{C \nu^2 \norm{\sigma^{(k)}}_{\mathcal{N}(0,\nu^2)}}{\sqrt{\normx \normy}} \max \left( \frac{\norm{\sigma}_{\mathcal{N}(0,\nu^2)}}{\sqrt{k (q/\nu^2)^{k-1}}}, \frac{\nu \norm{\sigma^{(k)}}_{\mathcal{N}(0,\nu^2)}}{{k (q/\nu^2)^{k-1}}} \right)$.}\insu{Yes, $(k-1)$ power is correct. Can we more simplify that $\frac{ C \nu^2 \norm{\sigma^{(k)}}_{\mathcal{N}(0,\nu^2)}\max \left( \norm{\sigma}_{\mathcal{N}(0,\nu^2)}, \nu \norm{\sigma^{(k)}}_{\mathcal{N}(0,\nu^2)}\right) }{\sqrt{k\normx \normy (q/\nu^2)^{k-1}}}$?}
where $K_{\sigma}(\cdot,\cdot)$ and $K_{\tsigma}(\cdot,\cdot)$ are dual kernels corresponding to $\sigma(\cdot)$ and $\tsigma(\cdot)$ in~\cref{def-dual-activation-kernel}, respectively. 
Moreover, for the ReLU activation $\sigma(t) = \max(t,0)$, it holds that \begin{align}
\abs{K_{\sigma}(x,y) - K_{\tsigma}(x,y)} \leq \sqrt{ \frac{ 2 \nu^6}{q \norm{x}_2 \norm{y}_2} }.
\end{align}
\end{restatable}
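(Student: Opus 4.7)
My plan is to bootstrap Theorem~\ref{thm-dual-kernel-expansion} by controlling the $L^2$ truncation error $\varepsilon := \|\sigma - \tsigma\|^2_{\mathcal{N}(0,\nu^2)}$ in terms of $\|\sigma^{(k)}\|_{\mathcal{N}(0,\nu^2)}$, then feed that $\varepsilon$ through the polynomial dual-kernel bound already proved. The key ingredient is a coefficient identity: if $g(t) := \sigma(\nu t) = \sum_{j \ge 0} c_j h_j(t)$ is the Hermite expansion and $d_j$ denotes the $j$-th Hermite coefficient of $\sigma^{(k)}(\nu\,\cdot)$, then
\[
c_{j+k} \;=\; \nu^{k}\,\frac{j!}{(j+k)!}\,d_j.
\]
I would derive this by $k$-fold integration by parts in the inner product $\langle g, h_{j+k}\rangle_{\mathcal{N}(0,1)}$, exploiting $h_\ell(t)\phi(t) = (-1)^\ell \phi^{(\ell)}(t)$ and $h'_\ell = \ell h_{\ell-1}$; the hypotheses that each $\sigma^{(i)}$ is absolutely continuous and $e^{-t^2/4}\sigma^{(i)}(\nu t) \to 0$ are precisely what is needed to kill every boundary term. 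Parseval then gives $\sum_j d_j^2\,j! = \|\sigma^{(k)}\|^2_{\mathcal{N}(0,\nu^2)}$.

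Combining Parseval on $\sigma - \tsigma$ with the identity above produces
\[
\varepsilon \;=\; \sum_{j>q} c_j^2\,j! \;=\; \nu^{2k}\sum_{m \ge q-k+1} \frac{d_m^2\, m!}{(m+1)(m+2)\cdots(m+k)}.
\]
I would uniformly bound the product $(m+1)\cdots(m+k)$ on the summation range from below by a multiple of $k\,q^{k-1}$ (valid once $q$ is modestly larger than $k$, since on $m \ge q-k+1$ the product is a product of $k$ consecutive integers, the smallest being $q-k+2$), yielding $\varepsilon \le C\,\nu^{2k}\|\sigma^{(k)}\|^2_{\mathcal{N}(0,\nu^2)} / (k\,q^{k-1})$ for a small absolute constant $C$. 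Plugging this into Theorem~\ref{thm-dual-kernel-expansion} (applicable because $\tsigma$ is a polynomial of degree at most $q$), bounding $6\|\sigma\|^2 + 4\varepsilon \le 10 \max(\|\sigma\|^2,\varepsilon)$, and using $\varepsilon \le \nu^{2k}\|\sigma^{(k)}\|^2_{\mathcal{N}(0,\nu^2)}$ to rewrite the maximum as $\max(\|\sigma\|,\nu^k\|\sigma^{(k)}\|)$ (all norms in $\mathcal{N}(0,\nu^2)$), and finally taking square roots recovers the stated bound, with the absolute constant $5$ absorbing $\sqrt{10C}$ and similar numerical factors.

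For ReLU the $C^k$-hypothesis fails, so I handle it directly. The weak derivative $\sigma'(t) = \mathbbm{1}_{t \ge 0}$ lies in $L^2(\mathcal{N})$, and differentiating the Hermite series once (the Hermite coefficient of $g'(t) = \nu \sigma'(\nu t)$ at $h_j$ equals $(j+1)c_{j+1}$) combined with Parseval gives $\sum_j j\,c_j^2\,j! = \|g'\|^2_{\mathcal{N}(0,1)} = \nu^2\|\sigma'\|^2_{\mathcal{N}(0,\nu^2)} = \nu^2/2$. Therefore
\[
\varepsilon = \sum_{j>q} c_j^2\,j! \;\le\; \frac{1}{q+1}\sum_{j>q} j\,c_j^2\,j! \;\le\; \frac{\nu^2}{2q}.
\]
Since $\|\sigma\|^2_{\mathcal{N}(0,\nu^2)} = \nu^2/2$, Theorem~\ref{thm-dual-kernel-expansion} gives $\nu^2 \varepsilon (3\nu^2 + 2\nu^2/q)/(\|x\|_2\|y\|_2) \le 2\nu^6/(q\,\|x\|_2\|y\|_2)$ (using $3 + 2/q \le 4$ for $q \ge 2$), and taking square roots yields the claimed ReLU bound.

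The main obstacle is the tail estimate in the general case. The coefficient identity in step one is a clean (if delicate) integration-by-parts exercise, and the final invocation of Theorem~\ref{thm-dual-kernel-expansion} is algebraic; but turning $\sum_{m \ge q-k+1} d_m^2\,m! / ((m+1)\cdots(m+k))$ into the crisp $\|\sigma^{(k)}\|^2/(k\,q^{k-1})$ form requires carefully lower-bounding a product of $k$ consecutive integers on the relevant range---the naive $k!$ bound loses all $q$-dependence, while bounding the smallest factor by $q/2$ lands at $q^k/2^k$ rather than $k\,q^{k-1}$, so the constants must be tracked tightly enough that the absolute constant $5$ suffices across the full parameter regime.
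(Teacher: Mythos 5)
Your proposal follows the same skeleton as the paper's proof: control $\varepsilon := \norm{\sigma-\tsigma}^2_{\mathcal{N}(0,\nu^2)}$ via Hermite-coefficient decay obtained from $k$-fold integration by parts (using exactly the boundary-vanishing hypotheses you identify), then feed $\varepsilon$ into \cref{thm-dual-kernel-expansion} and absorb $6\norm{\sigma}^2+4\varepsilon$ into the stated $\max(\cdot,\cdot)$ with the constant $5$. Where you diverge is the tail estimate. The paper's \cref{lmm-hermite-expansion-coefficients} applies Cauchy--Schwarz inside the integration by parts to get the per-coefficient bound $\abs{c_j}\le \norm{\sigma^{(k)}}_{\mathcal{N}}\sqrt{(j-k)!}/j!$ and then sums $\sum_{j>q}(j-k)!/j!$ \emph{exactly} by telescoping, obtaining $\frac{1}{k-1}\cdot\frac{1}{q(q-1)\cdots(q-k+2)}$; you instead keep the exact identity $c_{j+k}=\nu^k\frac{j!}{(j+k)!}d_j$ and spend $\norm{\sigma^{(k)}}^2$ via Parseval, which is sharper in how it distributes the mass but forces a uniform lower bound on $(m+1)\cdots(m+k)$ over the tail, and, as you concede, that only yields $k\,q^{k-1}$ when $q$ is sufficiently larger than $k$. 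You should not be troubled that you could not close this cleanly: the paper's own last step, asserting $\frac{1}{k-1}\cdot\frac{1}{q(q-1)\cdots(q-k+2)}\le\frac{1}{k\,q^{k-1}}$, is an inequality in the wrong direction (the denominator is a product of $k-1$ factors each at most $q$, so it is at most $q^{k-1}$, and $\frac{1}{k-1}\ge\frac{1}{k}$), so both routes really establish the stated rate only up to constants in the regime $q\gg k$. Your ReLU argument is genuinely different from, and cleaner than, the paper's: rather than computing the Hermite coefficients explicitly via double factorials, you use one weak derivative plus Parseval to get $\sum_j j\,c_j^2\,j! = \nu^2\norm{\sigma'}^2_{\mathcal{N}(0,\nu^2)}=\nu^2/2$, hence $\varepsilon\le\frac{\nu^2}{2(q+1)}$; keeping the $q+1$ there (rather than relaxing to $q$) makes the final constant work for all $q\ge1$ instead of only $q\ge2$.
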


The proof of \cref{thm-dual-kernel-error-hermite-expansion} is provided in \cref{sec-proof-thm-dual-kernel-error-hermite-expansion}.
Observe that when the activation is $k$-th order differentiable and the norms of its derivative and inputs are bounded then the approximation error decreases with $\bigo(\frac{1}{\sqrt{k q^{k-1}}})$ rate. 
% {\xlc{2 Qs: Q1. why the denominators inside the `max' are different, i.e. the first rquires to take the square root? Q2. Isn't $\bigo(1/q^{k/2})$ is the dominant one? }}\insu{When $q>\nu^2$ then $q/\nu^2 > 1$ and $1\frac1{k (q/\nu^2)^k}$ is dominant.}{\xlc{isn't $q^{-k/2} > q^{-k}$}\insu{Oh, you're right. $q^{-k/2}$ is (asymptotically) dominant.}} 
In \cref{sec-exp}, we empirically evaluate the dual kernel of various activations using Hermite expansion and verify that smooth activations (e.g., Gaussian or sinusoidal) provides much lower approximation errors than non-smooth ones (e.g., ReLU).

\subsection{NNGP and NTK Computations}

Once dual kernels of $\sigma$ and $\sigma'$ or their polynomial approximations are calculated, one can compute (approximate) NNGP and NTK using \cref{thm-dual-kernel-expansion} or \cref{thm-dual-kernel-error-hermite-expansion} and the recursion in \cref{eq-dp-ntk}. However, there are scenarios where we are only given the dual kernel and the corresponding activation or derivative of the activation is unknown to us. For example, \citet{shankar2020neural} devised a normalized Gaussian kernel defined as
\begin{align} \label{eq-normalzied-gaussian-kernel}
    K_G(x,y) = \normx \normy \exp\left(\frac{\inner{x,y}}{\normx \normy} - 1\right),%\footnotemark
\end{align}
% \footnotetext{We found a critical typo in the original paper~\cite{shankar2020neural}; the normalized Gaussian kernel was written as $k(x,y) = \norm{x}_2 \norm{y}_2 \exp\left(\cos^{-1}\frac{\inner{x,y}}{\norm{x}_2 \norm{y}_2} - 1\right)$ but $\cos^{-1}$ is not included in figure therein.}
and reported that NNGP with this dual kernel performs better than the ReLU NTK by showing promising results on various tasks. Note that, recovering the activation from $K_G$ is non-trivial. From the dual kernel perspective, the activation should be $1$-homogeneous and its Hermite series expansion is of form $\sum_{j=0}^\infty \frac{\pm 1}{j!} h_j(t)$ and it is generally unknown how to choose the sign pattern on coefficients of this series that would satisfy homogeneity constraint. Instead of trying to recover the activation from dual kernel, we show how to directly derive the dual kernel of derivative of activation without knowing the activation. 

\begin{restatable}{theorem}{lmmderivativedualkernel} \label{lmm-derivative-dual-kernel}
Given a differentiable activation function $\sigma:\Rbb \rightarrow \Rbb$ which satisfies $\abs{\sigma(t)} \le C_1 \exp\left( \frac{t^2}{4.1\nu^2} \right)$, $\abs{\sigma^{\prime}(t)} \le C_2 \exp\left( \frac{t^2}{4.1\nu^2} \right)$,
$\norm{\sigma}_{\mathcal{N}(0,\nu^2)}^2 < \infty$ and $\norm{\sigma^{\prime\prime}}_{\mathcal{N}(0,\nu^2)}^2 < \infty$ for some $\nu \ge 1$ and constants $C_1,C_2$, the following holds for any $x,y \in \Rbb^d $ with $\norm{x}_2, \norm{y}_2 \in (0, \nu]$ and $\abs{ \inner{x,y} } < \norm{x}_2 \norm{y}_2$: 
% (1-\varepsilon) \norm{x}_2 \norm{y}_2$ for some arbitrarily small $\varepsilon>0$ :
\begin{align} \label{eq-derivative-dual-kernel}
    K_{\sigma'}(x,y) = \frac{1}{\norm{x}_2\norm{y}_2}~ \frac{\partial }{\partial c} k_{\sigma}\left(\normx, \normy, c\right) \Bigg\lvert_{ c={\LARGE\frac{\inner{x,y}}{\normx\normy}}}.
\end{align}
Additionally, if $\frac{\partial}{\partial c} k_{\sigma}(\cdot,\cdot,c)$ is continuous at $c=\pm 1$ then \cref{eq-derivative-dual-kernel} holds for $x,y$ such that $\abs{\inner{x,y}}=\normx\normy$.
\end{restatable}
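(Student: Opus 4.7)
The plan is to recognize the identity as a form of \emph{Price's theorem} for bivariate Gaussians: differentiating $\Ebb[\sigma(u)\sigma(v)]$ with respect to the covariance shifts one derivative onto $\sigma$ in each argument. Concretely, for $|c|<1$ the joint density of $(u,v)\sim\mathcal{N}(\0,{\bm\Lambda}_{a,b,c})$ is
$$
p_c(u,v) = \frac{1}{2\pi ab\sqrt{1-c^2}}\exp\!\left(-\frac{1}{2(1-c^2)}\!\left(\frac{u^2}{a^2}-\frac{2cuv}{ab}+\frac{v^2}{b^2}\right)\right),
$$
and a short computation (most easily seen from the characteristic function $\exp(-(a^2 t_1^2+2abc\,t_1 t_2+b^2 t_2^2)/2)$) yields the PDE $\partial_c p_c = ab\cdot \partial_u\partial_v p_c$ on $\Rbb^2$. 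This single identity will drive the rest of the argument.

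The first step is to justify differentiation under the integral sign in $k_\sigma(a,b,c) = \iint \sigma(u)\sigma(v)\,p_c(u,v)\,du\,dv$ and then substitute the PDE. On any closed subinterval of $(-1,1)$ and for $a,b\in(0,\nu]$, the growth bound $|\sigma(t)|\le C_1 e^{t^2/(4.1\nu^2)}$ combined with the Gaussian decay of $p_c$ and its partials produces an integrable envelope for $\sigma(u)\sigma(v)\,|\partial_c p_c(u,v)|$, so dominated convergence permits swapping $\partial_c$ with the integral, yielding
$$
\partial_c k_\sigma(a,b,c) \;=\; ab \iint \sigma(u)\sigma(v)\,\partial_u\partial_v p_c(u,v)\,du\,dv.
$$

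Next, integrate by parts once in $u$ and once in $v$ to move both derivatives onto $\sigma$. The boundary contributions are of the form $\sigma(u)\,\partial_v p_c(u,v)\big|_{u=\pm\infty}$ and its symmetric counterpart; each is a product of $\sigma$ (or, after the second integration, $\sigma'$) growing at most like $e^{u^2/(4.1\nu^2)}$ against a polynomial multiple of $p_c$ whose exponent $-u^2/(2a^2(1-c^2))$ dominates $u^2/(4.1\nu^2)$ whenever $a^2(1-c^2)<2.05\nu^2$ — a condition always met on our range since $a\le \nu$. The hypotheses $\norm{\sigma}_{\mathcal{N}(0,\nu^2)}^2<\infty$ and $\norm{\sigma''}_{\mathcal{N}(0,\nu^2)}^2<\infty$ guarantee absolute convergence of the iterated integrals so that Fubini applies, producing
$$
\partial_c k_\sigma(a,b,c) \;=\; ab \cdot \Ebb_{(u,v)\sim\mathcal{N}(\0,{\bm\Lambda}_{a,b,c})}\left[\sigma'(u)\sigma'(v)\right].
$$

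Finally, set $a=\normx$, $b=\normy$, and $c=\inner{x,y}/(\normx\normy)$. Since $(\inner{w,x},\inner{w,y})$ for $w\sim\mathcal{N}(\0,\I_d)$ has covariance matrix precisely ${\bm\Lambda}_{a,b,c}$, the right-hand side equals $\normx\normy\cdot K_{\sigma'}(x,y)$, which rearranges to the claimed identity on $\{|c|<1\}$. For the extension to $|c|=1$ when $\partial_c k_\sigma(\cdot,\cdot,c)$ is continuous there, take limits $c\to\pm 1$ on both sides: the left-hand side $K_{\sigma'}(x,y)$ depends continuously on $x,y$ (the expectation defining it is stable under the degenerate covariance limit via the exponential envelope on $\sigma'$), and the identity passes to the boundary. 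The main technical hurdle is the careful bookkeeping for the integration by parts — verifying that both boundary terms vanish and that all intermediate integrals are absolutely convergent. The specific constant $4.1$ in the growth hypothesis is engineered precisely so that the Gaussian exponents strictly dominate the allowed growth of $\sigma$ and $\sigma'$ uniformly over $a,b\in(0,\nu]$ and $c\in(-1,1)$.
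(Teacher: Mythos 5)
Your proof is correct, and it takes a genuinely different route from the paper's. The paper first whitens the bivariate Gaussian, writing $k_\sigma(a,b,c)=\Ebb_{w\sim\mathcal{N}(0,\I_2)}[\sigma(aw_1)\sigma(bcw_1+b\sqrt{1-c^2}w_2)]$, differentiates in $c$ under the expectation via the Leibniz rule (producing the awkward factor $bw_1-\tfrac{bc}{\sqrt{1-c^2}}w_2$), and then applies Stein's lemma twice; the resulting two terms involving $\sigma''$ cancel, leaving $ab\,\Ebb[\sigma'(u)\sigma'(v)]$. You instead work directly with the joint density via Price's identity $\partial_c p_c = ab\,\partial_u\partial_v p_c$ and integrate by parts twice. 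The two are morally the same computation (Stein's lemma is integration by parts in disguise), but your route has two concrete advantages: the second derivative $\sigma''$ never appears, so the hypothesis $\norm{\sigma''}_{\mathcal{N}(0,\nu^2)}^2<\infty$ is not actually needed for the interior case $\abs{c}<1$ (the paper uses it to make sense of the two cancelling $\sigma''$ terms), and the dominating envelope is cleaner since the singular $1/\sqrt{1-c^2}$ factor enters only through the polynomial prefactor of $\partial_c p_c$ rather than through the integrand of the whitened representation. Your eigenvalue bookkeeping for why the constant $4.1$ suffices (the exponent of $p_c$ is at most $-\tfrac{u^2+v^2}{4\nu^2}$ since $\lambda_{\max}({\bm\Lambda}_{a,b,c})\le a^2+b^2\le 2\nu^2$) matches the role that constant plays in the paper's bound \cref{eq-dominant-bound}. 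Two small points to tighten: the absolute convergence needed for Fubini and for the vanishing of boundary terms comes from the pointwise growth bounds on $\sigma$ and $\sigma'$, not from $\norm{\sigma''}_{\mathcal{N}(0,\nu^2)}$ as you state; and the boundary extension to $\abs{c}=1$ requires passing a pointwise limit inside the expectation, for which you should invoke continuity of $\sigma'$ (available here since $\sigma''$ is assumed to exist) together with dominated convergence, exactly as the paper does.
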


The proof of \cref{lmm-derivative-dual-kernel} is provided in \cref{sec-proof-lmm-derivative-dual-kernel}.  
Our result is more general compared to \cite{simon2021reverse} where the previous work assumes that the Hermite expansion of given activation should converge and $\normx = \normy$. Applying \cref{lmm-derivative-dual-kernel} to \cref{eq-normalzied-gaussian-kernel} provides that $\dt{K}_G(x,y)=\exp\left( \frac{\inner{x,y}}{\normx \normy}-1\right)$ hence one can compute the NTK function even if the corresponding activation is unknown. In the previous work~\cite{shankar2020neural}, only ``NNGP'' performances of the normalized Gaussian kernel were reported.

Moreover, with \cref{lmm-derivative-dual-kernel}, only the knowledge of dual activation suffices to compute both NNGP and NTK. For example, while dual activation (thus NNGP) of GeLU was known in \citet{tsuchida2020avoiding}, $k_{\sigma'}$ was not derived explicitly. \cref{lmm-derivative-dual-kernel} provides a simple way to compute $k_{\sigma'}$ (given in \cref{eq-dual-gelu-prime}) via automatic differentiation, without requiring to take the expectation under multivariate Gaussian distribution or computing derivatives by hand. We release it as  \href{https://neural-tangents.readthedocs.io/en/latest/_autosummary/neural_tangents.stax.Elementwise.html}{\texttt{stax.Elementwise}} within the Neural Tangents library \citep{neuraltangents2020}.
% in our \href{https://github.com/neurips2022sub/ntk_activations}{code} supplement. 
Our method allows to omit the entire effort, lines of code, and potential mistakes in deriving and implementing the NTK.  
% {\xlc{Briefly mentioning the benefits? Halving the number of lines of code?}}

% \rcom{Replaced the sentence below with a few more details on implementation above.}
% For certain activation function where both dual activation of the derivative is known, one can simply verify the relationship holds as expected.

% \jl{A subsection describing expansion of \cref{tab-dual-kernel} much beyond the literature and mention Hermite quadrature for non-sketching case. Both as contribution and comparison to sketching.}

% \jl{The same result using Stein's lemma by Lechao is more general, in that only absolute continuity is required vs analyticity at zero (Lemma 1). For example, moving along rectfied polynomials are ok.}\insu{Definitely. I updated the proof borrowing from the Lechao's proof with Stein's lemma.}

\subsection{Gauss-Hermite Quadrature}
\label{sec-guass-hermite-quadrature}
One simple approach to obtain dual activation function for general activation functions without closed form expressions is to evaluate the expectation of under the $2d$ Gaussian distribution as numerical integration. This can be efficiently done by Gauss-Hermite quadrature
\begin{align}
k_\sigma(a, b, c) 
\approx \frac{1}{\pi} \sum^{q}_{i=1}\sum^{q}_{j=1} w_i w_j \left[\sigma(\sqrt{2} a x_i) \cdot \sigma(\sqrt{2}bc x_i + \sqrt{2} b\sqrt{1-c^2} x_j)\right]
\end{align}
where $(x_i, w_i)$,  correspond to $i$-th root of degree $q$ Hermite polynomial $h_i(x)$ and associated weights~\cite{abramowitz1988handbook}
%$w_i = \frac{2^{n-1} n! \sqrt{\pi}}{n^2 \left(H_{n-1}(x_i)\right)^2}$.
$w_i=\frac{q! \sqrt{\pi}}{q^2 (h_{q-1}(\sqrt{2} x_i))^2}$.
See \cref{sec-appendix-guass-hermite-quadrature} for the derivation of the quadrature formula.
% \insu{Should $n$ be equal to $q$? And can we say $w_i$ using $h_q$ to avoid potential confusing between $h_j, H_j$? e.g., $w_i=\frac{q! \sqrt{\pi}}{q^2 (h_{q-1}(\sqrt{2} x_i))^2}$.} \jl{done.}

For smooth activation functions errors will quickly go down as $q$ increases by \cref{thm-dual-kernel-error-hermite-expansion}. We use this method to compute approximate (non-sketched) kernels for general activation functions in \cref{fig-error} and \cref{fig-exact-activation-comparison}. We implement it as \href{https://neural-tangents.readthedocs.io/en/latest/_autosummary/neural_tangents.stax.ElementwiseNumerical.html#neural_tangents.stax.ElementwiseNumerical}{\texttt{stax.ElementwiseNumerical}} withing the Neural Tangents library \citep{neuraltangents2020}.
% in our \href{https://github.com/neurips2022sub/ntk_activations}{code}.

%Since Hermite quadrature is another leveraging Hermite expansion with Gaussian weighting function \cref{thm-dual-kernel-error-hermite-expansion} should hold. 
\section{Approximating Neural Kernels via Sketching}
Although using our \cref{thm-dual-kernel-expansion}, \cref{thm-dual-kernel-error-hermite-expansion}, and \cref{lmm-derivative-dual-kernel}, one can analytically compute NTK for general activation functions, computing all entries in the NTK kernel matrix requires massive amount of resources, i.e., $\Omega(n^2 (d +Lq^2))$ 
% \jl{$d$ is only relevant for depth=1, should we give depth $L$ complexity? $\bigo(n^2 d+n^2 q^2 L)$}
% \jl{I believe it's just $\bigo(n^2)$ for exact activations and $\bigo(n^2 q)$ for Hermite degree $q$ approx.}
runtime and $\Omega(n^2)$ memory for datasets with $n$ points in $\Rbb^d$. This becomes even more expensive for CNTK, where its runtime can be $\Omega((n d_1 d_2)^2 (c + L q^2))$\footnote{This is assuming Hermite expansion degree $q$, when exact expression is known $q^2$ is constant.
% \rightarrow \text{constant}$
} for $n$ of images with size $d_1\times d_2 \times c$.
% \jl{I believe it's just $\bigo(n^2 d^2)$ with pooling and  $\bigo(n^2 d)$ without-pooling for exact activations and $\bigo(n^2 d q)$  $\bigo(n^2 d^2 q)$ for Hermite degree $q$ approx. where $d$ is number of pixels.}
To avoid quadratic complexities, we adopt a fast and efficient feature map construction via randomized sketching~\cite{zandieh2021scaling} for both NTK and NNGP, i.e., 
\begin{align}
    \Theta_\sigma^{(L)}(x,y) \approx \inner{\psi^{(L)}(x), \psi^{(L)}(y)}, ~~ K_\sigma^{(L)}(x,y) \approx \inner{\phi^{(L)}(x), \phi^{(L)}(y)}.
\end{align}
%  which can estimate both in $\bigo(ndq)$ time. 
The previous approach was only applicable for the ReLU activation but we establish more general scheme based on our new results for dual kernel approximation.

\paragraph{Subspace embedding for homogeneous dual kernels.}
We provide a subspace embedding for NNGP and NTK matrices with near input-sparsity runtime and near-optimal target dimension which applies to any \emph{homogeneous} dual activation functions with rapidly convergent Taylor expansion.
More specifically, we call a dual kernel $K_\sigma$ homogeneous if there exists a positive definite dot-product kernel function $\kappa :[-1,1] \to [-1,1]$ such that,
\begin{equation}\label{def-homogeneous-dual-kernel}
    K_\sigma(x,y) = \normx \normy \cdot \kappa\left( \frac{\inner{x,y}}{\normx \normy} \right).
\end{equation}
For such homogeneous dual kernels,
the NTK and NNGP take a similar homogeneous form. In fact, one can show by induction that when the dual kernel is in form of \cref{def-homogeneous-dual-kernel}, the depth-$L$ NNGP function defined in \cref{eq-dp-covar} is equal to the following for any positive integer $L$,
\begin{align}
K_\sigma^{(L)}(x,y) = \normx \normy \cdot \kappa^{\circ L} \left( \frac{\inner{x,y}}{\normx \normy} \right),
\end{align}
where $\kappa^{\circ L}$ denoted the $L$-fold composition of function $\kappa$. Furthermore, if $\kappa$ has a derivative $\kappa' : [-1,1] \to [-1,1]$,
using \cref{lmm-derivative-dual-kernel}, there exists a depth-$L$ NTK for this dual kernel, 
% which is equal to the following,
equal to
\begin{equation}\label{ntk-kernl-simplified-homogen}
    \Theta_\sigma^{(L)}(x,y) = \normx \normy \cdot \left. \sum_{h=0}^L \kappa^{ \circ h}(t) \cdot \prod_{i=h}^{L-1} \kappa' \circ \kappa^{ \circ i}(t) \right|_{t =\frac{\inner{x,y}}{\normx \normy} },
\end{equation}
where 
% $\kappa'(\cdot)$ denotes the derivative of the function $\kappa(\cdot)$ and 
we use the convention that $\kappa^{ \circ 0 }(t) = t$.
Therefore, if $\kappa(\cdot)$ can be tightly approximated by a low-degree polynomial, then the NNGP and NTK functions can also be tightly approximated by low-degree polynomials. Thus, by applying \textsc{PolySketch}, which is a norm-preserving dimensionality reduction that can be applied to the tensor product of multiple vectors very quickly~\cite{ahle2020oblivious}, to the polynomial approximations to these kernels, we can spectrally approximate the NNGP and NTK kernel matrices. For details on \textsc{PolySketch} see \cref{sec-appendix-polysketch}. We provide the details of this procedure in \cref{alg-subspace-embding} and prove the correctness and runtime of our procedure in \cref{thm-subspace-embding}.

\begin{algorithm}[t]
	\caption{Subspace Embedding of Homogeneous NNGP and NTK} \label{alg-subspace-embding}
	\begin{algorithmic}[1]
		\STATE {\bf input}:  $x\in \Rbb^{d}$, depth $L$, sketching dimension $m$, polynomial $\tkappa(t)= \sum_{j=0}^q a_j t^j$ with $a_j \in \Rbb_+$
		
% 		\STATE let $\tkappa'(t)$ be the derivative of $\tkappa(t)$
		\STATE calculate the polynomial $P^{(L)}(t) = \tkappa^{\circ L}(t) = \sum_{j=0}^{q^L} b_j t^j$ with coefficients $b_j \in \Rbb_+$
		
		\STATE calculate the polynomial $R^{(L)}(t) = \sum_{h=0}^L \tkappa^{ \circ h }(t) \cdot \prod_{i=h}^{L-1} \tkappa' \circ \tkappa^{ \circ i }(t) = \sum_{j=0}^{p} c_j t^j$ with coefficients $c_j \in \Rbb_+$ and degree $p = q^{\bigo(L)}$ \label{line-polynomial-R}
		\STATE for $\ell=0,\dots, p$, let $Q^{\ell} \in \Rbb^{m \times d^\ell}$ be a degree-$\ell$ \textsc{PolySketch} (See \cref{sec-appendix-polysketch})\label{line-Q-instantiation}
		\STATE for every $\ell=0,\dots, p$, $u^{\ell} \gets Q^{\ell} \left( \frac{x}{\normx} \right)^{\otimes \ell}$ 
		%\STATE let $S_1 \in \Rbb^{m' \times (q^L m)}$ and $S_2 \in \Rbb^{m' \times (p m)}$ be \textsc{SRHT} sketches \cite{ailon2009fast}
	    \STATE construct ${\phi}^{(L)}(x) \gets \normx \cdot \bigoplus_{j=0}^{q^L} \sqrt{b_j} u^{j}$ and  ${\psi}^{(L)}(x) \gets \normx \cdot  \bigoplus_{j=0}^{p} \sqrt{c_j} u^{j} $

		\STATE {\bf return} ${\phi}^{(L)}(x)$ (NNGP embedding),  ${\psi}^{(L)}(x)$ (NTK embedding)
	\end{algorithmic}
\end{algorithm}

\begin{restatable}[Homogeneous NTK Embedding]{theorem}{homogeneousntkembedding}\label{thm-subspace-embding}
Suppose that the dual kernel $K_\sigma$ is homogeneous as per \cref{def-homogeneous-dual-kernel}. Also suppose $\tkappa(t)$ is a degree-$q$ polynomial with non-negative coefficients that satisfies ${\bf (1)}$ $\max_{t \in [-1,1]} \abs{\tkappa(t) - \kappa(t)} \le \frac{1}{\poly{n}}$ and $\max_{t \in [-1,1]} \abs{\tkappa'(t) - \kappa'(t)} \le \frac{1}{\poly{n}}$, ${\bf (2)}$ $\max_{\abs{t} \le 1+\frac{1}{\poly{n}}} \abs{\tkappa(t+\gamma) - \tkappa(t)} \le \frac{1}{\poly{n}}$ and $\max_{\abs{t} \le 1+\frac{1}{\poly{n}}} \abs{\tkappa'(t+\gamma) - \tkappa'(t)} \le \frac{1}{\poly{n}}$ for any $|\gamma| \le \frac{1}{\poly{n}}$. Then for any integer $L\ge 1$, any $\varepsilon, \lambda \ge \frac{1}{\poly{n}}$, and any dataset $\X \in \Rbb^{d \times n}$ with $\norm{\X}_F \le \poly{n}$, if $\K_{\mathrm{ntk}} \in \Rbb^{n \times n}$ is the depth-$L$ NTK kernel matrix on this dataset,
there exists $m = \bigo\left( \frac{s_\lambda(\K_{\mathrm{ntk}})}{\varepsilon^2} \cdot \poly{q^L, \log n} \right)$ such that the output ${\psi}^{(L)}(\X) \in \Rbb^{m \times n}$ of \cref{alg-subspace-embding} satisfies with probability at least $1 - \frac{1}{\poly{n}}$
\begin{align}
(1-\varepsilon) \left(\K_{\mathrm{ntk}} + \lambda \I_n \right) \preceq {\psi}^{(L)}(\X)^\top {\psi}^{(L)}(\X)  + \lambda \I_n \preceq (1+\varepsilon)\left( \K_{\mathrm{ntk}} + \lambda \I_n \right)
% \Pr \left[ (1-\varepsilon) \left(\K_{\mathrm{ntk}} + \lambda I_n \right) \preceq {\psi}^{(L)}(\X)^\top {\psi}^{(L)}(\X)  + \lambda I_n \preceq (1+\varepsilon)\left( \K_{\mathrm{ntk}} + \lambda I_n \right) \right] \ge 1 - \frac{1}{\poly{n}}.
\end{align}
Moreover, the runtime of \cref{alg-subspace-embding} is $\bigo\left( \poly{q^L, \log n} \cdot \varepsilon^{-2} \cdot \left(s_\lambda(\K_{\mathrm{ntk}}) \cdot n +  \mathrm{nnz}(\X) \right) \right)$.
\end{restatable}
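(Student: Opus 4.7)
\textbf{Proof plan for \cref{thm-subspace-embding}.}
The approach is to pass through an intermediate ``ideal'' kernel obtained by replacing $\kappa,\kappa'$ with their polynomial surrogates $\tkappa,\tkappa'$, and then sketch the resulting polynomial embedding using \textsc{PolySketch}. Concretely, define the polynomial-surrogate NTK as $\widetilde{\Theta}^{(L)}(x,y) := \normx\normy \cdot R^{(L)}\!\left(\inner{x,y}/(\normx\normy)\right)$ with $R^{(L)}$ as in Line~\ref{line-polynomial-R}, and let $\widetilde{\K}_{\mathrm{ntk}}\in \Rbb^{n\times n}$ be its Gram matrix on $\X$. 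Because $R^{(L)}$ has non-negative coefficients $c_j$, the ``ideal'' (unsketched) feature map $\bar{\psi}^{(L)}(x):=\normx\bigoplus_{j=0}^p \sqrt{c_j}\,(x/\normx)^{\otimes j}$ realizes it exactly, i.e.\ $\bar{\psi}^{(L)}(\X)^{\!\top}\bar{\psi}^{(L)}(\X)=\widetilde{\K}_{\mathrm{ntk}}$. The plan is (i) show $\widetilde{\K}_{\mathrm{ntk}}\approx \K_{\mathrm{ntk}}$ deterministically up to $1/\poly{n}$ in every entry, and (ii) show the sketched version ${\psi}^{(L)}(\X)^{\!\top}{\psi}^{(L)}(\X)$ is a $(1\pm\varepsilon)$ spectral approximation of $\widetilde{\K}_{\mathrm{ntk}}+\lambda\I$ via the oblivious subspace embedding property of \textsc{PolySketch}. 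A triangle inequality on the $\preceq$ ordering then yields the claim.

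For step (i), invoke \cref{eq-dp-covar,eq-dp-ntk,lmm-derivative-dual-kernel} together with the homogeneity assumption to write the exact NTK as $\Theta_\sigma^{(L)}(x,y)=\normx\normy \cdot R_\kappa^{(L)}\!\left(\inner{x,y}/(\normx\normy)\right)$ where $R_\kappa^{(L)}$ is the analogue of \cref{ntk-kernl-simplified-homogen} using the true $\kappa,\kappa'$. Iterating the compositional closeness hypotheses~$(1)$ and $(2)$ on $\tkappa$, one shows inductively that $|\tkappa^{\circ h}(t)-\kappa^{\circ h}(t)|\le 1/\poly{n}$ and $|\tkappa'\!\circ\tkappa^{\circ h}(t)-\kappa'\!\circ\kappa^{\circ h}(t)|\le 1/\poly{n}$ uniformly on $t\in[-1,1]$; here assumption~$(2)$ is crucial to absorb the small excursion of $\tkappa^{\circ h}(t)$ slightly outside $[-1,1]$. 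Summing the $L+1$ products yields $\sup_{|t|\le 1}|R^{(L)}(t)-R_\kappa^{(L)}(t)|\le 1/\poly{n}$, and combined with $\|\X\|_F\le \poly{n}$ this gives $\|\widetilde{\K}_{\mathrm{ntk}}-\K_{\mathrm{ntk}}\|_F\le 1/\poly{n}$. Since $\lambda\ge 1/\poly{n}$, this entrywise bound converts into the spectral approximation $(1-\varepsilon/3)(\K_{\mathrm{ntk}}+\lambda\I)\preceq\widetilde{\K}_{\mathrm{ntk}}+\lambda\I\preceq(1+\varepsilon/3)(\K_{\mathrm{ntk}}+\lambda\I)$ after suitably tuning the polynomial factor in $\poly{n}$.

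For step (ii), view the sketch in \cref{alg-subspace-embding} as a \emph{single} block-diagonal linear map $S$ applied to $\bar{\psi}^{(L)}$: specifically, $\psi^{(L)}(x)$ has the $j$-th block equal to $\sqrt{c_j}\, Q^j (x/\normx)^{\otimes j}$ while the corresponding block of $\bar{\psi}^{(L)}(x)$ is $\sqrt{c_j}(x/\normx)^{\otimes j}$. The direct sum of independent \textsc{PolySketch}es $\{Q^j\}_{j\le p}$ inherits the oblivious subspace embedding property on the concatenated tensor-product space; this is the standard ``sketch-the-direct-sum'' reduction used in~\cite{ahle2020oblivious} and adapted to NTK in~\cite{zandieh2021scaling}. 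Applying the OSE guarantee to the range of $\bar{\psi}^{(L)}(\X)$, together with the ridge-regularized subspace embedding argument (effective dimension $s_\lambda(\widetilde{\K}_{\mathrm{ntk}})$, which by step~(i) equals $s_\lambda(\K_{\mathrm{ntk}})$ up to $1\pm 1/\poly{n}$), yields the spectral bound $(1-\varepsilon/3)(\widetilde{\K}_{\mathrm{ntk}}+\lambda\I)\preceq{\psi}^{(L)}(\X)^{\!\top}{\psi}^{(L)}(\X)+\lambda\I\preceq(1+\varepsilon/3)(\widetilde{\K}_{\mathrm{ntk}}+\lambda\I)$ with probability $1-1/\poly{n}$, provided $m=\bigo\!\left(\varepsilon^{-2}\, s_\lambda(\K_{\mathrm{ntk}})\,\poly{q^L,\log n}\right)$. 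Composing the two spectral inequalities and rescaling $\varepsilon$ by a constant gives the stated result.

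The runtime follows from three observations: (a) all $u^\ell$ for $\ell\le p=q^{\bigo(L)}$ can be computed together in time $\bigo(p\cdot(\mathrm{nnz}(x)+m)\cdot \poly{\log n})$ using the recursive tree-of-sketches structure of \textsc{PolySketch} (\cref{sec-appendix-polysketch}); (b) computing the coefficients $\{b_j\}$ and $\{c_j\}$ is a pre-processing step of cost $\poly{q^L}$; (c) assembling $\psi^{(L)}(\X)$ over the dataset costs an additional $\bigo(n\cdot m \cdot p)$. Summing and absorbing lower-order terms into $\poly{q^L,\log n}$ yields the claimed $\bigo\!\left(\poly{q^L,\log n}\cdot\varepsilon^{-2}\cdot(s_\lambda(\K_{\mathrm{ntk}})\cdot n+\mathrm{nnz}(\X))\right)$ runtime. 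The main technical obstacle is part~(ii): the OSE must hold \emph{simultaneously} across all $p+1$ sketch blocks against the full column span of $\bar{\psi}^{(L)}(\X)$, so care is needed to ensure the effective-dimension dependence is $s_\lambda(\K_{\mathrm{ntk}})$ rather than $n$, which is where the AMM-plus-ridge-leverage-score argument for \textsc{PolySketch} is invoked.
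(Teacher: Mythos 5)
Your proposal is correct and follows essentially the same route as the paper: first bound $\sup_{|t|\le 1}|R^{(L)}(t)-T^{(L)}(t)|$ by iterating hypotheses $(1)$ and $(2)$ to get $\|\widetilde{\K}_{\mathrm{ntk}}-\K_{\mathrm{ntk}}\|_F\le \varepsilon\lambda/3$, then sketch the non-negative-coefficient polynomial feature map with \textsc{PolySketch}. The only (inessential) difference is that where you invoke a single block-diagonal OSE on the direct sum, the paper makes the mechanism explicit by applying the per-degree spectral guarantee of \cref{lem-polysketch} to each Gram term $c_j(\Y^{\otimes j}\D)^\top\Y^{\otimes j}\D$ with regularizer $\mu=\lambda/(p+1)$, bounding each term's statistical dimension by $(p+1)s_\lambda(\widetilde{\K}_{\mathrm{ntk}})$ via Courant--Fischer, and union-bounding over the $p+1$ blocks before summing the L\"owner inequalities.
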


We prove this theorem in \cref{appndx-proof-subspace-embed}. As an example, let us apply \cref{thm-subspace-embding} on the normalized Gaussian kernel $K_G$ defined in \cref{eq-normalzied-gaussian-kernel}, which is homogeneous. The dot-product factor corresponding to this dual kernel is $\kappa(t) = \exp(t-1)$. The truncated Taylor series of this function is $\tkappa(t) = \sum_{j=0}^q \frac{t^j}{e \cdot j!}$. If $q = \Omega(\log n)$ then it can be verified that the polynomial $\tkappa(t)$ satisfies the preconditions of \cref{thm-subspace-embding}. Therefore, one can invoke \cref{alg-subspace-embding} to get a subspace embedding for the NTK kernel matrix corresponding to the normalized Gaussian dual kernel $K_G$ in $\bigo\left( \varepsilon^{-2} \cdot \left(s_\lambda(\K_{\mathrm{ntk}}) \cdot n +  \mathrm{nnz}(\X) \right) \cdot \poly{\log^L n} \right)$ time and with a target dimension of $m = \bigo\left({\varepsilon^{-2}}\cdot { s_\lambda(\K_{\mathrm{ntk}})} \cdot \poly{\log^L n} \right)$. For any constant number of layers, $L$, this runtime and target dimension is is optimal up to $\poly{\log n}$ factors.
The implementation of our sketching algorithm is available at \href{https://github.com/insuhan/ntk_activations}{\texttt{https://github.com/insuhan/ntk\_activations}}.

\section{Experiments} \label{sec-exp}

In this section, we perform experiments with the proposed neural kernels based on our dual kernel approximation. All experiments run using a single A100 GPU machine.

\paragraph{Kernel approximation.}
We first benchmark our algorithm to approximate the dual kernel matrix. We use $\mathrm{ReLU}$, Abs (i.e., $\sigma(t)=\abs{t}$), $\sin$, Gaussian, $\mathrm{erf}$ and GeLU activations and approximate them by their Hermite expansion where degree changes from $q=1$ to $20$.  We randomly generate $n=1{,}000$ of $256$-dimensional inputs where each entry is i.i.d. drawn from $\mathcal{N}(0,{1}/{\sqrt{256}})$. We also compare our approach to the Monte Carlo estimation of dual kernel, i.e., $K_\sigma(x,y) \approx \frac1m \sum_{i=1}^m \sigma(\inner{w_i, x}) \sigma(\inner{w_i, y})$ where $\{ w_i\}_{i=1}^m$ are i.i.d. standard Gaussian vectors. In \cref{fig-kernel-approx-synthetic}, we plot relative errors of the Frobenius norm of kernel approximations in terms of wall-clock times ({\bf top}) and polynomial degree ({\bf bottom}). 
We run 10 independent trials and evaluate the average approxmation errors.  We observe that our approximation with Hermite expansion outperforms the Monte Carlo method for all activations we used. In particular, $\sin$ and Gaussian are well approximated because they are smooth and norms of their derivatives are bounded with respect to the normal measure. 

\begin{figure}[t]
    % \vspace{-0.1in}
	\centering
	\includegraphics[width=\textwidth]{./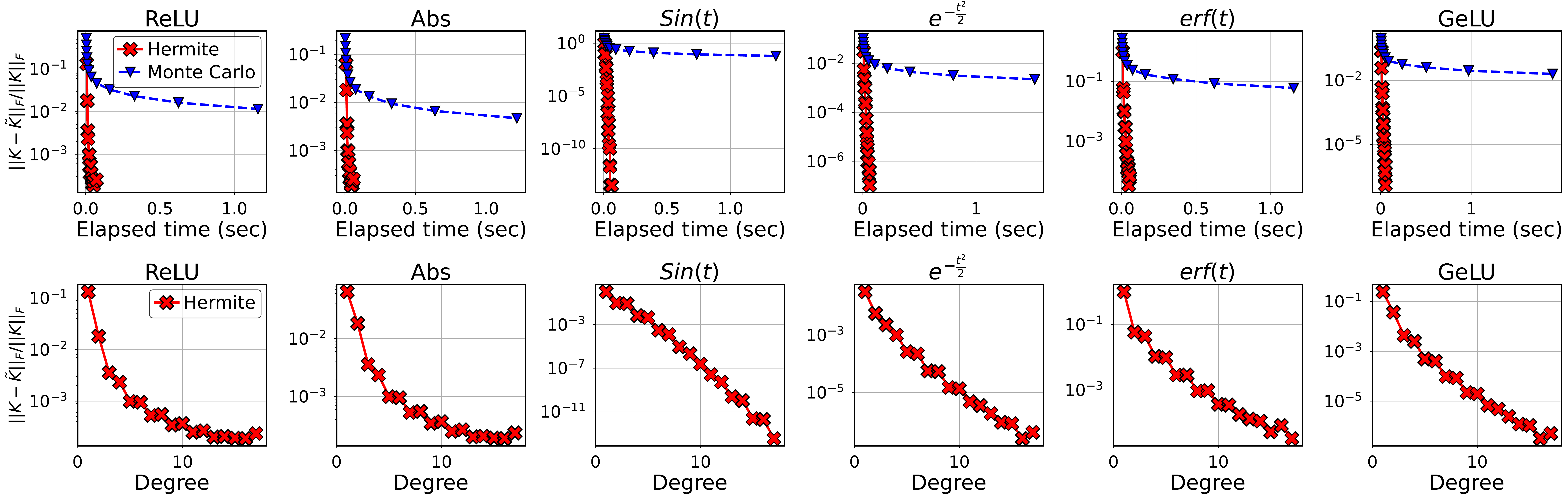}
	\vspace{-0.2in}
	\caption{Relative errors of dual kernel approximations via the truncated Hermite expansion and Monte Carlo estimation under synthetic dataset with $n=1{,}000, d=256$. %\jl{Here polynomial means Hermite polynomial, right?}\insu{Yes, the `polynomial` here is the truncated Hermite expansion.} \jl{Should you have also ReLU in the plot?} \insu{Is Figure 1 replaceable with Figure 2?} \jl{Can we match the slope from Theorem 3 assuming elapsed time is linear in degree, or in relative error vs deg plot. We can get something very similar for 1-layer FC exact kernel which should be identical setting as well as Myrtle-5 CNTK. Qualitatively similar behavior occurs non-smooth activation (ReLU, Abs) converge slowly like power-law but smooth activation converges exponentially in degree. I'll add plots for these cases.}
	} \label{fig-kernel-approx-synthetic}
	\vspace{-0.15in}
% \vspace{-8pt}
\end{figure}

\begin{wrapfigure}{R}{0.5\textwidth}
    % \vspace{-0.15in}
% 	\hspace{-0.02in}
	\includegraphics[width=0.5\textwidth]{./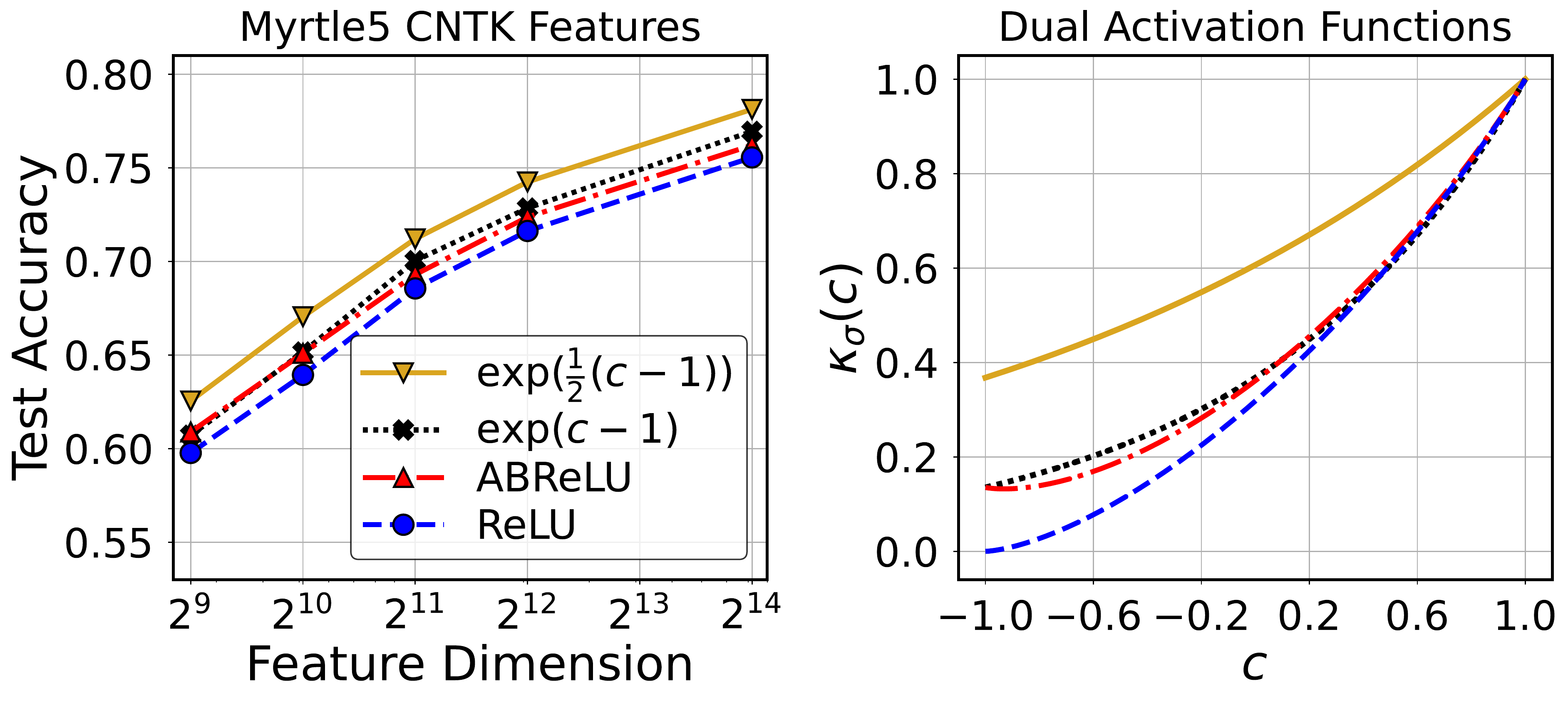}
	\vspace{-0.05in}
	\caption{Test accuracy of CIFAR-10} \label{fig-cifar10}
	\vspace{-0.15in}
\end{wrapfigure}
\paragraph{Performance on CIFAR-10 classification.} We also benchmark the proposed CNTK approximating via sketching algorithm. We perform CIFAR-10 classification~\cite{krizhevsky2009learning} by solving the ridge regression problem. The image classes are converted into $10$-dimensional one-hot vectors and inputs are pre-processed with regularized ZCA~\cite{shankar2020neural,lee2020finite}.  We report the best test accuracy among $20$ choices of ridge parameters in $\{ 10^{-10+\frac{12}{19}i} \mid i=0,1,\dots,19\}$.  We extract CNTK features of a $5$-layer convolutional neural network (known as Myrtle5~\cite{shankar2020neural}) without pooling by setting degree $q=8$ and  explore feature dimension $m=\{2^{9}, \dots, 2^{14}\}$ and homogeneous dual kernels including ReLU, ABReLU activations as well as deep normalized Gaussian kernels with $2$ scaling factors. See \cref{sec-appendix-abrelu} for more details. In \cref{fig-cifar10}, the test accuracy of neural kernels ({\bf left}) and the corresponding their dual activations ({\bf right}) are plotted. The dual activation of ABReLU is very similar to the normalized Gaussian without scaling and their test performances are also comparable.
We observe that the scaled normalized Gaussian shows the best performance which achieves 78.13\% while the ReLU CNTK features~\cite{zandieh2021scaling} shows 75.56\% with the same runtime. This is because the coefficients decay of the normalized Gaussian is faster than that of the ReLU, which leads to a lower approximation error of sketching algorithm. We also perform comparison among different activation functions in neural kernels in \cref{sec-appendix-guass-hermite-quadrature}. 

\paragraph{Speedup.} We observe that the exact CNTK of Myrtle-5 constructs
a kernel matrix of size $60{,}000\times60{,}000$ and achieves 86-87\% test accuracy. However, this requires approximately 151 GPU hours.
%  60{,}000$\times$60{,}000 matrix and  
Under the same setting, our CNTK features for the normalized Gaussian kernel take about 1.4 GPU hours, i.e. a  $106 \times$ speedup. 
If we use less training data to construct $20{,}000 \times 20{,}000$ kernel matrix, the accuracy is about 77\% accuracy and the runtime is 16.8 GPU hours in which our approximation is still $12 \times$ faster without loss of accuracy.  We believe such acceleration through our methods open the door to using neural kernels in a wide range of research domains.

\section{Discussion}
In this work, we introduced methods to efficiently compute neural kernels for general activations. As activation functions play an important role in modern neural network architectures, we hope our work could empower researchers to explore properties of activations in a more principled way. We are excited with sketching method's compute efficiency by orders of magnitude on highly performant neural kernels to open up applications in dataset distillation~\cite{nguyen2021dataset} or uncertainty critical problems~\cite{adlam2021exploring} such as autonomous driving, healthcare and science.

\section*{Acknowledgements}

Amir Zandieh was supported by the Swiss NSF grant No. P2ELP2\textunderscore 195140. Amin Karbasi acknowledges funding in direct support of this work from NSF (IIS-1845032), ONR (N00014- 19-1-2406), and the AI Institute for Learning-Enabled Optimization at Scale (TILOS). We thank Timothy Nguyen and Jeffrey Pennington for discussions and feedback on the project. 

\bibliographystyle{unsrtnat}
\bibliography{references.bib}

% \jl{There's also checklist for this year as well? Copied and commented out below.}

% \input{checklist.tex}

%%%%%%%%%%%%%%%%%%%%%%%%%%%%%%%%%%%%%%%%%%%%%%%%%%%%%%%%%%%%

\newpage
\appendix

\section{Sketching Preliminaries} \label{sec-appendix-polysketch}

The {\sc PolySketch} algorithm is a norm-preserving dimensionality reduction that can be applied to the tensor product of multiple vectors very quickly~\cite{ahle2020oblivious}, i.e., for any $v_1, \dots, v_q \in \Rbb^d$, there exists a randomized mapping $Q^q:\Rbb^{d^q}\rightarrow\Rbb^m$ which satisfies that
\begin{align*}
    \norm{ Q^q\left(v_1 \otimes \cdots \otimes v_p \right) }_2 \approx \norm{v_1 \otimes \cdots \otimes v_p }_2
\end{align*}
with high probability and $Q^q\left(v_1 \otimes \cdots \otimes v_p\right)$ can be computed very fast. Here, sketching dimension $m$ is a trade-off parameter between runtime and accuracy.
\cref{alg-polysketch} describes the pseudo-code of \textsc{PolySketch} and \cref{lem-polysketch} summarizes Theorems 1.2 and 1.3 of \cite{ahle2020oblivious} which guarantees spectral approximation of output of \textsc{PolySketch}.

\begin{algorithm}[h]
	\caption{\textsc{PolySketch}~\cite{ahle2020oblivious}} \label{alg-polysketch}
	\begin{algorithmic}[1]
		\STATE {\bf input}:  $x \in \Rbb^{d}$, degree $q$, sketch dimension $m$, SRHT instances  $\{ S^0_j :\Rbb^d \rightarrow \Rbb^m\}_{j=1}^q$ and $\{ S_j^i : \Rbb^{m^2} \rightarrow \Rbb^m \mid j=1, \dots, 2^{\lceil \log_2 q\rceil - i}, i=1, \dots, \lceil \log_2 q\rceil\}$
		\STATE let $\overline{q} \leftarrow 2^{\lceil \log_2 q \rceil}$
		\STATE for every $j = 1, \dots, q$, let $y_j^{0} \leftarrow S^0_j \cdot x$
		\STATE for every $j = q+1, \dots,  \overline{q}$, let $y_j^{0} \leftarrow S^{0}_j \cdot e_1$ where $e_1\in\Rbb^d$ is the first column vector of $\I_d$ %(the standard basis vector)
		\STATE {\bf for}  { $i = 1 , \dots, \log_2 \overline{q}$} 
		\STATE ~~~~ {\bf for}  { $j = 1, \dots, \overline{q} / 2^i$ }
		\STATE ~~~~~~~~ compute $y_j^{i} \leftarrow S_{j}^{i} \cdot \left(y_{2j-1}^{i-1} \otimes y_{2j}^{i-1}\right)$
		\STATE {\bf return} $z = y_1^{\log_2 \overline{q}}$
	\end{algorithmic}
\end{algorithm}
\begin{algorithm}[h]
	\caption{Subsampled Randomized Hadamard Transform (\textsc{SRHT})}
	\begin{algorithmic}[1]
		\STATE {\bf input}: $ x \in \mathbb{R}^d$, dimension $m$, random signs $s \in \{ +1, -1\}^d$, random  indices $b \in \{1,\dots, d\}^m$
%		\STATE {\bf output}: $\S x$ where $\S = \frac1{\sqrt{m}} \B \H \mathrm{diag}\left(s\right)$, $\B \in\{0,1\}^{m \times d}$, $\B_{b_i, i}=1, \B_{j,i}=0$ for $j\neq b_i$
		\STATE let $y \leftarrow \left[ x_1 s_1, x_2 s_2, \dots, x_d s_d\right]$
		\STATE compute $ z \leftarrow \mathrm{FFT}(y)$
		\STATE {\bf return} $\frac1{\sqrt{m}} \left[ z_{b_1}, \dots, z_{b_m} \right] $
	\end{algorithmic}
\end{algorithm}

\begin{restatable}[{\sc PolySketch}]{theorem}{sodaresults}\label{lem-polysketch}
% \begin{lemma}[{\sc PolySketch}]\label{soda-result}
    For every integers $p,d\ge 1$ and every $\varepsilon, \delta>0$, there exists a distribution on random matrices $Q^p \in \Rbb^{m \times d^p}$, called degree $p$ {\sc PolySketch} such that {\bf (1)} for some $m=\bigo\left(\frac{p}{\varepsilon^{2}} \log^3 \frac{1}{\varepsilon\delta} \right)$ and any $y \in \Rbb^{d^p}$, $\Pr\left[ \|Q^p y\|_2^2 \in (1\pm \varepsilon)\|y\|_2^2 \right] \ge 1 - \delta$; {\bf (2)} for any $x \in \Rbb^d$, the total time to compute~$Q^p x^{\otimes p}$ is $\bigo\left( p m \log m + \frac{p^{3/2}}{\varepsilon}\log\frac{1}{\delta}~{\rm nnz}(x) \right)$; {\bf (3)} for any collection of vectors $v_1,\dots,v_p \in \Rbb^d$, the time to compute $Q^p \left(v_1 \otimes \dots \otimes v_p\right)$ is bounded by $\bigo\left( p m \log m + \frac{p^{3/2}}{\varepsilon} d \log \frac{1}{\delta} \right)$; {\bf (4)} for any $\lambda>0$ and any matrix $\A \in \Rbb^{d^p \times n}$, where the statistical dimension of $\A^\top \A$ is $s_\lambda$, there exists some $m = \bigo\left( \frac{p^4 s_\lambda}{\varepsilon^2} \log^3 \frac{n}{\varepsilon\delta} \right)$ such that,
    \begin{align}
        \Pr \left[ (1-\varepsilon) \left(\A^\top \A + \lambda \I_n \right) \preceq (Q^p \A)^\top (Q^p \A)  + \lambda \I_n \preceq (1+\varepsilon)\left( \A^\top \A + \lambda \I_n \right) \right] \ge 1 - \delta.
    \end{align}
% \end{lemma}
\end{restatable}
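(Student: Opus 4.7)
My plan is to prove the theorem in four stages: (i) show that the polynomial $R^{(L)}(t)$ defined in Line \ref{line-polynomial-R} from $\tkappa$ is uniformly $1/\poly{n}$-close on $[-1,1]$ to the analogous function $R_\kappa^{(L)}$ built from the true $\kappa$, so that the associated Gram matrix $\tilde\K_{\mathrm{ntk}}$ is spectrally close to $\K_{\mathrm{ntk}}+\lambda\I$; (ii) realize $\psi^{(L)}(\X)$ as a block-sketch $\mathbf{Q}\A$ of an explicit tensor-power feature matrix whose un-sketched Gram matrix is exactly $\tilde\K_{\mathrm{ntk}}$; (iii) invoke property (4) of \cref{lem-polysketch} on each tensor-degree block with a per-block regularizer $\lambda/(p+1)$ so that the per-block errors telescope additively into a single spectral approximation of $\tilde\K_{\mathrm{ntk}}+\lambda\I$; and (iv) tally target dimension and runtime using properties (2)--(3) of \cref{lem-polysketch}.

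For Stage (i), I exploit the homogeneous form \cref{ntk-kernl-simplified-homogen}: the true NTK equals $\normx\normy\cdot R_\kappa^{(L)}(\langle x/\normx,y/\normy\rangle)$ with $R_\kappa^{(L)}$ formed identically to Line \ref{line-polynomial-R} but with $\kappa,\kappa'$ in place of $\tkappa,\tkappa'$. Since every input cosine lies in $[-1,1]$, it suffices to bound $|\tkappa^{\circ h}-\kappa^{\circ h}|$ and $|\tkappa'\circ\tkappa^{\circ i}-\kappa'\circ\kappa^{\circ i}|$ on $[-1,1]$. I carry an inductive invariant that $\tkappa^{\circ h}(t)\in[-1-h/\poly{n},1+h/\poly{n}]$ (this is why condition (2) is needed---$\tkappa$ need not preserve $[-1,1]$ the way $\kappa$ does), and decompose each step as $\tkappa(\tkappa^{\circ h}(t))-\kappa(\kappa^{\circ h}(t))=[\tkappa(\tkappa^{\circ h}(t))-\tkappa(\kappa^{\circ h}(t))]+[\tkappa(\kappa^{\circ h}(t))-\kappa(\kappa^{\circ h}(t))]$, bounded by condition (2) and condition (1) respectively. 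After $L$ compositions the uniform error is still $1/\poly{n}$; combining entry-wise with $\|\X\|_F\le\poly{n}$ gives $\|\tilde\K_{\mathrm{ntk}}-\K_{\mathrm{ntk}}\|_F\le 1/\poly{n}\ll \lambda$, yielding $(1\pm\varepsilon/3)(\K_{\mathrm{ntk}}+\lambda\I)$ as a spectral sandwich for $\tilde\K_{\mathrm{ntk}}+\lambda\I$.

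For Stages (ii)--(iii), writing $R^{(L)}(t)=\sum_{j=0}^p c_j t^j$ with $c_j\ge 0$ (non-negativity is preserved under sums, products, and compositions of polynomials with non-negative coefficients, starting from $\tkappa$), I define $\A_j\in\Rbb^{d^j\times n}$ whose $i$-th column is $\sqrt{c_j}\,\|x_i\|\,(x_i/\|x_i\|)^{\otimes j}$. Then $\sum_j \A_j^\top\A_j=\tilde\K_{\mathrm{ntk}}$ and the algorithm's output satisfies $\psi^{(L)}(\X)^\top\psi^{(L)}(\X)=\sum_j(Q^j\A_j)^\top(Q^j\A_j)$. Applying property (4) of \cref{lem-polysketch} separately to each $\A_j$ with regularizer $\lambda/(p+1)$ and accuracy $\varepsilon/3$ gives, with failure probability $1/\poly{n}$ per block,
\begin{align*}
(1-\varepsilon/3)\bigl(\A_j^\top\A_j+\tfrac{\lambda}{p+1}\I\bigr)\preceq (Q^j\A_j)^\top(Q^j\A_j)+\tfrac{\lambda}{p+1}\I\preceq(1+\varepsilon/3)\bigl(\A_j^\top\A_j+\tfrac{\lambda}{p+1}\I\bigr).
\end{align*}
Summing over $j=0,\ldots,p$ telescopes the regularizers into $\lambda\I$ and yields $(1\pm\varepsilon/3)(\tilde\K_{\mathrm{ntk}}+\lambda\I)\preceq\psi^{(L)}(\X)^\top\psi^{(L)}(\X)+\lambda\I$, which composes with Stage (i) to finish (after redefining $\varepsilon$ by a constant). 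For the dimension bound I use $\A_j^\top\A_j\preceq\tilde\K_{\mathrm{ntk}}$ and the easily verified scaling $s_{\lambda/c}(\M)\le c\cdot s_\lambda(\M)$ for $c\ge 1$, giving $s_{\lambda/(p+1)}(\A_j^\top\A_j)\le(p+1)s_\lambda(\K_{\mathrm{ntk}})(1+o(1))$; summing the per-block target dimensions from \cref{lem-polysketch}(4) gives the claimed $m=\bigo(\poly{q^L,\log n}\cdot s_\lambda(\K_{\mathrm{ntk}})/\varepsilon^2)$, noting $p=q^L$ since $\deg(\tkappa^{\circ h}\cdot\prod_{i=h}^{L-1}\tkappa'\circ\tkappa^{\circ i})=q^L$ for every $h$. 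A union bound over the $p+1$ sketches keeps the total failure probability at $1/\poly{n}$. The runtime claim follows immediately by summing the per-block cost from properties (2)--(3) of \cref{lem-polysketch} over $j=0,\ldots,p$ and over the $n$ points, with the $n\cdot s_\lambda(\K_{\mathrm{ntk}})$ term coming from forming $Q^j\A_j$ column-by-column and the $\mathrm{nnz}(\X)$ term from the sparse-input cost of the initial sketch.

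The main obstacle I anticipate is Stage (i): bookkeeping the $1/\poly{n}$ pointwise error through $L$ nested compositions of $\tkappa$ while the intermediate arguments drift outside $[-1,1]$ by an amount that grows (mildly) with depth. Condition (2) is crucial precisely because $\tkappa$ lacks the contraction property $\kappa:[-1,1]\to[-1,1]$; one must carefully choose the hidden $\poly{n}$ in (1)--(2) so that the inductive invariant $\tkappa^{\circ h}(t)\in[-1-1/\poly{n},1+1/\poly{n}]$ survives all $L$ iterations and the local Lipschitz estimate on $\tkappa'$ remains valid. This is also where the exponential-in-$L$ factor $p=q^L$ inevitably enters both the target dimension and the runtime, since the NTK polynomial must faithfully represent $L$-fold kernel composition.
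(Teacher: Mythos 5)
Your proposal does not prove the statement in question. The statement is the {\sc PolySketch} guarantee itself (\cref{lem-polysketch}): the existence of a distribution over random matrices $Q^p \in \Rbb^{m\times d^p}$ with the stated norm-preservation, fast application to tensor powers, and oblivious-subspace-embedding properties. What you have written is a proof of the downstream result, \cref{thm-subspace-embding} (the homogeneous NTK subspace embedding), and in the course of that argument you explicitly \emph{invoke} ``property (4) of \cref{lem-polysketch}'' as well as properties (2)--(3) for the runtime. Used as a proof of \cref{lem-polysketch}, the argument is circular: every stage of your plan presupposes the very sketch distribution and its four guarantees that the theorem asserts. Nothing in your proposal constructs $Q^p$, bounds its second moment, analyzes its application time to $x^{\otimes p}$ or to $v_1\otimes\cdots\otimes v_p$, or establishes the spectral approximation of $\A^\top\A + \lambda \I_n$ --- which is what a proof of this statement would require (in the original source this is done by recursively composing base sketches such as SRHT/TensorSketch/OSNAP along a binary tree, as in \cref{alg-polysketch}, and propagating moment bounds through the tree).

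For context, the paper itself does not prove this statement either: it is imported verbatim as a summary of Theorems 1.2 and 1.3 of Ahle et al.\ (2020), with \cref{alg-polysketch} reproduced only as pseudo-code. So there is no internal proof to compare against; the correct response to this prompt would have been either to reproduce the Ahle et al.\ construction and analysis, or to note that the result is cited from prior work. Separately, your argument for \cref{thm-subspace-embding} does track the paper's actual proof of that theorem quite closely (the composition-error induction using conditions (1)--(2), the decomposition $\widetilde{\K}_{\mathrm{ntk}} = \sum_j c_j (\Y^{\otimes j}\D)^\top \Y^{\otimes j}\D$, the per-block regularizer $\lambda/(p+1)$, and the union bound over $p+1$ blocks), but that is an answer to a different question.
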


\section{Proofs}

% \subsection{Proof of \cref{prop-hermite}} \label{sec-proof-prop-hermite}

\subsection{Properties of Hermite Polynomials}

% We first introduce some useful properties of Hermite polynomials that will be used in our analysis.

% \begin{lemma} \label{lmm-weighted-hermite-bound}
% For every integer $j \geq 0$, it holds that $\abs{h_j(t) ~ e^{-\frac{t^2}{4}}} \leq 1.09 \sqrt{j!}$ for all $t \in \Rbb$.
% \end{lemma}
We first introduce that Hermite polynomials can be used as the random feature of monomial kernels for inputs on the unit sphere, which will be used in our analysis.
% \begin{restatable}{prop}{hermiteorthogonality} \label{prop-hermite}
% For $x, y \in \SS^{d-1}$, it holds that
% \begin{align}
%     \Ebb_{w \sim \mathcal{N}(0, \I_d)} \left[ h_\ell(\inner{w,x})~h_m(\inner{w,y})\right] = \ell!\inner{x,y}^\ell\cdot\mathbbm{1}_{\{\ell=m\}}.
% \end{align}
% \end{restatable}
\begin{restatable}{prop}{hermiteorthogonality} \label{prop-hermite}
For $x, y \in \SS^{d-1}$, it holds that
\begin{align}
    \Ebb_{w \sim \mathcal{N}(0, \I_d)} \left[ h_\ell(\inner{w,x})~h_m(\inner{w,y})\right] = \ell!\inner{x,y}^\ell\cdot\mathbbm{1}_{\{\ell=m\}}.
\end{align}
\end{restatable}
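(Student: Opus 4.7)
The plan is to reduce the spherical expectation to a bivariate Gaussian calculation and then extract the orthogonality relation from the generating function of the Hermite polynomials. First I would observe that since $w \sim \mathcal{N}(0,\I_d)$, the pair $(U,V) := (\langle w,x\rangle, \langle w,y\rangle)$ is jointly Gaussian with mean zero and covariance
\begin{equation*}
    \begin{bmatrix} \|x\|_2^2 & \langle x,y\rangle \\ \langle x,y\rangle & \|y\|_2^2 \end{bmatrix} = \begin{bmatrix} 1 & \rho \\ \rho & 1 \end{bmatrix},
\end{equation*}
where $\rho := \langle x,y\rangle$, using the assumption $x,y \in \SS^{d-1}$. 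So the claim reduces to showing $\Ebb[h_\ell(U)\,h_m(V)] = \ell!\,\rho^\ell \cdot \mathbbm{1}_{\{\ell=m\}}$ for $(U,V)$ standard bivariate Gaussian with correlation $\rho$.

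The cleanest route is via the exponential generating function $e^{tu - t^2/2} = \sum_{n\ge 0} \frac{t^n}{n!} h_n(u)$, which follows directly from the Rodrigues-type formula \cref{eq-defn-hermite-polynomial}. Multiplying the generating identities at $(t,U)$ and $(s,V)$, taking expectation, and exchanging sum and expectation (justified since the series is absolutely convergent in $L^2$), I would get
\begin{equation*}
    \sum_{\ell,m \ge 0} \frac{t^\ell s^m}{\ell!\, m!}\, \Ebb[h_\ell(U)\,h_m(V)] = \Ebb\!\left[e^{tU+sV}\right] \cdot e^{-(t^2+s^2)/2}.
\end{equation*}
Evaluating the moment generating function of the bivariate Gaussian gives $\Ebb[e^{tU+sV}] = \exp\!\left(\tfrac{1}{2}(t^2 + 2ts\rho + s^2)\right)$, so the right-hand side collapses to $e^{ts\rho} = \sum_{k \ge 0} \frac{(ts\rho)^k}{k!}$.

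Finally, I would match coefficients of $t^\ell s^m$ on both sides of the identity. On the right, only the diagonal $\ell = m = k$ produces a nonzero term, contributing $\frac{\rho^k}{k!}$; on the left the same coefficient equals $\frac{1}{\ell!\,m!}\Ebb[h_\ell(U)\,h_m(V)]$. Solving yields $\Ebb[h_\ell(U)\,h_m(V)] = \ell!\,\rho^\ell$ when $\ell = m$ and $0$ otherwise, which is exactly the claim. I do not anticipate a genuine obstacle here; the only care needed is the interchange of sum and expectation (routine, since $e^{|tU|+|sV|}$ has finite Gaussian expectation for all $t,s$), and a minor alternative would be to invoke Mehler's formula directly if one prefers to cite it as a known identity rather than rederive it via the moment generating function.
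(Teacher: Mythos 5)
Your proof is correct and follows essentially the same route as the paper: both reduce the expectation to a standard bivariate Gaussian pair $(U,V)$ with correlation $\rho=\inner{x,y}$ using the unit-norm assumption, at which point the claim is exactly the Hermite orthogonality identity $\Ebb[h_\ell(U)h_m(V)]=\ell!\,\rho^\ell\,\mathbbm{1}_{\{\ell=m\}}$. The only difference is that the paper cites this identity directly (Proposition 11.31 of O'Donnell), whereas you rederive it via the generating function $e^{tu-t^2/2}=\sum_n \frac{t^n}{n!}h_n(u)$ and the Gaussian moment generating function — a standard and valid Mehler-type computation that makes the argument self-contained.
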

% The proof of \cref{prop-hermite} is provided in \cref{sec-proof-prop-hermite}.
% \hermiteorthogonality*

\begin{proofof}{\cref{prop-hermite}}
Let $a := \inner{w,x}, b:= \inner{w,y}$ then $\Ebb_w[a] = \Ebb_w[b] = 0$ and $\mathrm{Cov}(a,b) = \Ebb[ab] = \inner{x,y}$. Hence, we have that 
\begin{align}
    \Ebb_{w \sim \mathcal{N}(0, \I_d)} \left[ h_\ell(\inner{w,x})~h_m(\inner{w,y})\right]
    = \Ebb_{(a,b) \sim \mathcal{N}(0, \Sigma)}\left[ h_\ell(a)~h_m(b)\right]
\end{align}
where 
\begin{align}
\Sigma = \begin{bmatrix} \norm{x}_2^2 & \inner{x,y} \\ \inner{x,y} & \norm{y}_2^2 \end{bmatrix}=\begin{bmatrix} 1 & \inner{x,y} \\ \inner{x,y} & 1 \end{bmatrix}.
\end{align}
We introduce Proposition 11.31 in \citet{o2014analysis}: %,
\begin{align}
    \Ebb_{(a,b) \sim \mathcal{N}(0, \Sigma)}\left[ h_\ell(a)~h_m(b)\right] = \ell!\cdot\inner{x,y}^\ell\cdot\mathbbm{1}(\ell=m).
\end{align}
This completes the proof of \cref{prop-hermite}.
\end{proofof}

% \subsection{Proof of \cref{thm-dual-kernel-sphere}}

% \thmdualkernelsphere*
% \begin{proof}
% \begin{align}
%     K_\sigma(x,y) &= \Ebb_{w \sim \mathcal{N}(0,1)} \left[ \sigma(\inner{w,x})\sigma(\inner{w,y})\right] \\
%     &=\Ebb_{w \sim \mathcal{N}(0,1)} \left[ \sum_{j,j'=0}^\infty c_j c_{j'} h_j(\inner{w,x}) h_{j'}(\inner{w,y}) \right]
% \end{align}
% \end{proof}

\subsection{Proof of \cref{thm-dual-kernel-expansion}} \label{sec-proof-thm-dual-kernel-expansion}

\nonpolyact*

\begin{proofof}{\cref{thm-dual-kernel-expansion}}
Due to homogeneity of the inner-product, we can write
\begin{align} \label{eq-sigma-taylor}
    \widetilde{\sigma}(\inner{w,x}) = \sum_{i=0}^q a_i \inner{w,x}^i  = \sum_{i=0}^q a_i \norm{x}_2^i  \inner{w,\frac{x}{\norm{x}_2}}^i.
\end{align}
% make use of the orthogonality of 
Note that monomial $t^i$ of degree $i\geq 0$ can be explicitly written in the Hermite basis as $t^i = \sum_{\ell=0}^i \mu_{i,\ell}  h_\ell(t)$ where 
% Check https://en.wikipedia.org/wiki/Hermite_polynomials
\begin{align} \label{eq-coeff-mono-to-hermite}
\mu_{i,\ell} = 
    \begin{dcases}
    \frac{i!}{2^{\frac{i-\ell}{2}} \cdot (\frac{i-\ell}{2})! \cdot \ell!}  &\text{if}~i-\ell \text{ is even},\\ 
    0 &\text{if}~i-\ell \text{ is odd}.
    \end{dcases}
\end{align}
Plugging this into \cref{eq-sigma-taylor} and re-arranging terms, we obtain that
\begin{align} \label{eq-sigma-hermite}
    \widetilde{\sigma}(\inner{w,x}) = \sum_{\ell=0}^q \left( \sum_{i=\ell}^q a_i~\mu_{i,\ell} \norm{x}_2^i  \right) h_\ell\left(\inner{w,\frac{x}{\norm{x}_2}}\right).
\end{align}
Applying \cref{eq-sigma-hermite} to the definition of the dual kernel $K_{\widetilde{\sigma}}(x,y)$ given in \cref{eq-random-features-sigma} and taking the expectation over $w$ gives
\begin{align}
    K_{\widetilde{\sigma}}(x,y) &= \Ebb_{w\sim\mathcal{N}(0,\I_d)}\left[ \widetilde{\sigma}(\inner{w,x})\cdot \widetilde{\sigma}(\inner{w,y}) \right] \nonumber \\
    &=\Ebb_w\left[ \sum_{\ell=0}^q \sum_{m=0}^q \left( \sum_{i=\ell}^q a_i~\mu_{i,\ell} \normx^i \right) \left( \sum_{j=m}^q a_j~\mu_{j,m} \norm{y}_2^j \right) h_\ell\left(\frac{\inner{w,x}}{\normx}\right)h_m\left(\frac{\inner{w,y}}{\normy}\right) \right] \nonumber \\
    &=\sum_{\ell=0}^q \sum_{m=0}^q \left( \sum_{i=\ell}^q a_i~\mu_{i,\ell} \normx^i \right) \left( \sum_{j=m}^q a_j~\mu_{j,m} \norm{y}_2^j \right) \Ebb_w\left[ h_\ell\left(\frac{\inner{w,x}}{\normx}\right)h_m\left(\frac{\inner{w,y}}{\normy}\right) \right] \nonumber \\
    &= \sum_{\ell=0}^q \left( \sum_{i=\ell}^q a_i~\mu_{i,\ell} \normx^i \right) \left( \sum_{j=\ell}^q a_j~\mu_{j,\ell} \norm{y}_2^j \right) \ell! \left(\frac{\inner{x,y}}{\normx \normy}\right)^\ell, \label{eq-hermite-ortho}
\end{align}
where \cref{eq-hermite-ortho} follows from \cref{prop-hermite}. Now using \cref{eq-coeff-mono-to-hermite}, we have
\begin{align*}
    \sum_{i=\ell}^q a_i\cdot \mu_{i,\ell} \norm{x}_2^i = \sum_{k=0}^{\lfloor \frac{q-\ell}{2} \rfloor}  \frac{a_{\ell+2k} \cdot (\ell+2k)! }{2^k \cdot k! \cdot \ell!} \norm{x}_2^{\ell+2k}.
\end{align*}
Therefore, we obtain that
\begin{align*}
    K_{\widetilde{\sigma}}(x,y) = \sum_{\ell=0}^q \left( \sum_{i=0}^{\lfloor \frac{q-\ell}{2} \rfloor} \frac{a_{\ell+2i} (\ell+2i)!}{2^i \cdot i! \sqrt{\ell!}} \norm{x}_2^{2i+\ell} \right) \left( \sum_{j=0}^{\lfloor \frac{q-\ell}{2} \rfloor} \frac{a_{\ell+2j} (\ell+2j)!}{2^j \cdot j! \sqrt{\ell!}} \norm{y}_2^{2j+\ell} \right) \left(\frac{ \inner{x,y}}{\norm{x}_2\norm{y}_2}\right)^\ell.
\end{align*}

We finish off the proof of \cref{thm-dual-kernel-expansion} by bounding the error $\abs{K_\sigma(x,y) - K_{\widetilde{\sigma}}(x,y)}$. 
We use \cref{eq-random-features-sigma} along with the assumption that both $x, y \neq 0$ to write,
\begin{align}
    \abs{K_\sigma(x,y) - K_{\widetilde{\sigma}}(x,y)} &= \abs{ \Ebb_w\left[ \sigma(\inner{w,x})\cdot \sigma(\inner{w,y}) - \widetilde{\sigma}(\inner{w,x})\cdot \widetilde{\sigma}(\inner{w,y}) \right]}\nonumber\\
    &\le \left| \Ebb_w\left[ \left( \sigma(\inner{w,x}) - \widetilde{\sigma}(\inner{w,x}) \right) \cdot \sigma(\inner{w,y}) \right] \right|\label{dual-ker-diffbound-first-term}\\ 
    &\qquad + \left| \Ebb_w\left[ \left( \sigma(\inner{w,y}) - \widetilde{\sigma}(\inner{w,y}) \right) \cdot \widetilde{\sigma}(\inner{w,x}) \right] \right|\label{dual-ker-diffbound-second-term}
\end{align}
where the inequality above follows from the triangle inequality. Now we bound each of \cref{dual-ker-diffbound-first-term} and \cref{dual-ker-diffbound-second-term} separately. First let us bound \cref{dual-ker-diffbound-first-term} using Cauchy–Schwarz inequality as follows,
\begin{align}
    \left| \Ebb_w\left[ \left( \sigma(\inner{w,x}) - \widetilde{\sigma}(\inner{w,x}) \right) \cdot \sigma(\inner{w,y}) \right] \right| &\le \sqrt{ \Ebb_w\left[ \left| \sigma(\inner{w,x}) - \widetilde{\sigma}(\inner{w,x}) \right|^2 \right] \cdot \Ebb_w\left[ \sigma(\inner{w,y})^2 \right] }\nonumber\\
    &= \sqrt{ \Ebb_{\alpha \sim \mathcal{N}(0, \norm{x}_2^2)}\left[ \left| \sigma(\alpha) - \widetilde{\sigma}(\alpha) \right|^2 \right] \cdot \Ebb_{\beta \sim \mathcal{N}(0, \norm{y}_2^2)}\left[ \sigma(\beta)^2 \right] }\nonumber\\
    &\le \sqrt{ \frac{ \Ebb_{\alpha \sim \mathcal{N}(0, \nu^2)}\left[ \left| \sigma(\alpha) - \widetilde{\sigma}(\alpha) \right|^2 \right]}{\norm{x}_2/\nu }  \cdot \Ebb_{\beta \sim \mathcal{N}(0, \norm{y}_2^2)}\left[ \sigma(\beta)^2 \right] }\nonumber\\
    &\le \sqrt{ \frac{ \varepsilon \cdot \nu }{\norm{x}_2 }  \cdot \frac{\nu}{\norm{y}_2} \Ebb_{\beta \sim \mathcal{N}(0, \nu^2)}\left[ \sigma(\beta)^2 \right] },\label{dual-ker-diffbound-first-term-bound}
\end{align}
where the first line follows from the Cauchy–Schwarz inequality
and the third line above follows from the assumption that $\normx, \normy \neq 0$ and the last line follows from the precondition of \cref{thm-dual-kernel-expansion}. 

Similarly, we can bound \cref{dual-ker-diffbound-second-term}, as follows,
\begin{align}
    &\left| \Ebb_w\left[ \left( \sigma(\inner{w,y}) - \widetilde{\sigma}(\inner{w,y}) \right) \cdot \widetilde{\sigma}(\inner{w,x}) \right] \right|\nonumber\\ 
    &\qquad\le \sqrt{\Ebb_w\left[ \abs{\left( \sigma(\inner{w,y}) - \widetilde{\sigma}(\inner{w,y}) \right)}^2  \right] \cdot \Ebb_w\left[ \abs{\widetilde{\sigma}(\inner{w,x})}^2  \right] }\nonumber\\
    &\qquad= \sqrt{ \Ebb_{\alpha \sim \mathcal{N}(0, \norm{y}_2^2)}\left[ \left| \sigma(\alpha) - \widetilde{\sigma}(\alpha) \right|^2 \right] \cdot \Ebb_{\beta \sim \mathcal{N}(0, \norm{x}_2^2)}\left[ \abs{\widetilde{\sigma}(\beta)}^2 \right] }\nonumber\\
    &\qquad\le \sqrt{ \frac{ \Ebb_{\alpha \sim \mathcal{N}(0, \nu^2)}\left[ \left| \sigma(\alpha) - \widetilde{\sigma}(\alpha) \right|^2 \right]}{ \norm{y}_2/\nu }  \cdot  \Ebb_{\beta \sim \mathcal{N}(0, \norm{x}_2^2)}\left[\abs{ \widetilde{\sigma}(\beta)}^2 \right]  }\nonumber \\
    &\qquad\le \sqrt{ \frac{ \varepsilon \cdot  \nu }{ \norm{y}_2 } \cdot  \Ebb_{\beta \sim \mathcal{N}(0, \norm{x}_2^2)}\left[2 \abs{\sigma(\beta)}^2 + 2 \abs{\widetilde{\sigma}(\beta) - \sigma(\beta)}^2 \right] }\nonumber\\
    &\qquad \le \sqrt{ \frac{ \varepsilon \cdot  \nu }{ \norm{y}_2 } \cdot  \frac{\nu }{ \norm{x}_2 }  \Ebb_{\beta \sim \mathcal{N}(0, \nu^2)}\left[2 \abs{\sigma(\beta)}^2 + 2 \abs{\widetilde{\sigma}(\beta) - \sigma(\beta)}^2 \right] }\nonumber\\
    &\qquad \le \sqrt{ \frac{ \varepsilon \cdot  \nu }{ \norm{y}_2 } \cdot  \frac{2 \nu (\varepsilon + \Ebb_{\beta \sim \mathcal{N}(0, \nu^2)}[ \abs{\sigma(\beta)}^2])}{ \normx }  }, \label{dual-ker-diffbound-second-term-bound}
\end{align}
where the fourth line above follows from the assumption that $\normx, \normy \neq 0$, the fifth line above follows from the AM-GM inequality along with the the preconditions of \cref{thm-dual-kernel-expansion}, and the
last equality above follows from the preconditions of \cref{thm-dual-kernel-expansion}. 

Now by plugging \cref{dual-ker-diffbound-first-term-bound} and \cref{dual-ker-diffbound-second-term-bound} back into \cref{dual-ker-diffbound-first-term} and \cref{dual-ker-diffbound-second-term} we find that,
\begin{align*}
    \abs{K_\sigma(x,y) - K_{\widetilde{\sigma}}(x,y)}
    &\le \sqrt{ \frac{ \varepsilon \cdot \nu^2 \cdot \Ebb_{\beta \sim \mathcal{N}(0, \nu^2)}\left[ \sigma(\beta)^2 \right]}{\norm{x}_2 \cdot \norm{y}_2} }\left( 1 + \sqrt{ 2 + \frac{2 \varepsilon}{\Ebb_{\beta \sim \mathcal{N}(0, \nu^2)}\left[ \sigma(\beta)^2 \right] } } \right) \\
    &\le \sqrt{ \frac{ \varepsilon \cdot \nu^2 \cdot \Ebb_{\beta \sim \mathcal{N}(0, \nu^2)}\left[ \sigma(\beta)^2 \right]}{\normx \cdot \normy} } \cdot \sqrt{ 6 + \frac{4 \varepsilon}{\Ebb_{\beta \sim \mathcal{N}(0, \nu^2)}\left[ \sigma(\beta)^2 \right] } }\\
    &= \sqrt{ \frac{ \varepsilon \cdot \nu^2}{\norm{x}_2 \cdot \norm{y}_2} \cdot \left( 6\Ebb_{\beta \sim \mathcal{N}(0, \nu^2)}\left[ \sigma(\beta)^2 \right] + 4 \varepsilon \right)}.
\end{align*}
This completes the proof of \cref{thm-dual-kernel-expansion}.
\end{proofof}

\paragraph{Examples for Taylor expansion.}  
Observe that $\sigma(t) = \sin(t)$ is analytic and has a Taylor expansion with coefficients $a_{\ell+2i} = \frac{(-1)^{\frac{\ell+2i-1}{2}}}{(\ell+2i)!} \cdot \mathbbm{1}(\ell \text{ is odd})$. By invoking \cref{thm-dual-kernel-expansion} we have, 
\begin{align}
    r_{\sigma,\ell}(t) = \mathbbm{1}(\ell \text{ is odd})\cdot \sum_{i=0}^\infty \frac{(-1)^{\frac{\ell+2i-1}{2}}}{(\ell+2i)!} \frac{(\ell+2i)!}{2^i \cdot i! \cdot \sqrt{\ell!}} t^{2i+\ell}
    = \mathbbm{1}(\ell \text{ is odd})\cdot \frac{(-1)^{\frac{\ell-1}{2}} t^\ell}{\sqrt{\ell!}} \cdot e^{-\frac{t^2}{2}}.
\end{align}
Therefore,
\begin{align}
    K_{\sin}(x,y) = \sum_{\ell=0}^\infty e^{-\frac{\norm{x}_2^2}{2}} \cdot e^{-\frac{\norm{y}_2^2}{2}} \cdot \frac{\inner{x,y}^{2\ell+1}}{(2\ell+1)!}
    = e^{-\frac{\norm{x}_2^2 + \norm{y}_2^2}{2}} \sinh(\inner{x,y}).
\end{align}
Similarly, we can derive that $K_{\cos}(x,y) = e^{-\frac{\norm{x}_2^2 + \norm{y}_2^2}{2}} \cosh(\inner{x,y})$ that corresponds to \cref{tab-dual-kernel}.
% \paragraph{Approximation by Hermite Expansion.}

\subsection{Proof of \cref{thm-dual-kernel-error-hermite-expansion}} \label{sec-proof-thm-dual-kernel-error-hermite-expansion}

\dualkernelerrorhermiteexpansion*

In order to prove this theorem we first need to establish the following bound on the decay rate of the Hermite expansion coefficients of smooth functions,
\begin{restatable}{lemma}{lmmhermiteexpansioncoefficients} \label{lmm-hermite-expansion-coefficients}
Suppose that there exists an integer $k \geq 0$ such that for every $i=0,\dots,k$, $\sigma^{(i)}(t)$ are absolutely continuous in $\Rbb$ and $\lim_{t\rightarrow \pm\infty} e^{-\frac{t^2}{4}} \sigma^{(i)}(t) = 0$. Assume that $\norm{\sigma}_{\mathcal{N}}^2  < \infty$ and  $\norm{\sigma^{(k)}}_{\mathcal{N}}^2 < \infty$.
Let $\{c_j\}_{j=0}^\infty$ be the Hermite expansion coefficients of this function such that $\norm{\sigma - \sum_{j=0}^\infty c_j h_j}_{\mathcal{N}} = 0$. Then, for any integer $j \ge k$:
\begin{align}
    \abs{c_j} \leq \norm{\sigma^{(k)}}_{\mathcal{N}} \frac{\sqrt{(j-k)!}}{j!}. \label{eq-hermite-expansion-coefficients}
\end{align}
\end{restatable}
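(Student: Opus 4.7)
\textbf{Proof plan for \cref{lmm-hermite-expansion-coefficients}.} The plan is to derive a closed-form expression for $c_j$ in terms of the $k$-th derivative $\sigma^{(k)}$ by repeated integration by parts against the Rodrigues' representation of the Hermite polynomials, and then apply Cauchy--Schwarz. Starting from the defining formula for the Hermite coefficients, $c_j = \frac{1}{j!}\mathbb{E}_{t \sim \mathcal{N}(0,1)}[\sigma(t) h_j(t)]$, I would substitute the Rodrigues formula $h_j(t) e^{-t^2/2} = (-1)^j \frac{d^j}{dt^j} e^{-t^2/2}$ (which is immediate from \cref{eq-defn-hermite-polynomial}) to obtain
\begin{equation*}
    c_j = \frac{(-1)^j}{j!\sqrt{2\pi}} \int_{-\infty}^{\infty} \sigma(t)\, \frac{d^j}{dt^j}\!\left( e^{-t^2/2}\right) dt.
\end{equation*}

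Next, I would integrate by parts $k$ times, transferring the $k$ derivatives onto $\sigma$. The boundary term arising at the $i$-th step has the form $\sigma^{(i)}(t) \cdot \frac{d^{j-i-1}}{dt^{j-i-1}} e^{-t^2/2}$, and since $\frac{d^m}{dt^m} e^{-t^2/2} = (-1)^m h_m(t) e^{-t^2/2}$ is a polynomial in $t$ times $e^{-t^2/2}$, the boundary term is bounded in absolute value by $|\sigma^{(i)}(t)|\cdot |h_{j-i-1}(t)|\cdot e^{-t^2/2}$, which can be rewritten as $\bigl(e^{-t^2/4}|\sigma^{(i)}(t)|\bigr) \cdot \bigl(|h_{j-i-1}(t)| e^{-t^2/4}\bigr)$; the first factor vanishes at $\pm\infty$ by hypothesis and the second is bounded, so the boundary term is zero. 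This yields
\begin{equation*}
    c_j = \frac{(-1)^{j-k}}{j!\sqrt{2\pi}} \int_{-\infty}^{\infty} \sigma^{(k)}(t)\, \frac{d^{j-k}}{dt^{j-k}}\!\left( e^{-t^2/2}\right) dt = \frac{1}{j!}\, \mathbb{E}_{t \sim \mathcal{N}(0,1)}\!\bigl[\sigma^{(k)}(t)\, h_{j-k}(t)\bigr].
\end{equation*}

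Finally, applying the Cauchy--Schwarz inequality together with the orthogonality identity $\mathbb{E}_{t \sim \mathcal{N}(0,1)}[h_{j-k}(t)^2] = (j-k)!$ gives
\begin{equation*}
    |c_j| \le \frac{1}{j!} \norm{\sigma^{(k)}}_{\mathcal{N}} \sqrt{(j-k)!},
\end{equation*}
which is exactly \cref{eq-hermite-expansion-coefficients}.

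The only subtle step is justifying the $k$ integrations by parts, i.e., verifying that all boundary contributions vanish at $\pm\infty$. The hypothesis $\lim_{t \to \pm\infty} e^{-t^2/4}\sigma^{(i)}(t) = 0$ for every $0 \le i \le k$, combined with the fact that $\frac{d^m}{dt^m} e^{-t^2/2}$ equals a polynomial times $e^{-t^2/2}$, makes this routine; the polynomial growth of $h_m$ is dominated by the extra $e^{-t^2/4}$ factor we have in reserve. The finiteness hypotheses $\norm{\sigma}_{\mathcal{N}}^2 < \infty$ and $\norm{\sigma^{(k)}}_{\mathcal{N}}^2 < \infty$ ensure all integrals (including the intermediate ones via Cauchy--Schwarz and interpolation) are absolutely convergent and the manipulations are legitimate. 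No other difficulty is expected.
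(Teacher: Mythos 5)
Your proposal is correct and follows essentially the same route as the paper's proof: both substitute the Rodrigues representation of $h_j$, integrate by parts $k$ times while killing the boundary terms via the hypothesis $\lim_{t\to\pm\infty} e^{-t^2/4}\sigma^{(i)}(t)=0$ together with the boundedness of $e^{-t^2/4}h_m(t)$ (which the paper justifies by the Abramowitz--Stegun bound $|e^{-t^2/4}h_m(t)|\le a_0\sqrt{m!}$), and then finish with Cauchy--Schwarz and $\Ebb[h_{j-k}^2]=(j-k)!$. No gaps.
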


The proof of \cref{lmm-hermite-expansion-coefficients} is provided in \cref{sec-proof-lmm-hermite-expansion-coefficients}. 

\begin{proofof}{\cref{thm-dual-kernel-error-hermite-expansion}}
First, because of the precondition of \cref{thm-dual-kernel-error-hermite-expansion} about $\norm{\sigma}^2_{\mathcal{N}(0,\nu^2)} = \Ebb_{t\sim \mathcal{N}(0,\nu^2)}\left[\abs{\sigma(t)}^2 \right] = \Ebb_{t\sim \mathcal{N}(0,1)}\left[\abs{\sigma(\nu t)}^2 \right] < \infty$, the function $\sigma(\nu t)$ is an $L^2$ function with respect to the normal measure $\mathcal{N}(0,1)$ on the real line. Therefore, because the Hermite polynomials $\{h_j\}_{j=0}^\infty$ provide an orthogonal basis for $L^2$ function with respect to normal measure $\mathcal{N}(0,1)$, $\sigma(\nu t)$ converges to its Hermite expansion, i.e., $\Ebb_{t\sim \mathcal{N}(0,1)}\left[ \abs{\sigma(\nu t) - \sum_{j=0}^\infty c_j h_j(t)}^2 \right] = 0$.
We obtain an error bound on the dual kernel by invoking \cref{thm-dual-kernel-expansion}. To do so, we need to first upper bound $\Ebb_{t\sim \mathcal{N}\left( 0, \nu^2 \right)} \left[ \abs{\sigma(t) - \tsigma(t)}^2 \right]$, as follows
\begin{align}
\Ebb_{t\sim \mathcal{N}\left( 0, \nu^2 \right)} \left[ \abs{\sigma(t) - \tsigma(t)}^2 \right]
    &= \Ebb_{t\sim \mathcal{N}\left( 0, 1 \right)} \left[ \abs{\sigma(\nu t) - \tsigma(\nu t)}^2 \right]\nonumber\\
    &= \sum_{j=q+1}^\infty |c_j|^2 \cdot \Ebb_{t\sim \mathcal{N}\left( 0, 1 \right)} \left[ \abs{h_j(t)}^2 \right]\nonumber\\
    &= \sum_{j=q+1}^\infty |c_j|^2 \cdot j!, \label{bound-polynomial-difference}
\end{align}
where the second line above follows from the fact that $h_j$'s are orthogonal with respect to the normal measure $\mathcal{N}\left( 0, 1 \right)$. The third line follows from the fact that $\norm{h_j}_{\mathcal{N}}^2=j!$.

We now proceed to upper bound the term in \cref{bound-polynomial-difference}, using the bound on the Hermite expansion coefficients we proved in \cref{lmm-hermite-expansion-coefficients}. We apply this lemma to the function $\sigma(\nu t)$ whose Hermite expansion coefficients are $\{c_i\}_{i=0}^\infty$. By precondition of \cref{thm-dual-kernel-error-hermite-expansion} we have $\norm{\sigma^{(k)}}_{\mathcal{N}}^2 = \Ebb_{t \sim \mathcal{N}(0,\nu^2)}\left[ \abs{\sigma^{(k)}(t)}^2 \right] < \infty$. This implies that,
\[
\Ebb_{t \sim \mathcal{N}(0,1)}\left[ \abs{\frac{d^k}{dt^k}\sigma(\nu t)}^2 \right] = \nu^{2k} \cdot \Ebb_{t \sim \mathcal{N}(0,1)}\left[ \abs{\frac{d^k}{d(\nu t)^k}\sigma(\nu t)}^2 \right] = \nu^{2k} \cdot \norm{\sigma^{(k)}}_{\mathcal{N}(0,\nu^2)}^2 < \infty.
\]
Furthermore, the precondition of \cref{thm-dual-kernel-error-hermite-expansion} about $\lim_{t\rightarrow \pm\infty} e^{-\frac{t^2}{4}} \sigma^{(i)}(\nu t) = 0$ implies the following,
\[
\lim_{t\rightarrow \pm\infty} e^{-\frac{t^2}{4}} \frac{d^i}{dt^i}\sigma(\nu t) = \nu^i \cdot \lim_{t\rightarrow \pm\infty} e^{-\frac{t^2}{4}} \frac{d^i}{d(\nu t)^i}\sigma(\nu t) = 0.
\]
Therefore, the preconditions of \cref{lmm-hermite-expansion-coefficients} are satisfied and by invoking this lemma we have the following inequality for any integer $j \ge k$,
\[
\abs{c_j} \leq \sqrt{\Ebb_{t \sim \mathcal{N}(0,1)}\left[ \abs{\frac{d^k}{dt^k}\sigma(\nu t)}^2 \right] } \cdot \frac{\sqrt{(j-k)!}}{j!} = \nu^k \cdot \norm{\sigma^{(k)} }_{\mathcal{N}(0,\nu^2)} \cdot \frac{\sqrt{(j-k)!}}{j!}.
\]
Plugging the above inequality into \cref{eq-hermite-expansion-coefficients} into \cref{bound-polynomial-difference}, gives
\begin{align}
    \Ebb_{t\sim \mathcal{N}\left( 0, \nu^2 \right)} \left[ \abs{\sigma(t) - \tsigma(t)}^2 \right]
    &= \sum_{j=q+1}^\infty |c_j|^2 \cdot j! \nonumber \\
    &\leq \nu^{2k} \cdot \norm{\sigma^{(k)} }_{\mathcal{N}(0,\nu^2)}^2 \cdot \sum_{j=q+1}^\infty \frac{(j-k)!}{j!} \nonumber \\
	&= \frac{\nu^{2k} \cdot \norm{\sigma^{(k)} }_{\mathcal{N}(0,\nu^2)}^2}{k-1} \cdot \frac{1}{q (q-1) \cdots (q-k+2)} \nonumber \\
	&\leq\frac{\nu^{2k} \cdot \norm{\sigma^{(k)} }_{\mathcal{N}(0,\nu^2)}^2}{k \cdot q^{k-1}} \label{eq-truncate-error-norm-bound}
\end{align}
Thus we can now invoke \cref{thm-dual-kernel-expansion} with $\varepsilon = \frac{\nu^{2k} \cdot \norm{\sigma^{(k)} }_{\mathcal{N}(0,\nu^2)}^2 }{k \cdot q^{k-1}}$ to find that
\begin{align*}
    &\left| K_\sigma(x,y) - K_{\tsigma}(x,y) \right| \\
    &\le \sqrt{ \frac{ \varepsilon \cdot \nu^2}{\norm{x}_2 \norm{y}_2}  \left( 6\norm{\sigma}_{\mathcal{N}(0,\nu^2)}^2  + 4 \varepsilon \right) }\\
	&\le \sqrt{ \frac{  \nu^2}{\norm{x}_2 \norm{y}_2}  \left( 6\norm{\sigma}_{\mathcal{N}(0,\nu^2)}^2   \frac{\nu^{2k} \cdot \norm{\sigma^{(k)} }_{\mathcal{N}(0,\nu^2)}^2 }{k \cdot q^{k-1}} + 4 \left( \frac{\nu^{2k} \cdot \norm{\sigma^{(k)} }_{\mathcal{N}(0,\nu^2)}^2 }{k \cdot q^{k-1}} \right)^2 \right) }\\ 
	&\leq  \sqrt{ \frac{  \nu^2}{\norm{x}_2 \norm{y}_2}}  \left( \sqrt{ 6\norm{\sigma}_{\mathcal{N}(0,\nu^2)}^2   \frac{\nu^{2k} \cdot \norm{\sigma^{(k)} }_{\mathcal{N}(0,\nu^2)}^2 }{k \cdot q^{k-1}} } + \sqrt{4 \left( \frac{\nu^{2k} \cdot \norm{\sigma^{(k)} }_{\mathcal{N}(0,\nu^2)}^2 }{k \cdot q^{k-1}} \right)^2 }\right) 
	\\
	&\leq  \frac{  \nu}{\sqrt{  \norm{x}_2 \norm{y}_2}}  \left( \norm{\sigma}_{\mathcal{N}(0,\nu^2)} \nu^{k}  \norm{\sigma^{(k)} }_{\mathcal{N}(0,\nu^2)} \sqrt{   \frac{6  }{k \cdot q^{k-1}} } + 2 \left( \frac{\nu^{2k} \cdot \norm{\sigma^{(k)} }_{\mathcal{N}(0,\nu^2)}^2 }{k \cdot q^{k-1}} \right) \right) \\
	&\leq  \frac{  \nu^{k+1} \norm{\sigma^{(k)} }_{\mathcal{N}(0,\nu^2)}}{\sqrt{ \norm{x}_2 \norm{y}_2}}  \max\left( \norm{\sigma}_{\mathcal{N}(0,\nu^2)} ,  \nu^{k}  \norm{\sigma^{(k)} }_{\mathcal{N}(0,\nu^2)}\right) \left( \sqrt{   \frac{6  }{k \cdot q^{k-1}} } + \frac{2 }{k \cdot q^{k-1}} \right) \\	
	&\leq  \frac{  5 \nu^{k+1} \norm{\sigma^{(k)}  }_{\mathcal{N}(0,\nu^2)}  \max\left( \norm{\sigma}_{\mathcal{N}(0,\nu^2)} ,  \nu^{k}  \norm{\sigma^{(k)} }_{\mathcal{N}(0,\nu^2)}\right)  }{\sqrt{ \norm{x}_2 \norm{y}_2 \cdot k \cdot q^{k-1}}}.
    % &\le \gamma.
\end{align*}

Now we prove the second statement of the theorem about the ReLU activation $\sigma(t) = \max(t,0)$. It is easy to check that for this function
\begin{align}
    \norm{\sigma}_{\mathcal{N}(0,\nu^2)}^2 = \Ebb_{t\sim \mathcal{N}(0,\nu^2)}[\abs{\sigma(t)}^2] = \frac{\nu^2}{\sqrt{2\pi}} \int_{0}^\infty t^2 \cdot e^{-\frac{t^2}{2}} dt= \frac{\nu^2}{2}. \label{eq-relu-hermite-norm}
\end{align}
Furthermore for any $j \geq 0$, the Hermite coefficients of $\sigma(\nu t)$ are
\begin{align*}
c_j = \frac{1}{\sqrt{2 \pi} j!}\int_{-\infty}^\infty \max(\nu t,0) \cdot h_j(t) \cdot e^{-\frac{t^2}{2}} dt
= \frac{\nu}{\sqrt{2 \pi} j!}\int_{0}^\infty t \cdot h_j(t) \cdot e^{-\frac{t^2}{2}} dt 
\end{align*}
Using integration-by-parts and the fact that $h_j'(t) = j h_{j-1}(t)$ for all $j \geq 1$, we get that
\begin{align*}
    \int_{0}^\infty t \cdot h_j(t) \cdot e^{-\frac{t^2}{2}} dt 
    &= h_j(x) \left(-e^{-\frac{t^2}{2}}\right)\bigg\vert_{0}^{\infty} + \int_{0}^\infty h_j'(t) \cdot e^{-\frac{t^2}{2}} dt \\
    &= h_j(0) + \int_{0}^\infty h_j'(t) \cdot e^{-\frac{t^2}{2}} dt \\
    &= h_j(0) + j \int_{0}^\infty h_{j-1}(t) \cdot e^{-\frac{t^2}{2}} dt \\
    &= (-1)^{\frac{j}{2}} \cdot(j-1)!!\cdot \mathbbm{1}_{\{j \text{ is even}\}} + j \cdot(-1)^{\frac{j}{2}-1}\cdot (j-3)!!\cdot \mathbbm{1}_{\{j \text{ is even}\}}\\
    &= (-1)^{\frac{j}{2} - 1} \cdot (j-3)!! \cdot \mathbbm{1}_{\{j \text{ is even}\}}.
\end{align*}
Therefore,
\begin{align}
    \Ebb_{t\sim \mathcal{N}\left( 0, \nu^2 \right)} \left[ \abs{\sigma(t) - \tsigma(t)}^2 \right]
    &= \sum_{j=q+1}^\infty c_j^2 \cdot \sqrt{2 \pi} j! 
    = \sum_{j=q+1}^\infty \nu^2 \cdot \frac{\left( (j-3)!!\right)^2}{\sqrt{2 \pi} j!} = \frac{\nu^2}{\sqrt{2 \pi}(q+1)}, \label{eq-delta-hermite-norm}
\end{align}
By invoking \cref{thm-dual-kernel-expansion}, using \cref{eq-relu-hermite-norm} and \cref{eq-delta-hermite-norm}, we have
\begin{align*}
\left| K_\sigma(x,y) - K_{\tsigma}(x,y) \right| &\le \sqrt{ \frac{ \varepsilon \cdot \nu^2}{\norm{x}_2 \norm{y}_2}  \left( 6\norm{\sigma}_{\mathcal{N}(0,\nu^2)}^2 + 4 \varepsilon \right) }\\
&= \sqrt{ \frac{ \frac{\nu^2}{\sqrt{2 \pi}(q+1)} \cdot \nu^2}{\norm{x}_2 \norm{y}_2}  \left( 3\nu^2  + 4 \frac{\nu^2}{\sqrt{2 \pi}(q+1)} \right) }\\
&\le \sqrt{ \frac{ 2 \nu^6}{(q+1)\norm{x}_2 \norm{y}_2} }
\end{align*}
This completes the proof of \cref{thm-dual-kernel-error-hermite-expansion}.
\end{proofof}

\subsection{Proof of \cref{lmm-hermite-expansion-coefficients}} \label{sec-proof-lmm-hermite-expansion-coefficients}

\lmmhermiteexpansioncoefficients*

\begin{proofof}{\cref{lmm-hermite-expansion-coefficients}}
The proof can be obtained by slightly modifying Theorem 3.1 in~\cite{xiang2012asymptotics}.
The precondition $\norm{\sigma}_{\mathcal{N}}^2 = \Ebb_{t\sim \mathcal{N}(0,1)}\left[\abs{\sigma(t)}^2 \right] < \infty$ implies that $\sigma$ is an $L^2$-function with respect to measure $e^{-\frac{t^2}{2}}$ on real line. Because Hermite polynomials $\{h_j\}_{j=0}^\infty$ form an orthogonal basis for the Hilbert space of $L^2$-functions with respect to normal measure $\mathcal{N}(0,1)$, $\sigma(t)$ converges to its Hermite expansion, i.e., $\sum_{j=0}^\infty c_j h_j(t)$. The $j$-th coefficient in this expansion is
\begin{align}
c_j = \frac{1}{\sqrt{2 \pi} j!} \int_{-\infty}^\infty \sigma(t) \cdot h_j(t) \cdot e^{-\frac{t^2}{2}} dt. 
\end{align}
Using the Rodrigues' expression of Hermite polynomials in \cref{eq-defn-hermite-polynomial} and integration-by-parts, we have,
\begin{align}
	\int_{-\infty}^\infty \sigma(t) h_j(t) e^{-\frac{t^2}{2}} dt
	&= (-1)^j \int_{-\infty}^\infty \sigma(t)  \left[ \frac{d^j}{dt^j} e^{-\frac{t^2}{2}} \right] dt  \nonumber \\
	&= (-1)^j \sigma(t) \left[ \frac{d^{j-1}}{dt^{j-1}} e^{-\frac{t^2}{2}} \right] \Bigg\vert_{-\infty}^{\infty} +  (-1)^{j-1} \int_{-\infty}^\infty \sigma^{(1)}(t) \left[ \frac{d^{j-1}}{dt^{j-1}} e^{-\frac{t^2}{2}} \right] dt \nonumber \\
    &= - \sigma(t) \cdot h_{j-1}(t) \cdot e^{-\frac{t^2}{2}} \Bigg\vert_{-\infty}^{\infty} +  \int_{-\infty}^\infty \sigma^{(1)}(t) h_{j-1}(t)  e^{-\frac{t^2}{2}} dt	,\label{eq-hermite-coeff-intergration-by-parts}
\end{align}
where the last line above follows from the Rodrigues' expression of degree $j-1$ Hermite polynomial in \cref{eq-defn-hermite-polynomial}.
Therefore, using 22.14.17 in \citep{abramowitz1988handbook}\footnote{Equation 22.14.17 in \citep{abramowitz1988handbook} was $| H_j(t)| \leq a_0 e^{\frac{t^2}{2}}  2^{\frac{j}{2}} \sqrt{j!}$ where $a_0 \approx 1.086435$ and $H_j(\cdot)$ is physicist's Hermite polynomial. Using $H_j(t)=2^{\frac{j}{2}} h_j(\sqrt{2}t)$ gives that $|e^{-\frac{t^2}{4}} h_j(t)|\leq a_0 \sqrt{j!}$.}, the first term in \cref{eq-hermite-coeff-intergration-by-parts} is $0$ and by applying the above repeatedly we have
\begin{align}
    \abs{\int_{-\infty}^\infty \sigma(t)\cdot h_j(t) \cdot e^{-\frac{t^2}{2}} dt} 
    &=\abs{ \int_{-\infty}^\infty \sigma^{(1)}(t) \cdot h_{j-1}(t) \cdot e^{-\frac{t^2}{2}} dt} \nonumber \\
    &\qquad \vdots \nonumber \\
    &= \abs{ \int_{-\infty}^\infty \sigma^{(k)}(t) \cdot h_{j-k}(t) \cdot e^{-\frac{t^2}{2}} dt } \nonumber \\
    &\le \sqrt{ \int_{-\infty}^\infty \abs{\sigma^{(k)}(t)}^2 e^{-\frac{t^2}{2}}dt  \cdot  \int_{-\infty}^\infty \abs{h_{j-k}(t)}^2 e^{-\frac{t^2}{2}}dt} \nonumber \\
    % &\le \sqrt{(2\pi)^{-\frac12} \norm{\sigma^{(k)}}_{\mathcal{N}}^2} \cdot \sqrt{(2\pi)^{-\frac12} (j-k)!} \nonumber \\
	&= \sqrt{2 \pi } \cdot \norm{\sigma^{(k)}}_{\mathcal{N}} \cdot \sqrt{(j-k)!}
\end{align}
% {\xlc{it should be ${(2\pi)}^{1/4}$ vs ${(2\pi)}^{1/2}$ as the factor in front of the gaussian is ${2\pi}^{-1/2}$}}\insu{I think the above is correct because $(2\pi)^{1/2}$ factor comes from both terms $\norm{\sigma^{(k)}}_{\mathcal{N}}^2=\frac{1}{\sqrt{2\pi}}\int_{-\infty}^\infty \abs{\sigma^{(k)}(t)}^2 e^{-\frac{t^2}{2}}dt$ and $\frac{1}{\sqrt{2\pi}}\int_{-\infty}^\infty \abs{h_{j-k}(t)}^2 e^{-\frac{t^2}{2}}dt=(j-k)!$.}
where the second last inequality comes from Cauchy-Schwarz inequality and the last one holds from that $\Ebb_{t \sim \mathcal{N}(0,1)}\left[\abs{h_\ell(t)}^2\right] = \ell!$ and the assumption.  This completes the proof of \cref{lmm-hermite-expansion-coefficients}.
\end{proofof}

\subsection{Proof of \cref{lmm-derivative-dual-kernel}} \label{sec-proof-lmm-derivative-dual-kernel}
\lmmderivativedualkernel*
Note that our assumption on $\sigma$ can be weakened to be: there exists $\epsilon>0$ and constants $C_1$ and $C_2$ such that 
\begin{align}
    |\sigma''(t)| \leq C_1 e^{C_2 |t|^{2-\epsilon}}. 
\end{align}

\begin{proofof}{\cref{lmm-derivative-dual-kernel}}
Recall that the dual activation is defined as
\begin{align}
	k_\sigma(a,b,c) := \mathop{\Ebb}_{(u,v)\sim \mathcal{N}(0,{\bm\Lambda})}\left[\sigma(u) \sigma(v)\right].
\end{align}
where for $a,b \in \Rbb_{\geq 0}$ and $c \in [-1,1]$
\begin{align*}
	{\bm\Lambda}:=\begin{bmatrix} a^2 & {ab} c \\ {ab} c & b^2\end{bmatrix} =  \begin{bmatrix} a & 0  \\ bc & b\sqrt{1-c^2} \end{bmatrix}   \begin{bmatrix} a & 0  \\ bc & b\sqrt{1-c^2} \end{bmatrix}^\top
\end{align*}
Using a whitening transformation, we introduce the standard i.i.d. Gaussian random variables $w_1, w_2 \sim \mathcal{N}(0,1)$ that satisfy 
\begin{align*}
	\begin{bmatrix} u \\ v \end{bmatrix} = \begin{bmatrix} a & 0  \\ bc & b\sqrt{1-c^2} \end{bmatrix} \begin{bmatrix} w_1 \\ w_2 \end{bmatrix}.
\end{align*}
Thus, by denoting $w = \begin{bmatrix} w_1 \\ w_2 \end{bmatrix}$, the dual activation can be written as
\begin{align}
    k_{\sigma}(a,b,c) =  \mathop{\Ebb}_{w \sim \mathcal{N}(0,\I_2)}\left[\sigma(a w_1) \cdot \sigma(bc w_1 +  b\sqrt{1-c^2} w_2)\right]. \label{eq-dual-activation-2}
\end{align}

Using \cref{eq-dual-activation-2}, we can calculate $\frac{\partial}{\partial c} k_\sigma(\cdot,\cdot,c)$ if the derivative can be interchangeable with the expectation. To this end, we use the ``measure theory'' statement of \emph{Leibniz integral rule}.
% , which can be found in textbooks such as \cite{klenke2013probability}.

\begin{lemma}[Measure theory statement of Leibniz integral rule, Theorem 6.28 of \cite{klenke2013probability}]\label{leibniz-rule}
    Let $\mu$ be a probability distribution with support $\Omega$, let $I\subset \Rbb$ be a nontrivial open interval, also let $f : \Omega \times I \to \Rbb$ be a map with the following properties:
    \begin{enumerate}
        \item For any $x \in I$, $\Ebb_{w \sim \mu}[|f(w,x)|] < \infty$.
        \item For almost all $w \in \Omega$, the map $x \to f(w,x)$ is differentiable with derivative $\frac{\partial}{\partial x} f(w,x)$.
        \item  There is a map $h: \Omega \to \Rbb$ with the property that $\Ebb_{w \sim \mu}[|h(w)|] < \infty$, such that $\abs{\frac{\partial}{\partial x} f(\cdot,x)} \le h$.
    \end{enumerate}
    Then, for any $x \in I$, $\Ebb_{w \sim \mu}\left[ \abs{ \frac{\partial}{\partial x} f(w,x) } \right] < \infty$ and the function $F: x \to \Ebb_{w \sim \mu}[ f(w,x) ]$ is differentiable with derivative
    \[ 
    F^{\prime}(x) = \Ebb_{w \sim \mu}\left[  \frac{\partial}{\partial x} f(w,x)  \right].
    \]
\end{lemma}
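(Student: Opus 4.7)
The plan is to prove this measure-theoretic Leibniz rule by reducing it to the \emph{dominated convergence theorem} applied to the difference quotient. Fix any $x \in I$, and since $I$ is open, pick $\delta>0$ small enough that $[x-\delta,x+\delta] \subset I$. For any nonzero sequence $h_n \to 0$ with $|h_n| \le \delta$, define the difference quotient
\begin{align*}
g_n(w) := \frac{f(w,x+h_n) - f(w,x)}{h_n}.
\end{align*}
By hypothesis~2, for almost all $w$ the map $x \mapsto f(w,x)$ is differentiable, hence $g_n(w) \to \tfrac{\partial}{\partial x} f(w,x)$ pointwise $\mu$-a.e.

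Next I would establish a uniform integrable majorant. For each $w$ in the full-measure set where $x \mapsto f(w,x)$ is differentiable on $I$, the mean value theorem yields some $\xi_n(w) \in (x-|h_n|,x+|h_n|) \subset I$ such that $g_n(w) = \tfrac{\partial}{\partial x} f(w,\xi_n(w))$. Combined with hypothesis~3, this gives $|g_n(w)| \le h(w)$ for almost every $w$, and $\Ebb_{w \sim \mu}[h(w)] < \infty$ by assumption. Hence $h$ serves as an integrable dominating function for the sequence $\{g_n\}$.

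I can now invoke the dominated convergence theorem to interchange limit and integral:
\begin{align*}
\lim_{n \to \infty} \Ebb_{w \sim \mu}[g_n(w)] = \Ebb_{w \sim \mu}\Big[\tfrac{\partial}{\partial x} f(w,x)\Big].
\end{align*}
The left-hand side equals $\lim_{n \to \infty} \tfrac{F(x+h_n) - F(x)}{h_n}$, using linearity of expectation (valid because hypothesis~1 ensures $F(x),F(x+h_n)$ are finite). Since this holds for every sequence $h_n \to 0$, the limit $F'(x)$ exists and equals $\Ebb_{w \sim \mu}[\tfrac{\partial}{\partial x} f(w,x)]$. Finally, the bound $|\tfrac{\partial}{\partial x} f(w,x)| \le h(w)$ immediately yields $\Ebb_{w \sim \mu}[|\tfrac{\partial}{\partial x} f(w,x)|] \le \Ebb_{w \sim \mu}[h(w)] < \infty$.

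The main subtlety is not technical novelty (this is a standard consequence of dominated convergence) but rather care about the exceptional $\mu$-null sets: the set where $f(w,\cdot)$ fails differentiability and the set where $|\tfrac{\partial}{\partial x} f(w,\cdot)| > h$ must be combined into a single null set on which we redefine $g_n$ arbitrarily (say by zero), so that the dominated convergence hypotheses hold everywhere on the new full-measure set. Beyond that, the argument is the textbook one; in a self-contained writeup I would cite dominated convergence rather than reprove it.
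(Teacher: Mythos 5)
Your proof is correct. Note that the paper does not prove this lemma at all: it is imported verbatim as Theorem~6.28 of the cited reference (Klenke), and your argument --- difference quotients, the mean value theorem to produce the dominating function $h$, and dominated convergence along an arbitrary sequence $h_n \to 0$ --- is precisely the standard textbook proof of that result, so there is nothing to compare beyond agreement. The one point worth being explicit about, which you already flag, is that hypothesis~3 must be read as holding for all $x \in I$ off a single $\mu$-null set (so that the bound applies at the intermediate points $\xi_n(w)$); with that reading your argument is complete.
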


To invoke \cref{leibniz-rule} on the above expression of the dual kernel, we let $I$ be the open interval $(-1+\varepsilon, 1-\varepsilon)$ for an arbitrarily small $\varepsilon>0$ and $f: \Rbb^2 \times I \to \Rbb$ be the function defined as $f\left(w,c\right) := \sigma(a w_1) \cdot \sigma(bc w_1 +  b\sqrt{1-c^2} w_2)$ for some fixed values of $a,b$. 
With these notations in place, we proceed to check if the preconditions of \cref{leibniz-rule} are satisfied. 
To verify the first precondition, we need to show that for any $c \in I$, $\mathop{\Ebb}_{w \sim \mathcal{N}(0,\I_2)}\left[\abs{\sigma(a w_1) \cdot \sigma(bc w_1 +  b\sqrt{1-c^2} w_2) }\right] < \infty$. We find that,
\begin{align}
    \mathop{\Ebb}_{w \sim \mathcal{N}(0,\I_2)}\left[\abs{\sigma(a w_1) \cdot \sigma(bc w_1 +  b\sqrt{1-c^2} w_2) }\right] &\le \sqrt{\mathop{\Ebb}_{w_1}  \left[\abs{\sigma(a w_1)}^2  \right]  \mathop{\Ebb}_{w} \left[\abs{ \sigma \left(bc w_1 + b\sqrt{1-c^2} w_2 \right)}^2  \right]} \nonumber\\
    &= \sqrt{\mathop{\Ebb}_{w_1}  \left[\abs{\sigma(a w_1)}^2  \right] \cdot \mathop{\Ebb}_{\gamma\sim \mathcal{N}(0,1)} \left[\abs{ \sigma \left(b\gamma \right)}^2  \right]}, \label{eq:cauchy-schwarz-dual-kernel-def}
\end{align}
where the first line above follows from Cauchy–Schwarz inequality. The second line above follows from the fact that $w_1$ and $w_2$ are independent copies of the normal random variable $\mathcal{N}(0,1)$, thus the random variable $bc w_1 + b\sqrt{1-c^2} w_2$ is indeed $b \gamma$ for a normal $\gamma \sim \mathcal{N}(0,1)$. 
Now using the preconditions of \cref{lmm-derivative-dual-kernel}, for any $a,b \in (0, \nu]$ we have 
\begin{equation}\label{conseq-precondition-norm-bound}
\mathop{\Ebb}_{w_1\sim \mathcal{N}(0,1)} \left[\abs{\sigma(a w_1 )}^2  \right] \le
\frac{\nu}{a}  \norm{\sigma}_{\mathcal{N}(0,\nu^2)}^2 < \infty , \text{ and } \mathop{\Ebb}_{\gamma\sim \mathcal{N}(0,1)} \left[\abs{ \sigma \left(b\gamma \right)}^2  \right] \le \frac{\nu}{b}  \norm{\sigma}_{\mathcal{N}(0,\nu^2)}^2 < \infty.
\end{equation}
Also in case $a=0$ or $b=0$ we have $\Ebb_{w_1\sim \mathcal{N}(0,1)} \left[\abs{\sigma(a w_1 )}^2  \right] = \abs{\sigma(0)}^2 < \infty$,
therefore, above inequalities along with \cref{eq:cauchy-schwarz-dual-kernel-def} proves the first precondition of \cref{leibniz-rule} for any $a,b \in [0, \nu]$.

To verify that the second precondition of \cref{leibniz-rule} holds,
we show that for almost all $w_1,w_2 \in \Rbb$ the map $c \to \sigma(a w_1) \cdot \sigma(bc w_1 +  b\sqrt{1-c^2} w_2)$ is differentiable.
This holds true because of the assumption of \cref{lmm-derivative-dual-kernel} on about the activation $\sigma(\cdot)$ being differentiable. The derivative of this map is in fact 
\[
\sigma(a w_1) \cdot \sigma^{\prime}(bc w_1 +  b\sqrt{1-c^2} w_2) \cdot \left( b w_1 - \frac{bc}{\sqrt{1-c^2}} w_2 \right).
\]

Finally, we check the third precondition of \cref{leibniz-rule}. Since $|c|<1$, there is an $\varepsilon>0$ such that $c \in (-1+\varepsilon, 1-\varepsilon)$. We have,
\begin{align}
    &\abs{\sigma(a w_1) \cdot \sigma^{\prime}(bc w_1 +  b\sqrt{1-c^2} w_2) \cdot \left( b w_1 - \frac{bc}{\sqrt{1-c^2}} w_2 \right)} \nonumber \\ 
    &\qquad \le \abs{\sigma(a w_1) \cdot \sigma^{\prime}(bc w_1 +  b\sqrt{1-c^2} w_2)} \cdot \left( b |w_1| + \frac{b}{\varepsilon} |w_2| \right) \nonumber\\
    &\qquad \le C_1C_2 \exp \left( \frac{a^2 w_1^2 + ( b c w_1 + b \sqrt{1-c^2}w_2)^2}{4.1\nu^2} \right) \cdot \left( b |w_1| + \frac{b}{\varepsilon} |w_2| \right)\nonumber \\
    &\qquad \le C_1C_2 \exp \left( \frac{(a^2+b^2)w_1^2 + b^2 w_2^2}{4.1\nu^2} \right) \cdot \left( b |w_1| + \frac{b}{\varepsilon} |w_2| \right)\nonumber \\
    &\qquad \le C_1C_2 \exp \left( \frac{w_1^2 + w_2^2}{2.05} \right) \cdot \left( b |w_1| + \frac{b}{\varepsilon} |w_2| \right) =: h(w),\label{eq-dominant-bound}
\end{align}
where the second inequality follows from the preconditions of \cref{lmm-derivative-dual-kernel} about the upper bounds on $\sigma(\cdot)$ and $\sigma^{\prime}(\cdot)$,
the third one follows from $(cw_1 + \sqrt{1-c^2} w_2)^2 \leq w_1^2 + w_2^2$, and the fourth one follows from $a,b \leq \nu$. Now it is easy to check that this upper bound function satisfies $\Ebb_{w \sim \mathcal{N}(0,{\bm I}_2)}[|h(w)|] < \infty$.

Therefore, we can invoke \cref{leibniz-rule} to calculate the derivative of the dual kernel $k_{\sigma}(a,b,c)$ with respect to $c$ as follows,

\begin{align}
	\frac{\partial}{\partial c} k_{\sigma}(a,b,c) 
	&= \frac{\partial}{\partial c} ~ \mathop{\Ebb}_{w \sim \mathcal{N}(0,\I_2)}\left[\sigma(a w_1) \cdot \sigma(bc w_1 +  b\sqrt{1-c^2} w_2)\right] \nonumber \\
	&=  \mathop{\Ebb}_{w}\left[ \frac{\partial}{\partial c} \left( \sigma(a w_1) \cdot \sigma(bc w_1 +  b\sqrt{1-c^2} w_2)\right)\right] \nonumber \\
	&=\mathop{\Ebb}_{w}  \left[ \sigma(a w_1)  \cdot \sigma' \left(bc w_1 + b\sqrt{1-c^2} w_2 \right) \cdot b w_1 \right] \label{derivative-c-first-term}\\
	&\qquad \qquad - \mathop{\Ebb}_{w}  \left[ \sigma(a w_1) \cdot \sigma' \left(bc w_1 + b\sqrt{1-c^2} w_2 \right) \frac{bc}{\sqrt{1-c^2}} w_2 \right].\label{derivative-c-second-term}
\end{align}

Next we compute \cref{derivative-c-first-term,derivative-c-second-term} by using Stein's lemma, 
\begin{lemma}[Stein's Lemma] For a differentiable function $\phi:\Rbb\rightarrow \Rbb$ with $\Ebb_{x\sim\mathcal N(0,1)} \left[|\phi'(x)|\right] <\infty$, 
	\begin{align*}
		\Ebb_{x\sim\mathcal N(0,1)} \left[\phi(x) x\right] = \Ebb_{x\sim\mathcal N(0,1)} \left[\phi'(x)\right] \,.   
	\end{align*}
\end{lemma}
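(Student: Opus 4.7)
The plan is to prove Stein's lemma via a Fubini-style argument, which cleanly sidesteps the delicate boundary-term analysis at infinity that a naive integration-by-parts proof would require. Let $\gamma(x) = \frac{1}{\sqrt{2\pi}} e^{-x^2/2}$ denote the standard normal density. The key observation is that $\gamma'(s) = -s\gamma(s)$, which upon integrating from $t$ to $\infty$ yields the identity $\gamma(t) = \int_t^{\infty} s\gamma(s) ds$, valid for every $t \in \Rbb$; equivalently $\gamma(t) = -\int_{-\infty}^t s\gamma(s) ds$, since $\int_{-\infty}^\infty s\gamma(s) ds = \Ebb[X] = 0$.

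Next, I would rewrite $\Ebb[\phi'(X)] = \int_{-\infty}^{\infty} \phi'(t)\gamma(t) dt$ by substituting this tail representation of $\gamma(t)$ separately on the positive and negative half-lines, producing a double integral in the $(s,t)$-plane over the wedges $\{0 \le t \le s\}$ and $\{s \le t \le 0\}$. I would then apply Fubini's theorem to swap the order of integration and collapse the inner integrals via the fundamental theorem of calculus to $\int_0^s \phi'(t) dt = \phi(s) - \phi(0)$ and $\int_s^0 \phi'(t) dt = \phi(0) - \phi(s)$. Assembling the two contributions and using $\Ebb[X]=0$ to discard the $\phi(0)$ terms leaves exactly $\int_{-\infty}^{\infty} s\gamma(s)\phi(s) ds = \Ebb[X\phi(X)]$, which is the desired identity.

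The main step requiring care is justifying Fubini. This is where the hypothesis $\Ebb|\phi'(X)| < \infty$ is essential: replacing $\phi'$ by $|\phi'|$ in the positive-wedge double integral and performing the inner $s$-integration via the same identity $\int_t^\infty s\gamma(s) ds = \gamma(t)$ produces $\int_0^\infty |\phi'(t)|\gamma(t) dt \le \Ebb|\phi'(X)| < \infty$, with a symmetric estimate on the negative wedge. As a by-product the Fubini swap also certifies absolute integrability of $s\gamma(s)\phi(s)$, so the expectation $\Ebb[X\phi(X)]$ on the left-hand side is itself well-defined and the stated equality is meaningful. A more classical integration-by-parts route $\int_{-M}^M \phi(x)x\gamma(x) dx = [-\phi(x)\gamma(x)]_{-M}^M + \int_{-M}^M \phi'(x)\gamma(x) dx$ would instead produce boundary terms $-\phi(\pm M)\gamma(\pm M)$ that are nontrivial to vanish without an extra Gaussian-tail bound on $\phi$ itself, which is delicate to extract from integrability of $\phi'$ alone; the Fubini route avoids this complication entirely.
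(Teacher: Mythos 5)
Your proof is correct. Note first that the paper does not actually prove this lemma: Stein's Lemma is stated and invoked as a classical result (inside the proof of the theorem on the dual kernel of $\sigma'$) without any argument, so there is no in-paper proof to compare against. Your Fubini route is the standard "right" proof under exactly the stated hypothesis: the tail identity $\gamma(t)=\int_t^{\infty}s\gamma(s)\,ds$ (and its mirror on the negative half-line) converts $\Ebb[\phi'(X)]$ into a double integral over the two wedges, Tonelli applied to $|\phi'|$ reduces the integrability check to $\Ebb|\phi'(X)|<\infty$, and the swap plus the fundamental theorem of calculus yields $\Ebb[X\phi(X)]$ after the $\phi(0)$ terms cancel against $\Ebb[X]=0$. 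Your observation that this simultaneously certifies $\Ebb|X\phi(X)|<\infty$ is a genuine bonus, since the left-hand side of the identity is not obviously well-defined from the hypotheses alone. Two minor points worth recording if you write this out in full: (i) the step $\int_0^s\phi'(t)\,dt=\phi(s)-\phi(0)$ for a merely everywhere-differentiable $\phi$ with locally integrable $\phi'$ relies on the theorem that such a function is absolutely continuous on compacts (e.g., Rudin, \emph{Real and Complex Analysis}, Thm.~7.21), which your hypotheses do supply since $\Ebb|\phi'(X)|<\infty$ forces local integrability of $\phi'$; (ii) in the context where the paper applies the lemma, $\phi$ is a product involving $\sigma$ and $\sigma'$ whose integrability is separately verified via the Gaussian-tail bounds $|\sigma(t)|\le C_1\exp(t^2/(4.1\nu^2))$, so your minimal-hypothesis version is exactly what is needed there.
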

% Stein's lemma can be proved by a standard integration-by-parts argument.
Applying Stein's Lemma to \cref{derivative-c-first-term} gives, 
\begin{align}
& \mathop{\Ebb}_{w}  \left[ \sigma(a w_1) \cdot \sigma' \left(bc w_1 + b\sqrt{1-c^2} w_2 \right) \cdot b w_1 \right] \nonumber \\
 &= ab \mathop{\Ebb}_{w}  \left[ \sigma'(a w_1) \cdot \sigma' \left(bc w_1 + b\sqrt{1-c^2} w_2 \right) \right] 
 + b^2 c \mathop{\Ebb}_{w}  \left[ \sigma(a w_1)\cdot \sigma^{\prime\prime} \left(bc w_1 + b\sqrt{1-c^2} w_2 \right) \right].\label{eq:stein-first-term}
\end{align}
Applying Stein's Lemma to \cref{derivative-c-second-term} gives,
\begin{align}
&-\mathop{\Ebb}_{w}  \left[ \sigma(a w_1) \cdot \sigma' \left(bc w_1 + b\sqrt{1-c^2} w_2 \right) \frac{bc}{\sqrt{1-c^2}} w_2 \right] \nonumber \\
&= -b^2c\mathop{\Ebb}_{w}  \left[\sigma(a w_1) \cdot \sigma^{\prime\prime} \left(bc w_1 + b\sqrt{1-c^2} w_2 \right)  \right].\label{eq:stein-second-term}
\end{align}
Here we show that the term $b^2c\mathop{\Ebb}_{w}  \left[\sigma(a w_1) \cdot \sigma^{\prime\prime} \left(bc w_1 + b\sqrt{1-c^2} w_2 \right)  \right]$ in \cref{eq:stein-first-term,eq:stein-second-term} has a bounded value as follows,
\begin{align*}
    \abs{b^2c\mathop{\Ebb}_{w}  \left[\sigma(a w_1) \cdot \sigma^{\prime\prime} \left(bc w_1 + b\sqrt{1-c^2} w_2 \right)  \right]} &\le |b^2c| \cdot \sqrt{\mathop{\Ebb}_{w_1}  \left[\abs{\sigma(aw_1)}^2  \right]  \mathop{\Ebb}_{w}  \left[\abs{ \sigma^{\prime\prime} \left(bcw_1 + b\sqrt{1-c^2}w_2 \right)}^2  \right]} \\
    &= |b^2c| \cdot \sqrt{\mathop{\Ebb}_{w_1}  \left[\abs{\sigma(aw_1)}^2  \right] \cdot \mathop{\Ebb}_{\gamma\sim \mathcal{N}(0,1)}  \left[\abs{ \sigma^{\prime\prime} \left(b\gamma \right)}^2  \right]},
\end{align*}
where the first line above follows from Cauchy–Schwarz inequality. The second line above follows from the fact that $w_1$ and $w_2$ are independent copies of the normal random variable $\mathcal{N}(0,1)$, thus the random variable $bcw_1 + b\sqrt{1-c^2}w_2$ is indeed $b \gamma$ for a normal $\gamma \sim \mathcal{N}(0,1)$. 
Therefore, in order for the expectation $b^2c\mathop{\Ebb}_{w}  \left[\sigma(a w_1) \cdot \sigma^{\prime\prime} \left(bc w_1 + b\sqrt{1-c^2} w_2 \right)  \right]$ to make sense, it is enough to have $b^2 \mathop{\Ebb}_{w_1}  \left[\abs{\sigma(aw_1)}^2  \right] < \infty$ and $b^2 \mathop{\Ebb}_{\gamma} \left[\abs{ \sigma^{\prime\prime} \left(b\gamma \right)}^2  \right] < \infty$. Note that the dual activation is symmetric with respect to swapping $a$ and $b$, in the sense that $k_\sigma(a,b,c)=k_\sigma(b,a,c)$. Thus, we can without loss of generality assume that $b \le a$. With this assumption $b^2 \mathop{\Ebb}_{w_1}  \left[\abs{\sigma(aw_1)}^2  \right] \le a^2 \mathop{\Ebb}_{w_1}  \left[\abs{\sigma(aw_1)}^2  \right]$.
Now, by recalling \cref{conseq-precondition-norm-bound}, we have $a^2 \mathop{\Ebb}_{w_1}  \left[\abs{\sigma(aw_1)}^2  \right] \le a \nu \cdot \norm{\sigma}_{\mathcal{N}(0,\nu^2)}^2 < \infty$ and $b^2 \mathop{\Ebb}_{\gamma} \left[\abs{ \sigma^{\prime\prime} \left(b\gamma \right)}^2  \right] \le b \nu \cdot \norm{\sigma^{\prime\prime}}_{\mathcal{N}(0,\nu^2)}^2 < \infty$. 
% To prove \cref{lmm-derivative-dual-kernel}, we plug in the values $a = \norm{x}_2$ and $b=\norm{y}_2$, which satisfy $a,b \le \nu$, thus both of these conditions hold and the proof follows.

Summing \cref{eq:stein-first-term,eq:stein-second-term} and dividing the sum by $ab$ give that
\begin{align} \label{eq-derivative-dual-kernel-2}
    \frac{1}{ab} ~ \frac{\partial}{\partial c} k_{\sigma}(a,b,c) = \mathop{\Ebb}_{w\sim\mathcal{N}(0,\I_2)}\left[ \sigma^\prime(a w_1) \cdot \sigma^\prime \left(bc w_1 + b\sqrt{1-c^2} w_2 \right)\right].
\end{align}
Finally, plugging in the values $a = \normx,b=\normy$ and $c=\frac{\inner{x,y}}{\normx\normy}$ such that $a,b \le \nu$ and $\abs{c} < 1$ and using \cref{eq-dual-kernel-dual-activation} result in \cref{eq-derivative-dual-kernel}.
%thus both of these conditions hold 
% and the proof follows.

Now suppose that the map $c \rightarrow \frac{\partial }{\partial c} k_\sigma(\cdot, \cdot, c)$ is continuous at $c=\pm1$.
Since we consider that $\sigma^{\prime\prime}(\cdot)$ exists, $\sigma^\prime(\cdot)$ is continuous almost everywhere.
Using these properties, we claim that the right-hand side in \cref{eq-derivative-dual-kernel-2} is continuous in $c$ because for every $c' \in [-1,1]$ it holds that
\begin{align}
    &\lim_{c\rightarrow c^\prime} ~ \mathop{\Ebb}_{w\sim\mathcal{N}(0,\I_2)}\left[ \sigma^\prime(a w_1) \cdot \sigma^\prime \left(bc w_1 + b\sqrt{1-c^2} w_2 \right)\right] \nonumber \\
    &= \mathop{\Ebb}_{w\sim\mathcal{N}(0,\I_2)}\left[ \sigma^\prime(a w_1) \cdot \lim_{c\rightarrow c^\prime} \sigma^\prime \left(bc w_1 + b\sqrt{1-c^2} w_2 \right)\right].
\end{align}
The above equality holds from the dominated convergence theorem (see Corollary 6.26 in \cite{klenke2013probability}) with the dominated function obtained as
\begin{align}
   \abs{\sigma^\prime(a w_1) \cdot \sigma^{\prime}\left(bc w_1 +  b\sqrt{1-c^2} w_2\right)} 
    &\le C_1C_2 \exp \left( \frac{a^2 w_1^2 + \left( b c w_1 + b \sqrt{1-c^2} w_2\right)^2}{4.1\nu^2} \right) \nonumber \\
    &\le C_1 C_2 \exp \left( \frac{(a^2 + b^2) w_1^2 + b^2 w_2^2}{4.1\nu^2 }\right) \nonumber \\
    &\le C_1 C_2 \exp\left( \frac{w_1^2 + w_2^2}{2.05}\right) := h'(w)
\end{align}
where the first inequality follows from the preconditions of \cref{lmm-derivative-dual-kernel} and the second one follows from $(cw_1 + \sqrt{1-c^2} w_2)^2 \leq w_1^2 + w_2^2$, and the third one follows from $a,b \leq \nu$. And it is easy to check that $\Ebb_{w \sim \mathcal{N}(0,{\bm I}_2)}[|h'(w)|] < \infty$.
Hence both sides of \cref{eq-derivative-dual-kernel-2} are continuous at $c=\pm1$ and 
taking $\lim_{c\rightarrow \pm 1}$ in both sides of \cref{eq-derivative-dual-kernel-2} gives that \cref{eq-derivative-dual-kernel} holds for $x,y$ such that $\abs{\inner{x,y}}=\normx\normy$. This concludes the proof of \cref{lmm-derivative-dual-kernel}. 
\end{proofof}

\paragraph{Examples.} 
For $\sigma(t) = \sin(t)$, the corresponding dual kernel is known to be
\begin{align}
    k_{\sin}(a,b,c) = e^{-\frac{a^2 + b^2}{2}}~\frac{e^{abc} - e^{-abc}}{2}.
\end{align}
Applying \cref{lmm-derivative-dual-kernel} to $k_{\sin}$
\begin{align}
    \frac{1}{a \cdot b}\frac{\partial  k_{\sin}}{\partial c} = e^{-\frac{a^2 + b^2}{2}}~\frac{e^{abc} + e^{-abc}}{2}
\end{align}
which is equivalent to $k_{\cos}(a,b,c)$ (see \cref{table-dual-kernel-details} for detailed derivations).

For $\sigma(t) = \erf(t)$, the corresponding dual kernel is known as
\begin{align}
    k_{\erf}(a, b, c) = \frac{2}{\pi} \sin^{-1}\left(\frac{2abc}{\sqrt{(1+2 a^2)(1+2 b^2)}}\right).
\end{align}
Again, applying \cref{lmm-derivative-dual-kernel} to $k_{\erf}$ provides that
\begin{align}
    \frac{1}{a \cdot b}\frac{\partial  k_{\erf}}{\partial c}
    &= \frac{2}{\pi} \frac{1}{a \cdot b} \frac{1}{\sqrt{1 - \left(  \frac{2abc}{\sqrt{(1+2a^2)(1+2b^2)}} \right)^2}} \cdot  \frac{2ab}{\sqrt{(1+2a^2)(1+2b^2)}} \nonumber \\
    &= \frac{4}{\pi} \frac{1}{\sqrt{(1+2a^2)(1+2b^2) - 4 a^2 b^2 c^2}}.
\end{align}
One can check that this matches the dual kernel of $(\erf(t))' = \frac{2}{\sqrt{\pi}} e^{-t^2}$ from \cref{table-dual-kernel-details}.

In addition, \cref{lmm-derivative-dual-kernel} holds for the ReLU activation because 
\begin{align}
    \frac{1}{a \cdot b}\frac{\partial  k_{\mathrm{ReLU}}}{\partial \theta}&=\frac{1}{a\cdot b} \frac{\partial}{\partial c} \left( ab \frac{\sqrt{1-c^2} + (\pi - \cos^{-1}(c)) c}{2 \pi} \right) = \frac{\pi - \cos^{-1}(c)}{2 \pi}
\end{align}
which is equivalent to the dual kernel of $\mathrm{ReLU}'(t)=\mathrm{Step}(t)$.

This theorem is used in \texttt{Elementwise} in our codebase to automatically derive the NTK given only the NNGP function.

\section{Proof of \cref{thm-subspace-embding}}\label{appndx-proof-subspace-embed}
\homogeneousntkembedding*
\begin{proofof}{\cref{thm-subspace-embding}}
We start the proof by showing that the polynomial $R^{(L)}(t)$ defined as
% in line~\ref{line-polynomial-R} of \cref{alg-subspace-embding} 
\[
R^{(L)}(t) := \sum_{h=0}^L \tkappa^{ \circ h }(t) \cdot \prod_{i=h}^{L-1} \tkappa' \circ \tkappa^{ \circ i }(t)
\]
tightly approximates the following function at every point $t \in [-1,1]$
\[
T^{(L)}(t) := \sum_{h=0}^L \kappa^{ \circ h }(t) \cdot \prod_{i=h}^{L-1} \kappa' \circ \kappa^{ \circ i}(t)
\]
Specifically, we prove that
\begin{equation}\label{eq-poly-approx-quality-homogen-ntk}
    \max_{t \in [-1,1]} \abs{T^{(L)}(t) - R^{(L)}(t)} \le \frac{1}{\poly{n}}. 
\end{equation}
In order to prove \cref{eq-poly-approx-quality-homogen-ntk}, we first show that for every $h = 0, 1, 2, \ldots L$ the following holds
\[ \max_{t \in [-1,1]} \abs{ \kappa^{ \circ h }(t) - \tkappa^{ \circ h }(t)} \le \frac{1}{\poly{n}}. \]
The proof of the above is by induction on $h$. For $h=0$ by convention $\kappa^{ \circ h }(t) = \tkappa^{ \circ h }(t) = t$, which proves the base of induction. 
For the inductive step suppose that $\max_{t \in [-1,1]} \abs{ \kappa^{ \circ h-1 }(t) - \tkappa^{ \circ h-1}(t)} \le \frac{1}{\poly{n}}$ holds for some $h \ge 1$. Using this inductive hypothesis along with preconditions of \cref{thm-subspace-embding}, for any $t \in [-1,1]$ we can write,
\begin{align*}
    \abs{ \tkappa^{ \circ h}(t) - \kappa^{ \circ h }(t)}  &\le \abs{ \tkappa^{ \circ h }(t) - \tkappa \circ \kappa^{ \circ h-1 }(t)} +  \abs{ \tkappa \circ \kappa^{ \circ h-1 }(t) - \kappa^{ \circ h }(t)}\\
    &\le \frac{1}{\poly{n}} +  \abs{ \tkappa \circ \kappa^{ \circ h-1 }(t) - \kappa^{ \circ h }(t)}\\
    &\le \frac{1}{\poly{n}},
\end{align*}
where the first line above follows from triangle inequality. The second line above follows from precondition ${\bf (2)}$ of the theorem. The third line follows from precondition ${\bf (1)}$ of the theorem.
Therefore $\max_{t \in [-1,1]} \abs{ \kappa^{ \circ h }(t) - \tkappa^{ \circ h }(t)} \le \frac{1}{\poly{n}}$ for any $h=0,1, \ldots L$.

Moreover, by preconditions of the theorem, we can show in a similar fashion that $$\max_{t \in [-1,1]} \abs{ \kappa' \circ \kappa^{ \circ h-1 }(t) - \tkappa' \circ \tkappa^{ \circ h-1 }(t)} \le \frac{1}{\poly{n}}.$$ These inequalities are sufficient to prove \cref{eq-poly-approx-quality-homogen-ntk}.

Now, let us define the kernel $\widetilde{\Theta}_\sigma^{(L)}$ as 
\[
\widetilde{\Theta}_\sigma^{(L)}(x,y) := \normx \normy \cdot R^{(L)}\left( \frac{\inner{x,y}}{\normx \normy} \right).
\]
The depth-$L$ NTK kernel, as we showed in \cref{ntk-kernl-simplified-homogen}, is 
\[
\Theta_\sigma^{(L)}(x,y) := \normx \normy \cdot T^{(L)}\left( \frac{\inner{x,y}}{\normx \normy} \right).
\]
Using \cref{eq-poly-approx-quality-homogen-ntk}, for any $x,y \in \Rbb^d$, we have,
\[
\abs{ \Theta_\sigma^{(L)}(x,y) - \widetilde{\Theta}_\sigma^{(L)}(x,y) } \le \frac{\normx \normy}{\poly{n}}.
\]
For any dataset $\X=[x_1, x_2, \dots, x_n] \in\Rbb^{d \times n}$, we let $\widetilde{\K}_{\mathrm{ntk}} \in \Rbb^{n \times n}$ be the kernel matrix corresponding to the kernel function $\widetilde{\Theta}_\sigma^{(L)}$ and $\X$, i.e., $[\widetilde{\K}_{\mathrm{ntk}}]_{i,j} = \widetilde{\Theta}_\sigma^{(L)}(x_i,x_j)$ to have that
\begin{align*}
    \norm{ \K_{\mathrm{ntk}} - \widetilde{\K}_{\mathrm{ntk}} }_{op} &\le \norm{ \K_{\mathrm{ntk}} - \widetilde{\K}_{\mathrm{ntk}} }_F\\
    &\le \frac{\norm{\X}_F^2}{\poly{n}}\\
    &\le \frac{1}{\poly{n}} \le \frac{\varepsilon \lambda}{3},
\end{align*}
where the third line above follows from the assumption of the theorem about $\norm{\X}_F \le \poly{n}$ and $\varepsilon, \lambda \ge \frac{1}{\poly{n}}$. Therefore, in order to prove the desired subspace embedding guarantee of \cref{thm-subspace-embding}, it suffices to prove that with probability at least $1 - \frac{1}{\poly{n}}$, the following holds
\[
(1-\varepsilon/2) \left(\widetilde{\K}_{\mathrm{ntk}} + \lambda \I_n \right) \preceq {\psi}^{(L)}(\X)^\top {\psi}^{(L)}(\X)  + \lambda \I_n \preceq (1+\varepsilon/2)\left( \widetilde{\K}_{\mathrm{ntk}} + \lambda \I_n \right).
\]
From now on we focus on proving the above inequality. If we let $R^{(L)}(t) = \sum_{i=0}^p c_j t^j$ be the polynomial defined in line~\ref{line-polynomial-R} of \cref{alg-subspace-embding} then we have that 
% The homogeneous NTK matrix can be written as
\begin{align*}
    \widetilde{\K}_{\mathrm{ntk}} =\D \left( \sum_{j=0}^p c_j \left(\Y^{\otimes j} \right)^\top \Y^{\otimes j} \right) \D = \sum_{j=0}^p c_j \cdot \left(\Y^{\otimes j}\D\right)^\top \Y^{\otimes j} \D
\end{align*}
where
\begin{align*}
 \D = \mathrm{diag}\left(\left[ \norm{x_1}_2, \dots, \norm{x_n}_2 \right]\right)\in \Rbb^{n \times n} ~&,~~~~~ \Y = \left[ \frac{x_1}{\norm{x_1}_2}, ~ \ldots, \frac{x_n}{\norm{x_n}_2} \right] \in \Rbb^{d \times n}.
\end{align*}
Note that each of the term $\left(\Y^{\otimes j}\D\right)^\top \Y^{\otimes j} \D = \D (\Y^{\otimes j})^{\top} \Y^{\otimes j} \D$ is a positive definite Gram matrix.
Also, from the fact that coefficients $c_j$ are positive and by Courant-Fischer's min-max theorem, the statistical dimension of the Gram matrix $c_j \cdot \left(\Y^{\otimes j}\D\right)^\top \Y^{\otimes j} \D$ for every $j \ge 0$ is upper bounded by the statistical dimension of the kernel matrix $\widetilde{\K}_{\mathrm{ntk}}$. More specifically, for any $\mu > 0$ and every $j = 0,1, \ldots p$, we have
\[
s_\mu\left( c_j \cdot \left(\Y^{\otimes j}\D\right)^\top \Y^{\otimes j} \D\right) \le s_\mu\left( \widetilde{\K}_{\mathrm{ntk}} \right).
\]
Now let $\mu := \frac{\lambda}{p+1}$ and note that from the definition of statistical dimension it follows that $s_\mu\left( \widetilde{\K}_{\mathrm{ntk}} \right) \le (p+1) s_\lambda\left( \widetilde{\K}_{\mathrm{ntk}} \right)$. The sketch matrix $Q^j$ defined in line~\ref{line-Q-instantiation} of the algorithm has $m = \Omega\left( \varepsilon^{-2}{s_\lambda(\K_{\mathrm{ntk}})} \cdot \poly{q^L, \log n} \right) = \Omega\left( \varepsilon^{-2}{s_\mu(\K_{\mathrm{ntk}})} \cdot \poly{q^L, \log n} \right)$ rows. Therefore, by \cref{lem-polysketch}, the following holds for every $j=0,1, \ldots p$ with probability at least $1 - \frac{1}{\poly{n}}$,
\[
\frac{c_j \cdot \D (\Y^{\otimes j})^\top \Y^{\otimes j} \D + \mu \I_n }{1+\varepsilon/3} \preceq c_j \cdot \D \left(Q^j\Y^{\otimes j}\right)^\top Q^j\Y^{\otimes j} \D + \mu \I_n \preceq \frac{c_j \cdot\D (\Y^{\otimes j})^\top \Y^{\otimes j} \D + \mu \I_n }{1 - \varepsilon/3}.
\]
By union bound over $p+1 = \bigo(q^L) = o(\poly{n})$ events, the above inequality holds simultaneously for all $j$ with high probability in $n$. Thus, by summing up the above inequality over all $j$ and using the fact that $\mu = \frac{\lambda}{p+1}$ we find that,
\[
\frac{\widetilde{\K}_{\mathrm{ntk}} + \lambda \I_n }{1+\varepsilon/3} \preceq \sum_{j=0}^p \left( c_j \cdot \D^j \left(Q^j\Y^{\otimes j}\right)^\top Q^j\Y^{\otimes j} \D\right) + \lambda \I_n \preceq \frac{\widetilde{\K}_{\mathrm{ntk}} + \lambda \I_n }{1 - \varepsilon/3}.
\]
This proves the theorem because the output of the algorithm satisfies that $${\psi}^{(L)}(\X)^\top {\psi}^{(L)}(\X) = \sum_{j=0}^p c_j \D^j \left(Q^j\Y^{\otimes j}\right)^\top Q^j\Y^{\otimes j} \D^j.$$
The runtime bound follows immediately from \cref{lem-polysketch}.
\end{proofof}

\section{Convolutional Neural Tangent Kernel}
In this section, we design and analyze an efficient oblivious sketch for the Convolutional Neural Tangent Kernel (CNTK), which is the kernel function corresponding to a CNN with infinite number of channels.
\citet{arora2019exact} gave dynamic programming (DP) based solutions for computing two variants of CNTK; one is the vanilla version which performs no pooling, and the other performs Global Average Pooling (GAP) on its top layer. For conciseness, we focus mainly on the CNTK with GAP, which also exhibits superior empirical performance~\cite{arora2019exact}. However, we remark that the vanilla CNTK has a very similar structure and hence our techniques can be applied to it, as well. 

We start by restating the DP approach proposed in \cite{arora2019exact} for computing the $L$-layered CNTK with an arbitrary activation function $\sigma$, convolutional filters of size $q \times q$ and GAP. Consider two input images $y,z \in \Rbb^{d_1\times d_2 \times c}$ where $c$ is the number of channels ($c=3$ for the standard color image).
\begin{enumerate}[wide, labelwidth=!, labelindent=5pt, leftmargin=.5cm]
    \item For every $i,i' \in [d_1]$ and $j,j' \in [d_2]$, define 
    %$\Gamma^{(0)}(x,y)$ and $K^{(0)}(x,y) \in \Rbb^{d_1 \times d_2 \times d_1 \times d_2}$ as :
	\begin{align}
		&\ijij{\Gamma^{(0)}}  := \sum_{l=1}^c y_{i,j,l} \cdot z_{i',j',l}, \label{eq:dp-cntk-zero}\\ 
		&\ijij{K^{(0)}} := \sumijij{\Gamma^{(0)}}.\nonumber
	\end{align}

	\item For every $h \in[L]$, every $i,i' \in [d_1]$ and $j,j' \in [d_2]$, define 
	%$\Gamma^{(h)}(x,y) \in \Rbb^{d_1 \times d_2 \times d_1 \times d_2}$ as:
	\begin{equation}\label{eq:dp-cntk-covar}
		\begin{split}
			&\ijij{\Gamma^{(h)}} := \frac{1}{q^2} \cdot \Ebb_{(u,v) \sim \mathcal{N}\left( 0, {\bm\Lambda}^{(h)}_{i,j,i',j'}(x,y) \right)} \left[ \sigma(u) \sigma(v) \right],\\
			&\ijij{K^{(h)}} := \sumijij{\Gamma^{(h)}},
		\end{split}
	\end{equation}
	where the covariance matrix is
	\begin{equation}
	    {\bm\Lambda}^{(h)}_{i,j,i',j'}(x,y)
	    := 
	    \begin{bmatrix}
			K_{i,j,i',j'}^{(h-1)}(y,y) & K_{i,j,i',j'}^{(h-1)}(y,z)\\
			&\\
			K_{i,j,i',j'}^{(h-1)}(z,y) & K_{i,j,i',j'}^{(h-1)}(z,z)
		\end{bmatrix} \in \Rbb^{2 \times 2}.
	\end{equation}
	\item For every $h \in [L] $, every $i,i' \in [d_1]$ and $j,j' \in [d_2]$, define 
	%$\dot{\Gamma}^{(h)}(x,y) \in \Rbb^{d_1 \times d_2 \times d_1 \times d_2}$ as:
	\begin{equation}\label{eq:dp-cntk-derivative-covar}
		\ijij{\dt{\Gamma}^{(h)}} := \frac{1}{q^2 } \cdot \Ebb_{(u,v) \sim \mathcal{N}\left( 0, {\bm\Lambda}^{(h)}_{i,j,i',j'}(y,z) \right)} \left[ \sigma'(u) \sigma'(v) \right].
	\end{equation}
	\item Let $\Pi^{(0)}(x,y) := 0$ and for every $h\in[L-1]$, every $i,i' \in [d_1]$ and $j,j' \in [d_2]$, define 
	%$\Pi^{(h)}(x,y)\in \Rbb^{d_1\times d_2\times d_1 \times d_2}$ as:
	\begin{equation}\label{eq:dp-cntk-pi}
		\ijij{\Pi^{(h)}} := \sum_{a=-\frac{q-1}{2}}^{\frac{q-1}{2}} \sum_{b=-\frac{q-1}{2}}^{\frac{q-1}{2}} \left[\Pi^{(h-1)}(y,z) \odot \dt{\Gamma}^{(h)}(y,z) + \Gamma^{(h)}(y,z)\right]_{i+a,j+b,i'+a,j'+b},
	\end{equation}
	and also $\Pi^{(L)}(y,z) := \Pi^{(L-1)}(y,z) \odot \dt{\Gamma}^{(L)}(y,z).$
	\item The final CNTK expressions is defined as:
	\begin{equation}\label{eq:dp-cntk-gap-def}
		\Theta_{\mathrm{cntk}}^{(L)}(y,z) := \frac{1}{d_1^2d_2^2} \cdot \sum_{i , i' \in [d_1]} \sum_{j , j' \in [d_2]} \Pi_{i,j,i',j'}^{(L)}(y,z).
	\end{equation}
\end{enumerate}

The above procedure for exact computation of the depth-$L$ CNTK value $\Theta_{\mathrm{cntk}}^{(L)}(y,z)$ takes $\Omega\left( (d_1d_2)^2 (c + L) \right)$ runtime, which is extremely slow particularly due to its quadratic dependence on the number of pixels of input images $d_1d_2$.
Fortunately, we are able to show that the CNTK for homogeneous dual kernels, as per \cref{def-homogeneous-dual-kernel}, is a highly structured object that can be fully characterized in terms of tensoring and composition of the dot-product factor of dual kernels, and exploiting this special structure is key in designing efficient sketching methods for the CNTK.

\subsection{CNTK for Homogeneous Dual Kernels}

In this section we show that the CNTK function corresponding to any homogeneous dual kernel, i.e., $K_\sigma(x,y) = \normx \normy \cdot \kappa\left( \frac{\inner{x,y}}{\normx \normy} \right)$ for some $\kappa:[-1,1]\rightarrow[-1,1]$, takes a simple form which enables us to devise efficient sketching algorithms for the CNTK.
Unlike the fully-connected NTK, the CNTK is not a simple dot-product kernel function. The key reason being that CNTK works by partitioning its input images into patches and locally transforming the patches at each layer, as opposed to the NTK which operates on the entire input vectors.
The depth-$L$ CNTK corresponding to homogeneous dual kernels can be fully characterized in terms of tensoring and composition of the dot-product kernel $\kappa$ and its derivative $\kappa'$. %, which is defined as follows.
% The CNTK kernel in this case is as follows,

\begin{defn}[CNTK for Homogeneous Dual Kernels] \label{relu-cntk-def}
	For every positive integers $q,L$, the $L$-layered CNTK for a homogeneous dual kernel, as per \cref{def-homogeneous-dual-kernel}, and convolutional filter size of $q \times q$ is defined as follows
	\begin{enumerate}[wide, labelwidth=!, labelindent=5pt, leftmargin=.5cm]
		\item For $x\in \Rbb^{d_1\times d_2 \times c}$, every $i \in [d_1]$ and $j \in [d_2]$ let $N_{i,j}^{(0)} (x) :=q^2 \cdot \sum_{l=1}^c \left| x_{i,j,l} \right|^2$, and for every $h \ge 1$, recursively define,
		\begin{equation}\label{eq:dp-cntk-norm-simplified}
			N^{(h)}_{i,j}(x):=\frac{1}{q^2} \cdot \sum_{a=-\frac{q-1}{2}}^{\frac{q-1}{2}} \sum_{b=-\frac{q-1}{2}}^{\frac{q-1}{2}} N^{(h-1)}_{i+a,j+b}(x).
		\end{equation}
		\item For every $h\in[h]$, every $i,i' \in [d_1]$ and $j,j' \in [d_2]$, define 
		%$\Gamma^{(h)}(y,z)\in \Rbb^{d_1 \times d_2 \times d_1 \times d_2}$ recursively as:
		\begin{equation}\label{eq:dp-cntk-covar-simplified}
			\Gamma^{(h)}_{i,j,i',j'}(y,z) := \frac{\sqrt{N^{(h)}_{i,j}(y) \cdot N^{(h)}_{i',j'}(z)}}{q^2} \cdot \kappa\left( A \right),~~~\Gamma_{i,j,i',j'}^{(0)}(y,z)  = \sum_{l=1}^c y_{i,j,l}  \cdot z_{i',j',l}
		\end{equation}
		where $A = \frac{1}{\sqrt{N^{(h)}_{i,j}(y) \cdot N^{(h)}_{i',j'}(z)}} \sum_{a=-\frac{q-1}{2}}^{\frac{q-1}{2}} \sum_{b=-\frac{q-1}{2}}^{\frac{q-1}{2}}  \Gamma^{(h-1)}_{i+a,j+b,i'+a,j'+b}(y,z)$.
		
		\item 
		%Let $\kappa':[-1,1]\to [-1,1]$ be the derivative of $\kappa(\cdot)$. 
		For every $h\in[L]$, every $i,i' \in [d_1]$ and $j,j' \in [d_2]$, define 
		%$\dt{\Gamma}^{(h)}(y,z) \in \Rbb^{d_1 \times d_2 \times d_1 \times d_2}$ as:
		\begin{equation}\label{eq:dp-cntk-derivative-covar-simplified}
			\dt{\Gamma}^{(h)}_{i,j,i',j'}(y,z) := \frac{1}{q^2} \cdot \kappa'\left(A\right).
		\end{equation}
% 		where $A$ is defined in \cref{eq:dp-cntk-covar-simplified}.
		
		\item Let $\Pi^{(0)}(y,z) := 0$ and for every $h\in[L-1]$, every $i,i' \in [d_1]$ and $j,j' \in [d_2]$, define 
		%$\Pi^{(h)}: \Rbb^{d_1\times d_2 \times c} \times \Rbb^{d_1\times d_2 \times c} \to \Rbb^{d_1\times d_2\times d_1 \times d_2}$ recursively as:
% 		\small
		\begin{equation}\label{eq:dp-cntk}
			\Pi^{(h)}_{i,j,i',j'}(y,z) := \sum_{a=-\frac{q-1}{2}}^{\frac{q-1}{2}} \sum_{b=-\frac{q-1}{2}}^{\frac{q-1}{2}} \left[\Pi^{(h-1)}(y,z) \odot \dt{\Gamma}^{(h)}(y,z) + \Gamma^{(h)}(y,z)\right]_{i+a,j+b,i'+a,j'+b},
		\end{equation}
% 		\normalsize
		Furthermore, define
		\begin{equation}\label{eq:dp-cntk-last-layer}
			\Pi^{(L)}(y,z) := \Pi^{(L-1)}(y,z) \odot \dt{\Gamma}^{(L)}(y,z).
		\end{equation}
		\item The final CNTK expressions for ReLU activation is:
		\begin{equation}\label{eq:dp-cntk-finalkernel}
			\Theta_{\mathrm{cntk}}^{(L)}(y,z) := \frac{1}{d_1^2d_2^2} \cdot \sum_{i , i' \in [d_1]} \sum_{j , j' \in [d_2]} \Pi_{i,j,i',j'}^{(L)}(y,z).
		\end{equation}
	\end{enumerate}
\end{defn}

We now describes some of the basic properties of the functions $\Gamma^{(h)}(y,z), \dt{\Gamma}^{(h)}(y,z)$, and $\Pi^{(h)}(y,z)$ defined in \cref{eq:dp-cntk-covar-simplified}, in the following lemma,

\begin{lemma}[Properties of $\Gamma^{(h)}(y,z), \dot{\Gamma}^{(h)}(y,z)$, and $\Pi^{(h)}(y,z)$]\label{properties-gamma}
	Suppose that the dot-product kernel $\kappa(\cdot)$ in \cref{def-homogeneous-dual-kernel} and its derivative satisfy $\kappa(1) = \kappa'(1)=1$. 
	For every images $y,z \in \Rbb^{d_1 \times d_2 \times c}$, every integer $h \ge 0$ and every  $i,i' \in [d_1]$ and $j,j' \in [d_2]$ the following properties are satisfied by functions $\Gamma^{(h)}, \dt{\Gamma}^{(h)},  \Pi^{(h)}$ and $N^{(h)}$ defined in \cref{eq:dp-cntk-covar-simplified}, \cref{eq:dp-cntk-derivative-covar-simplified}, \cref{eq:dp-cntk} and \cref{eq:dp-cntk-last-layer}, and \cref{eq:dp-cntk-norm-simplified} of \cref{relu-cntk-def}:
	\begin{enumerate}[wide, labelwidth=!, labelindent=5pt, leftmargin=.5cm]
		\item {\bf Cauchy–Schwarz:} $\left| \Gamma_{i,j,i',j'}^{(h)}(y,z) \right| \le \frac{\sqrt{N_{i,j}^{(h)}(y) \cdot N_{i',j'}^{(h)}(z)}}{q^2}$, and $\left| \dot{\Gamma}_{i,j,i',j'}^{(h)}(y,z) \right| \le \frac{1}{q^2}$, and $\Pi_{i,j,i',j'}^{(h)}(y,z) \le \sqrt{\Pi_{i,j,i,j}^{(h)}(y,y) \cdot \Pi_{i',j',i',j'}^{(h)}(z,z)}$.
		
		\item {\bf Norm value:} $\Gamma_{i,j,i,j}^{(h)}(y,y) = \frac{N_{i,j}^{(h)}(y)}{q^2} \ge 0$, and $\dot{\Gamma}_{i,j,i,j}^{(h)}(y,y) = \frac{1}{q^2} \ge 0$, and $\Pi_{i,j,i,j}^{(h)}(y,y) = \begin{cases}
			h \cdot N_{i,j}^{(h+1)}(y) & \text{if } h < L\\
			\frac{L-1}{q^2} \cdot N_{i,j}^{(L)}(y) & \text{if } h = L
		\end{cases}$.
		
	\end{enumerate}
\end{lemma}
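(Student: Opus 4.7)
The plan is to establish parts 1 and 2 in tandem by induction on $h$, relying on one crucial external fact: since $K_\sigma$ is a homogeneous dual kernel and hence positive semidefinite, Schoenberg's theorem implies that its dot-product factor $\kappa:[-1,1]\to[-1,1]$ admits a Taylor expansion $\kappa(t)=\sum_{j\ge 0} c_j t^j$ with $c_j\ge 0$ and $\sum_j c_j=\kappa(1)=1$; the termwise derivative $\kappa'(t)=\sum_{j\ge 1} j c_j t^{j-1}$ inherits the same structure with $\sum_{j\ge 1} j c_j=\kappa'(1)=1$. Two consequences will be used throughout: (i) $|\kappa(t)|,|\kappa'(t)|\le 1$ for every $|t|\le 1$, and (ii) the entrywise map $M\mapsto \kappa(M)$ (resp.\ $\kappa'(M)$) sends any PSD matrix $M$ with entries in $[-1,1]$ to a PSD matrix, by termwise application of the Schur product theorem to the Taylor series.

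For part 2, the base case $h=0$ is immediate from the definitions. For the inductive step on $\Gamma^{(h)}$ and $\dt{\Gamma}^{(h)}$, the argument $A$ of $\kappa$ in \cref{eq:dp-cntk-covar-simplified} collapses on the diagonal $(i,j,i,j,y,y)$ to $A=\tfrac{1}{N^{(h)}_{i,j}(y)}\sum_{a,b} N^{(h-1)}_{i+a,j+b}(y)/q^2=1$ by the definition of $N^{(h)}$ in \cref{eq:dp-cntk-norm-simplified}; then $\kappa(1)=\kappa'(1)=1$ yields the stated identities. A short sub-induction on $h$ handles the formula for $\Pi^{(h)}_{i,j,i,j}(y,y)$: unrolling \cref{eq:dp-cntk} on the diagonal and substituting the just-proved $\dt{\Gamma}^{(h)}_{i,j,i,j}(y,y)=1/q^2$ and $\Gamma^{(h)}_{i,j,i,j}(y,y)=N^{(h)}_{i,j}(y)/q^2$ produces a telescoping recursion that sums to $h\cdot N^{(h+1)}_{i,j}(y)$ for $h<L$, and the terminal relation \cref{eq:dp-cntk-last-layer} multiplies this by $1/q^2$ to cover $h=L$.

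The $\Gamma^{(h)}$ and $\dt{\Gamma}^{(h)}$ bounds in part 1 reduce to the auxiliary estimate $|A|\le 1$, whose proof is an induction: using $|\Gamma^{(h-1)}_{i+a,j+b,i'+a,j'+b}(y,z)|\le \sqrt{N^{(h-1)}_{i+a,j+b}(y)\,N^{(h-1)}_{i'+a,j'+b}(z)}/q^2$ from the inductive hypothesis and the discrete Cauchy--Schwarz inequality on the patch sum,
\[
|A|\le \frac{1}{q^2\sqrt{N^{(h)}_{i,j}(y)N^{(h)}_{i',j'}(z)}}\sqrt{\Bigl(\sum_{a,b} N^{(h-1)}_{i+a,j+b}(y)\Bigr)\Bigl(\sum_{a,b} N^{(h-1)}_{i'+a,j'+b}(z)\Bigr)}=1,
\]
with the equality from \cref{eq:dp-cntk-norm-simplified}. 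Consequence (i) then finishes both claimed bounds at once.

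The $\Pi^{(h)}$ Cauchy--Schwarz bound is cleanest via the stronger statement that $\Pi^{(h)}$, viewed as a symmetric matrix over all (image, pixel) index pairs, is PSD; the desired inequality is then the standard PSD bound $|M_{uv}|\le\sqrt{M_{uu}M_{vv}}$. To run the induction, observe that $\Gamma^{(h)}$ is PSD because it equals the Hadamard product of the rank-one PSD kernel $\sqrt{N^{(h)}_{i,j}(y)N^{(h)}_{i',j'}(z)}/q^2$ with $\kappa$ applied entrywise to a normalized patch-sum of $\Gamma^{(h-1)}$; the patch-sum is a sum of re-indexed copies of a PSD matrix, the normalization is a diagonal congruence (preserving PSD), the entries lie in $[-1,1]$ by the preceding paragraph, and entrywise $\kappa$ preserves PSD by consequence (ii). The same reasoning yields PSD-ness of $\dt{\Gamma}^{(h)}$. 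Then $\Pi^{(h-1)}\odot\dt{\Gamma}^{(h)}$ is PSD by Schur's theorem, adding PSD $\Gamma^{(h)}$ preserves PSD, and the final patch-aggregation sum in \cref{eq:dp-cntk} again preserves PSD because for each fixed shift $(a,b)$ the re-indexing $(i,j)\mapsto(i+a,j+b)$ is an injection of the index set. The main obstacle is not any single step but the discipline of tracking ``PSD as a matrix over which index set'' consistently across $\Gamma^{(h)}$, $\dt{\Gamma}^{(h)}$, and $\Pi^{(h)}$; once this index-space bookkeeping is pinned down, every step is a standard preservation lemma.
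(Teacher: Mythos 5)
Your proof is correct and matches the paper's intent: the paper only remarks that these properties ``can be straightforwardly proved using induction'' and omits all details, and your induction (the diagonal computation $A=1$ giving the norm values, the Cauchy--Schwarz bound $|A|\le 1$ on the patch sums, and the PSD-matrix viewpoint for the $\Pi^{(h)}$ inequality) supplies exactly those details. One small remark: for the boundedness claims you do not actually need Schoenberg, since the definition of a homogeneous dual kernel already stipulates $\kappa,\kappa':[-1,1]\to[-1,1]$; the non-negative-coefficient structure is genuinely needed only for the entrywise-$\kappa$-preserves-PSD step in your $\Pi^{(h)}$ argument, and there it is justified directly because $\kappa(t)=\sum_j c_j^2\, j!\, t^j$ for the Hermite coefficients $c_j$ of the underlying activation.
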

The properties stated in the above lemma can be straightforwardly proved using induction.

\subsection{CNTK Sketch for Homogeneous Dual Kernels} \label{sec-cntk-sketch}
Our sketching method relies on approximating the dot-product kernel function $\kappa(\cdot)$ and its derivative $\kappa'(\cdot)$ with low-degree polynomials via Taylor expansion, and then applying \textsc{PolySketch} to the resulting polynomial kernels. 
Our sketch computes the features for each pixel of the input image, by tensor product of the sketches for function $\kappa(\cdot)$ at consecutive layers, which in turn can be sketched efficiently by \textsc{PolySketch}. Additionally, the features of pixels that lie in the same patch get \emph{locally combined} at each layer via direct sum operation. This precisely corresponds to the convolution operation in neural networks. We start by presenting our CNTK Sketch algorithm in \cref{alg-cntk-sketch} and prove the correctness and runtime of our procedure in \cref{thm-cntk-sketch}.

\begin{algorithm}[t]
	\caption{CNTK Sketch for Homogeneous Dual Kernels} \label{alg-cntk-sketch}
	\begin{algorithmic}[1]
		\STATE {\bf input}: image $x \in \Rbb^{d_1 \times d_2 \times c}$, depth $L$, filter size $q$, sketching dimensions $m, m'$, polynomials $\tkappa(t)= \sum_{j=0}^p a_j t^j$ and $\tkappa'(t)= \sum_{j=0}^p b_j t^j$ with $a_j, b_j \in \Rbb_+$
		\STATE for every $i \in [d_1]$, $j \in [d_2]$, and $h = 0, 1, 2, \ldots L$ compute $N_{i,j}^{(h)}(x)$ as per \cref{eq:dp-cntk-norm-simplified}
		
		\STATE for every $i \in [d_1]$, $j \in [d_2]$, initialize $\phi_{i,j}^{(0)}(x) \gets x_{i,j,:}$ and $\psi^{(0)}_{i,j}(x) \gets 0$ \label{cntk-sketch-covar-zero}
		
		\FOR{ $h=1$ to $L$}
		
			\STATE{For $\ell=0,\dots, p$, let $Q^{\ell}$ be a degree-$\ell$ \textsc{PolySketch} with target dimension $m$ and for every $i \in [d_1]$, $j \in [d_2]$ compute} \label{eq:def-Z-vectors-mu}\\
				\begin{align*}
				 Z^{(h)}_{i,j,\ell}(x) \gets Q^{\ell} \cdot\left( \mu^{(h)}_{i,j}(x) \right)^{\otimes \ell}   , ~~~  \mu^{(h)}_{i,j}(x) \gets \frac{1}{\sqrt{N^{(h)}_{i,j}(x)}} \cdot  \bigoplus_{a=-\frac{q-1}{2}}^{\frac{q-1}{2}} \bigoplus_{b=-\frac{q-1}{2}}^{\frac{q-1}{2}}  \phi_{i+a,j+b}^{(h-1)}(x)
				\end{align*}
				% $\left( r_\ell\leftarrow \frac{\norm{\phi^{(h-1)}(x)}_2}{\sqrt{e \cdot \ell!}} ~ \text{for the normalized Gaussian kernel}\right)$ 
			\STATE for every $i \in [d_1]$, $j \in [d_2]$ construct $\phi_{i,j}^{(h)}(x) \gets \frac{\sqrt{N^{(h)}_{i,j}(x)}}{q} \cdot \bigoplus_{\ell=0}^{p} \sqrt{a_{\ell}} \cdot Z^{(h)}_{i,j,\ell}(x)$ \label{eq:maping-cntk-covar}

		\STATE for every $i \in [d_1]$, $j \in [d_2]$ construct $\dt{\phi}^{(h)}(x) \gets \frac{1}{q} \cdot \bigoplus_{\ell=0}^{p} \sqrt{b_{\ell}} \cdot Z^{(h)}_{i,j,\ell}(x)$ \label{eq:cntk-map-phidot}
		
		\STATE Let $Q^2$ be a degree-2  \textsc{PolySketch} with target dimension $m'$
		\IF{$h=L$}
		\STATE for every $i \in [d_1]$, $j \in [d_2]$ compute \label{psi-cntk}
		$$
		\psi^{(h)}_{i,j}(x) \gets  \bigoplus_{a=-\frac{q-1}{2}}^{\frac{q-1}{2}} \bigoplus_{b=-\frac{q-1}{2}}^{\frac{q-1}{2}} \left[ Q^2 \left(\psi_{i+a,j+b}^{(h-1)}(x) \otimes \dt{\phi}_{i+a,j+b}^{(h)}(x)\right) \oplus \phi_{i+a,j+b}^{(h)}(x) \right]
		$$
		\ELSE
		\STATE for every $i \in [d_1]$ and $j \in [d_2]$ compute \label{psi-cntk-last}
		$$
		\psi^{(L)}_{i,j}(x) \gets  \bigoplus_{a=-\frac{q-1}{2}}^{\frac{q-1}{2}} \bigoplus_{b=-\frac{q-1}{2}}^{\frac{q-1}{2}} Q^2 \left(\psi_{i+a,j+b}^{(L-1)}(x) \otimes \dt{\phi}_{i+a,j+b}^{(L)}(x)\right)
		$$

		\ENDIF
		\ENDFOR
		
		\STATE {\bf return} $\Psi_{\mathrm{cntk}}^{(L)}(y,z) := \frac{1}{d_1d_2} \cdot \sum_{i \in [d_1]} \sum_{j \in [d_2]} \psi^{(L)}_{i,j}(x) $
	\end{algorithmic}
\end{algorithm}

\begin{theorem}[Correctness and Runtime of \cref{alg-cntk-sketch}]\label{thm-cntk-sketch}
Suppose that the dual kernel $K_\sigma$ is homogeneous as per \cref{def-homogeneous-dual-kernel} also assume that $\kappa(1) = \kappa'(1)=1$. Fix some $\varepsilon>0$ and $L \in \mathbb{Z}_{>0}$ and suppose that $\tkappa(t)$ and $\tkappa'(t)$ are degree-$p$ polynomials with non-negative coefficients that satisfies ${\bf (1)}$ $\max_{t \in [-1,1]} \abs{\tkappa(t) - \kappa(t)} = \bigo\left(\frac{\varepsilon}{L}\right)$ and $\max_{t \in [-1,1]} \abs{\tkappa'(t) - \kappa'(t)} = \bigo\left(\frac{\varepsilon}{L}\right)$, ${\bf (2)}$ $\max_{\abs{t} \le 1+\bigo(\varepsilon)} \abs{\tkappa(t+\gamma) - \tkappa(t)} \le \bigo(\gamma)$ and $\max_{\abs{t} \le 1+\bigo(\varepsilon)} \abs{\tkappa'(t+\gamma) - \tkappa'(t)} \le \bigo(\gamma)$ for any $|\gamma| \le \bigo(\varepsilon)$.
If $m = \Omega\left( \frac{L^4 p}{\varepsilon^2} \cdot \log^3n \right)$ and $m' = \Omega\left( \frac{L^2}{\varepsilon^2} \cdot \log^3n \right)$, then for any $y,z \in \Rbb^{d_1 \times d_2 \times c}$, the output of \cref{alg-cntk-sketch} satisfies
\[  \Pr \left[ \left| \left< \Psi_{\mathrm{cntk}}^{(L)}(y) , \Psi_{\mathrm{cntk}}^{(L)}(z) \right> - \Theta_{\mathrm{cntk}}^{(L)}(y,z) \right| > \varepsilon \cdot \sqrt{ \Theta_{\mathrm{cntk}}^{(L)}(y,y) \cdot \Theta_{\mathrm{cntk}}^{(L)}(z,z) } \right] \le \frac{1}{\poly{n}}. \]
Furthermore, for every image $x \in \Rbb^{d_1 \times d_2 \times c}$, $\Psi_{\mathrm{cntk}}^{(L)}(x) \in \Rbb^{m'}$ can be computed in time $\bigo\left( Lp^2 m \log m \cdot d_1 d_2 \right)$.
\end{theorem}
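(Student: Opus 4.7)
I would decompose the error $\langle \Psi_{\mathrm{cntk}}^{(L)}(y),\Psi_{\mathrm{cntk}}^{(L)}(z)\rangle - \Theta_{\mathrm{cntk}}^{(L)}(y,z)$ via an intermediate object, the \emph{polynomial CNTK} $\widetilde{\Theta}_{\mathrm{cntk}}^{(L)}$, defined exactly as in \cref{relu-cntk-def} but with $\kappa$ and $\kappa'$ replaced by the polynomial surrogates $\tkappa$ and $\tkappa'$; write $\widetilde{\Gamma}^{(h)}, \widetilde{\dt{\Gamma}}^{(h)}, \widetilde{\Pi}^{(h)}$ for the corresponding tensors. The triangle inequality then reduces the task to bounding (A) $|\widetilde{\Theta}_{\mathrm{cntk}}^{(L)}(y,z) - \Theta_{\mathrm{cntk}}^{(L)}(y,z)|$ and (B) $|\langle\Psi_{\mathrm{cntk}}^{(L)}(y),\Psi_{\mathrm{cntk}}^{(L)}(z)\rangle - \widetilde{\Theta}_{\mathrm{cntk}}^{(L)}(y,z)|$, each by $\tfrac{\varepsilon}{2}\sqrt{\Theta_{\mathrm{cntk}}^{(L)}(y,y)\cdot \Theta_{\mathrm{cntk}}^{(L)}(z,z)}$.

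\textbf{(A) Polynomial approximation error.} I would induct on $h=0,1,\dots,L$ to show, for every $i,j,i',j'$, the homogeneous bound
\begin{align*}
\bigl|\widetilde{\Gamma}_{i,j,i',j'}^{(h)}(y,z)-\Gamma_{i,j,i',j'}^{(h)}(y,z)\bigr| \;\le\; \tfrac{\bigo(h\varepsilon)}{L}\cdot\tfrac{\sqrt{N_{i,j}^{(h)}(y)\,N_{i',j'}^{(h)}(z)}}{q^2},
\end{align*}
and an analogous bound for $\widetilde{\dt{\Gamma}}^{(h)}-\dt{\Gamma}^{(h)}$ without the norm prefactor. Writing both $\Gamma^{(h)}$ and $\widetilde{\Gamma}^{(h)}$ as $\tfrac{\sqrt{N\cdot N'}}{q^2}\kappa(A)$ and $\tfrac{\sqrt{N\cdot N'}}{q^2}\tkappa(\widetilde{A})$, the patch-sum identity in \cref{eq:dp-cntk-covar-simplified} reduces $|\widetilde{A}-A|$ to the layer-$(h{-}1)$ inductive hypothesis; precondition (2) converts this into a bound on $|\tkappa(\widetilde{A})-\tkappa(A)|$ on the enlarged domain $|t|\le 1+\bigo(\varepsilon)$, while precondition (1) supplies the fresh $|\tkappa(A)-\kappa(A)|=\bigo(\varepsilon/L)$ term (note $|A|\le 1$ by Cauchy--Schwarz from \cref{properties-gamma}). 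A parallel induction on $\widetilde{\Pi}^{(h)}-\Pi^{(h)}$ uses the Hadamard-product identity \cref{eq:dp-cntk}, the just-established bound on $\widetilde{\dt{\Gamma}}^{(h)}-\dt{\Gamma}^{(h)}$, and the explicit diagonal formula for $\Pi^{(h)}_{i,j,i,j}(y,y)$ in \cref{properties-gamma} as the right-hand-side scale. Averaging over pixels yields (A).

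\textbf{(B) Sketching error.} Conditional on (A), I would induct on $h$ with the invariants
\begin{align*}
&\bigl|\langle\phi_{i,j}^{(h)}(y),\phi_{i',j'}^{(h)}(z)\rangle-\widetilde{\Gamma}_{i,j,i',j'}^{(h)}(y,z)\bigr|\le \tfrac{\bigo(h\varepsilon)}{L}\cdot\tfrac{\sqrt{N_{i,j}^{(h)}(y)N_{i',j'}^{(h)}(z)}}{q^2}, \\
&\bigl|\langle\dt{\phi}_{i,j}^{(h)}(y),\dt{\phi}_{i',j'}^{(h)}(z)\rangle-\widetilde{\dt{\Gamma}}_{i,j,i',j'}^{(h)}(y,z)\bigr|\le \tfrac{\bigo(h\varepsilon)}{Lq^2},
\end{align*}
plus an analogous invariant for $\psi^{(h)}$. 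The direct-sum construction of $\mu_{i,j}^{(h)}$ in line \ref{eq:def-Z-vectors-mu} is set up so that $\langle\mu_{i,j}^{(h)}(y),\mu_{i',j'}^{(h)}(z)\rangle$ equals $\widetilde{A}$ up to the previous-layer sketching error. Applying the degree-$\ell$ \textsc{PolySketch} $Q^\ell$ with $m=\Omega(L^4 p/\varepsilon^2\log^3 n)$ yields $\langle Z_{i,j,\ell}^{(h)}(y),Z_{i',j',\ell}^{(h)}(z)\rangle = \widetilde{A}^\ell\pm \bigo(\varepsilon/L^2)$ by \cref{lem-polysketch}(1); weighting by $\sqrt{a_\ell}$ and summing over $\ell=0,\dots,p$ realizes $\tkappa(\widetilde{A})$ up to additive $\bigo(\varepsilon/L)$, and the $\sqrt{N/q^2}$-rescaling in line \ref{eq:maping-cntk-covar} restores the homogeneity. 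The same argument with $\sqrt{b_\ell}$ handles $\dt{\phi}^{(h)}$. The degree-2 \textsc{PolySketch} $Q^2$ with $m'=\Omega(L^2/\varepsilon^2\log^3 n)$ used in lines \ref{psi-cntk}--\ref{psi-cntk-last} realizes the Hadamard product in \cref{eq:dp-cntk} with per-layer accuracy $\bigo(\varepsilon/L)$, and the patch-sum direct sum propagates the $\psi$-invariant from layer $h{-}1$ to $h$. A union bound over the $\bigo(Ld_1d_2 p)$ sketch invocations is absorbed into the $\log^3 n$ factor by setting $\delta=1/\poly{n}$ in \cref{lem-polysketch}; the final averaging over pixels in the definition of $\Psi_{\mathrm{cntk}}^{(L)}$ converts the $\psi^{(L)}$-invariant into (B).

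\textbf{Runtime and main obstacle.} Per layer each of the $d_1d_2$ pixels requires $p+1$ applications of \textsc{PolySketch}es of degree up to $p$ to vectors of width $\bigo(q^2 p m)$, plus one degree-2 sketch; by parts (2)--(3) of \cref{lem-polysketch} the per-pixel cost is $\bigo(p^2 m\log m)$, giving the stated $\bigo(L p^2 m\log m\cdot d_1 d_2)$. I expect the main obstacle to be the depth-induction bookkeeping in (A) and (B): one must keep the per-layer error in the homogeneous form proportional to $\sqrt{N\cdot N'}/q^2$ so that (i) precondition (2)'s local stability of $\tkappa$ can be applied on the slightly enlarged domain $|t|\le 1+\bigo(\varepsilon)$ into which the sketch-perturbed iterate $\widetilde{A}$ may leak, and (ii) the per-layer errors add rather than compound across the $L$ layers, which is precisely what forces the $L^4$ blow-up in $m$ and the $L^2$ blow-up in $m'$.
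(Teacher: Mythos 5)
Your proposal is correct and rests on the same machinery as the paper's proof: a layer-by-layer induction driven by the \textsc{PolySketch} guarantees of \cref{lem-polysketch}, the Cauchy--Schwarz and norm identities of \cref{properties-gamma}, and preconditions {\bf (1)}--{\bf (2)} to control $\tkappa,\tkappa'$ on the slightly enlarged domain $|t|\le 1+\bigo(\varepsilon)$ reached by the perturbed iterate, followed by pixel averaging and a union bound. The one genuine difference is the error decomposition. You introduce an intermediate polynomial CNTK $\widetilde{\Theta}^{(L)}_{\mathrm{cntk}}$ (built from $\tkappa,\tkappa'$) and split the analysis into a deterministic polynomial-approximation stage (A) and a randomized sketching stage (B), each with its own depth induction; the paper instead runs a single induction whose invariants $P_1(h),P_2(h)$ compare the sketch inner products directly to the \emph{true} $\Gamma^{(h)}$ and $\Pi^{(h)}$, folding both error sources into one per-layer budget (e.g.\ $(h+1)\varepsilon/(60L^2)$ for $P_1(h)$ and $\tfrac{\varepsilon}{10}\tfrac{h^2}{L+1}$ for $P_2(h)$). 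Your factorization is arguably cleaner to read, but it obliges you to re-derive the analogues of \cref{properties-gamma} for the intermediate object --- in particular the diagonal identities for $\widetilde{\Pi}^{(h)}_{i,j,i,j}(y,y)$ and the bound $|\widetilde{A}|\le 1+\bigo(\varepsilon)$ --- before stage (B) can use them as the scale against which the Hadamard-product and \textsc{PolySketch} errors are measured; the paper sidesteps this by always benchmarking against the exact kernels, for which those identities hold exactly. Both routes produce the same $L$-dependence in $m$ and $m'$ and the same runtime accounting via parts (2)--(3) of \cref{lem-polysketch}.
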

\begin{proof}
	The correctness proof is by induction on the value of $h=0,1,2, \ldots L$. 
	More formally, consider the following invariants for every iteration $h=0,1,2, \ldots L$ of the algorithm:
	\begin{enumerate}[leftmargin=1.5cm]
		\item[${\bf P_1(h) :}$] Simultaneously for all $i,i' \in [d_1]$ and $j,j' \in [d_2]$:
		\[ \begin{split}
			&\left| \left< \phi_{i,j}^{(h)}(y), \phi_{i',j'}^{(h)}(z) \right> - \Gamma_{i,j,i',j'}^{(h)}\left( y , z \right) \right| \le ({h+1}) \cdot \frac{\varepsilon}{60L^2} \cdot \frac{\sqrt{N_{i,j}^{(h)}(y) \cdot N_{i',j'}^{(h)}(z)}}{q^2},\\
			&\left| \left\| \phi_{i,j}^{(h)}(y) \right\|_2^2 - \Gamma_{i,j,i,j}^{(h)}\left( y , y \right) \right| \le  \frac{({h+1}) \cdot\varepsilon}{60L^2} \cdot \frac{N_{i,j}^{(h)}(y)}{q^2}, \\ 
			&\left| \left\| \phi_{i',j'}^{(h)}(z) \right\|_2^2 - \Gamma_{i',j',i',j'}^{(h)}\left( z , z \right) \right| \le  \frac{({h+1}) \cdot\varepsilon}{60L^2} \cdot \frac{N_{i',j'}^{(h)}(z)}{q^2}.
		\end{split} \]
		\item[${\bf P_2(h) :}$] Simultaneously for all $i,i' \in [d_1]$ and $j,j' \in [d_2]$:
		\[ \begin{split}
			&\left| \left< \psi_{i,j}^{(h)}(y), \psi_{i',j'}^{(h)}(z) \right> - \Pi_{i,j,i',j'}^{(h)}\left( y , z \right) \right| \le \begin{cases}
				\frac{\varepsilon}{10} \cdot \frac{h^2}{L+1} \cdot \sqrt{N^{(h+1)}_{i,j}(y) \cdot N^{(h+1)}_{i',j'}(z)} &\text{if } h<L\\
				\frac{\varepsilon}{10} \cdot \frac{L-1}{q^2} \cdot \sqrt{N^{(L)}_{i,j}(y) \cdot N^{(L)}_{i',j'}(z)} &\text{if } h=L
			\end{cases},\\
			&{(\text{only for } h<L):} ~~~ \left| \left\| \psi_{i,j}^{(h)}(y)\right\|_2^2 - \Pi_{i,j,i,j}^{(h)}\left( y , y \right) \right| \le \frac{\varepsilon}{10} \cdot \frac{h^2}{L+1} \cdot N^{(h+1)}_{i,j}(y),\\
			&{(\text{only for } h<L):} ~~~ \left| \left\| \psi_{i',j'}^{(h)}(z)\right\|_2^2 - \Pi_{i',j',i',j'}^{(h)}\left( z , z \right) \right| \le \frac{\varepsilon}{10} \cdot \frac{h^2}{L+1} \cdot N^{(h+1)}_{i',j'}(z).
		\end{split}
		\]
		
	\end{enumerate}
We prove that probabilities $\Pr[ P_1(0)]$ and $\Pr[ P_2(0)|P_1(0)]$ are both greater than $1 - \frac{1}{\poly{n}}$. Additionally, for every $h = 1,2, \ldots L$, we prove that the conditional probabilities $\Pr[ P_1(h) | P_1(h-1)]$ and $\Pr[ P_2(h) | P_2(h-1), P_1(h), P_1(h-1)]$ are greater than $1 - \frac{1}{\poly{n}}$.
These invariants immediately give the correctness proof.

The {\bf base of induction} corresponds to $h=0$. By line~\ref{cntk-sketch-covar-zero} of the algorithm, $\phi_{i,j}^{(0)}(y) = y_{i,j,:}$ and $\phi_{i',j'}^{(0)}(z) = z_{i',j',:}$, therefore, by using \cref{eq:dp-cntk-covar-simplified}, it trivially holds that $\Pr[ P_1(0) ] = 1 \ge 1 - \frac{1}{\poly{n}}$. Moreover, by line~\ref{cntk-sketch-covar-zero}, we have that $\psi_{i,j}^{(0)}(y) = 0$ and $\psi_{i',j'}^{(0)}(z) = 0$, thus, by \cref{eq:dp-cntk}, it trivially holds that $\Pr[P_2(0)|P_1(0)] = 1 \ge 1 - \frac{1}{\poly{n}}$. This completes the base of induction.

We now proceed to prove the {\bf inductive step}. By assuming the inductive hypothesis for $h-1$, we prove that statements $P_1(h)$ and $P_2(h)$ hold. More precisely, first we condition on the statement $P_1(h-1)$ being true for some $h \ge 1$, and then prove that $P_1(h)$ holds with probability at least $1 - \frac{1}{\poly{n}}$. Next we show that conditioned on statements $P_2(h-1), P_1(h), P_1(h-1)$ being true, $P_2(h)$ holds with probability at least $1 - \frac{1}{\poly{n}}$. This will complete the induction.

First, by conditioning on the inductive hypothesis $P_1(h-1)$ and using the definition of $\mu_{i,j}^{(h)}(\cdot)$ in line~\ref{eq:def-Z-vectors-mu} of the algorithm and applying Cauchy–Schwarz inequality and invoking \cref{properties-gamma} we find that,
% 	\small
	\begin{equation}\label{eq:mu-innerprod-bound}
		\begin{split}
			&\left| \left<\mu_{i,j}^{(h)}(y), \mu_{i',j'}^{(h)}(z)\right> - \frac{\sum_{a=-\frac{q-1}{2}}^{\frac{q-1}{2}} \sum_{b=-\frac{q-1}{2}}^{\frac{q-1}{2}}  \Gamma_{i+a,j+b,i'+a,j'+b}^{(h-1)}\left( y , z \right)}{\sqrt{N^{(h)}_{i,j}(y) \cdot N^{(h)}_{i',j'}(z)}} \right|\\ 
			& \le \frac{\sum_{a=-\frac{q-1}{2}}^{\frac{q-1}{2}} \sum_{b=-\frac{q-1}{2}}^{\frac{q-1}{2}}  \sqrt{N_{i+a,j+b}^{(h-1)}(y) \cdot N_{i'+a,j'+b}^{(h-1)}(z)}}{q^2 \cdot \sqrt{N^{(h)}_{i,j}(y) \cdot N^{(h)}_{i',j'}(z)}} \cdot  \frac{h \cdot \varepsilon}{60L^2}\\
			& \le \frac{\sqrt{\sum_{a=-\frac{q-1}{2}}^{\frac{q-1}{2}} \sum_{b=-\frac{q-1}{2}}^{\frac{q-1}{2}} N_{i+a,j+b}^{(h-1)}(y) / q^2 } \cdot \sqrt{\sum_{a=-\frac{q-1}{2}}^{\frac{q-1}{2}} \sum_{b=-\frac{q-1}{2}}^{\frac{q-1}{2}} N_{i'+a,j'+b}^{(h-1)}(z) / q^2}}{ \sqrt{N^{(h)}_{i,j}(y) \cdot N^{(h)}_{i',j'}(z)}} \cdot  \frac{h \cdot \varepsilon}{60L^2}\\
			&= h \cdot \frac{\varepsilon}{60L^2},
		\end{split}
	\end{equation}
% 	\normalsize
	where the last line follows from \cref{eq:dp-cntk-norm-simplified}.
	
	Furthermore, if we let the collection of vectors $\left\{Z^{(h)}_{i,j,\ell}(y)\right\}_{\ell=0}^{p}$ and $\left\{Z^{(h)}_{i,j,\ell}(z)\right\}_{\ell=0}^{p}$ be defined as per line~\ref{eq:def-Z-vectors-mu} of the algorithm, then by \cref{lem-polysketch} and union bound, the following inequalities hold, with probability at least $1 - \frac{1}{\poly{n}}$, simultaneously for all $\ell = 0,1,2, \ldots p$, all $i,i' \in [d_1]$ and $j,j' \in [d_2]$:
	\begin{align}
		&\left|\left<Z^{(h)}_{i,j, \ell}(y), Z^{(h)}_{i',j', \ell}(z)\right> - \left<\mu_{i,j}^{(h)}(y), \mu_{i',j'}^{(h)}(z)\right>^\ell \right| \le \bigo\left( \frac{\varepsilon}{L^2} \right) \left\| \mu_{i,j}^{(h)}(y) \right\|_2^\ell \left\| \mu_{i',j'}^{(h)}(z)\right\|_2^\ell \nonumber\\
		&\left\| Z^{(h)}_{i,j, \ell}(y) \right\|_2^2 \le \frac{11}{10} \cdot \left\| \mu_{i,j}^{(h)}(y) \right\|_2^{2\ell} \label{eq:Zinner-prod-bound}\\
		& \left\| Z^{(h)}_{i',j', \ell}(z) \right\|_2^2 \le \frac{11}{10} \cdot \left\| \mu_{i',j'}^{(h)}(z) \right\|_2^{2\ell} \nonumber
	\end{align}
	Therefore, by Cauchy–Schwarz inequality, we find that with probability at least $1 - \frac{1}{\poly{n}}$, the following holds simultaneously for all $i,i' \in [d_1]$ and $j,j' \in [d_2]$:
	\begin{equation} \label{eq:phi-inner-prod-bound2}
		\left| \left< \phi_{i,j}^{(h)}(y), \phi_{i',j'}^{(h)}(z)\right> - \frac{\sqrt{N^{(h)}_{i,j}(y) N^{(h)}_{i',j'}(z)}}{q^2} \cdot \tkappa\left( \left<\mu_{i,j}^{(h)}(y), \mu_{i',j'}^{(h)}(z)\right> \right) \right| \le \bigo\left(\frac{\varepsilon}{L^2}\right) \cdot B,
	\end{equation}
	where $B:= \frac{\sqrt{N^{(h)}_{i,j}(y) N^{(h)}_{i',j'}(z)}}{q^2} \cdot \sqrt{\tkappa\left(\|\mu^{(h)}_{i,j}(y)\|_2^2\right) \cdot \tkappa\left(\|\mu^{(h)}_{i',j'}(z)\|_2^2\right)}$.
	
	By conditioning on the inductive hypothesis $P_1(h-1)$ and using \cref{properties-gamma} we have, 
	\[
	\left| \left\| \mu_{i,j}^{(h)}(y) \right\|_2^2 - 1 \right| \le h \cdot \frac{\varepsilon}{60L^2}, \text{ and } \left| \left\| \mu_{i',j'}^{(h)}(z) \right\|_2^2 - 1 \right| \le h \cdot \frac{\varepsilon}{60L^2}.
	\]
	Therefore, the precondition of the theorem implies that $\left| \tkappa\left(\|\mu^{(h)}_{i,j}(y)\|_2^2\right) - \tkappa(1) \right| \le h \cdot \frac{\varepsilon}{60L^2}$ and $\left| \tkappa\left(\|\mu^{(h)}_{i',j'}(z)\|_2^2\right) - \tkappa(1) \right| \le h \cdot \frac{\varepsilon}{60L^2}$. Consequently, because $\tkappa(1) \le 1.01 \kappa(1) = 1.01$, we find that
	\[ B \le \frac{11}{10} \cdot \frac{\sqrt{N^{(h)}_{i,j}(y) N^{(h)}_{i',j'}(z)}}{q^2}.\]
	By plugging this into \cref{eq:phi-inner-prod-bound2} we find that the following holds simultaneously for all $i,i' \in [d_1]$ and all $j,j' \in [d_2]$, with probability at least $1 - \frac{1}{\poly{n}}$,
	\small
	\begin{equation} \label{eq:phi-inner-prod-bound3}
		\left| \left< \phi_{i,j}^{(h)}(y), \phi_{i',j'}^{(h)}(z)\right> - \frac{\sqrt{N^{(h)}_{i,j}(y) N^{(h)}_{i',j'}(z)}}{q^2} \cdot \tkappa\left( \left<\mu_{i,j}^{(h)}(y), \mu_{i',j'}^{(h)}(z)\right> \right) \right| \le \bigo\left(\frac{\varepsilon}{L^2}\right) \cdot \frac{\sqrt{N^{(h)}_{i,j}(y) N^{(h)}_{i',j'}(z)}}{q^2}.
	\end{equation}
	\normalsize
		
% 	For shorthand, 
	We recall that $A := \frac{1}{\sqrt{N^{(h)}_{i,j}(y) \cdot N^{(h)}_{i',j'}(z)}} \sum_{a=-\frac{q-1}{2}}^{\frac{q-1}{2}} \sum_{b=-\frac{q-1}{2}}^{\frac{q-1}{2}}  \Gamma_{i+a,j+b,i'+a,j'+b}^{(h-1)}\left( y , z \right)$ and
	\[
	\Gamma_{i,j,i',j'}^{(h)}(y,z) = \frac{\sqrt{N^{(h)}_{i,j}(y) N^{(h)}_{i',j'}(z)}}{q^2} \kappa(A).
	\]
	Note that by \cref{properties-gamma} and \cref{eq:dp-cntk-norm-simplified}, $-1 \le A \le 1$. Hence, using the precondition of the theorem and \cref{eq:mu-innerprod-bound} to find that,
	\[ \left|\tkappa\left( \left<\mu_{i,j}^{(h)}(y), \mu_{i',j'}^{(h)}(z)\right> \right) - \tkappa\left(A\right) \right| \le h \cdot \frac{\varepsilon}{60L^2}. \]
	By incorporating the above inequality into \cref{eq:phi-inner-prod-bound3} using triangle inequality we find that, with probability at least $1 - \frac{1}{\poly{n}}$, the following holds simultaneously for all $i,i' \in [d_1]$ and all $j,j'\in [d_2]$:
	\begin{equation} \label{eq:phi-inner-prod-bound4}
		\left| \left< \phi_{i,j}^{(h)}(y), \phi_{i',j'}^{(h)}(z)\right> - \frac{\sqrt{N^{(h)}_{i,j}(y) N^{(h)}_{i',j'}(z)}}{q^2} \cdot \tkappa\left(A\right) \right| \le \left(\bigo\left(\frac{\varepsilon}{L^2}\right) + \frac{h \cdot\varepsilon}{60L^2}\right) \cdot \frac{\sqrt{N^{(h)}_{i,j}(y) N^{(h)}_{i',j'}(z)}}{q^2}.
	\end{equation}

	Additionally, since $-1 \le A \le 1$, using the preconditions of the theorem we can conclude that $\left| \tkappa\left(A\right) - \kappa(A) \right| \le \frac{\varepsilon}{76 L^2}$.
	By combining the above inequality with \cref{eq:phi-inner-prod-bound4} via triangle inequality and using the fact that, by \cref{eq:dp-cntk-covar-simplified}, %$\beta \cdot \kappa(A) \equiv \Gamma_{i,j,i',j'}^{(h)}(y,z)$ 
	we get the following inequality, with probability at least $1 - \frac{1}{\poly{n}}$
	\[  \left| \left< \phi_{i,j}^{(h)}(y), \phi_{i',j'}^{(h)}(z)\right> - \Gamma_{i,j,i',j'}^{(h)}(y,z) \right| \le (h+1) \cdot \frac{\varepsilon}{60L^2} \cdot \frac{\sqrt{N^{(h)}_{i,j}(y) N^{(h)}_{i',j'}(z)}}{q^2}. \]
	Similarly, we can prove that with probability at least $1 - \frac{1}{\poly{n}}$ the following hold, simultaneously for all $i,i' \in [d_1]$ and $j,j' \in [d_2]$,
	\begin{align*}
	&\left| \left\| \phi_{i,j}^{(h)}(y)\right\|_2^2 - \Gamma_{i,j,i,j}^{(h)}(y,y) \right| \le \frac{(h+1) \varepsilon}{60L^2} \cdot \frac{N^{(h)}_{i,j}(y)}{q^2},\\
	&\left| \left\| \phi_{i',j'}^{(h)}(z)\right\|_2^2 - \Gamma_{i',j',i',j'}^{(h)}(z,z) \right| \le \frac{(h+1) \varepsilon}{60L^2} \cdot \frac{N^{(h)}_{i',j'}(z)}{q^2}.
	\end{align*}
	This is sufficient to prove the inductive step for statement $P_1(h)$, i.e., $\Pr[P_1(h)|P_1(h-1)] \ge 1 - \frac{1}{\poly{n}}$.
	
	Now we prove the inductive step for statement $P_2(h)$. That is, we prove that conditioned on $P_2(h-1), P_1(h)$, and $P_1(h-1)$, $P_2(h)$ holds with probability at least $1-\frac{1}{\poly{n}}$.
    First note that using the definition of $\dt{\phi}_{i,j}^{(h)}(y), \dt{\phi}_{i',j'}^{(h)}(z)$ in line~\ref{eq:cntk-map-phidot} of the algorithm and \cref{eq:Zinner-prod-bound},we find that with probability at least $1 - \frac{1}{\poly{n}}$, the following holds simultaneously for all $i,i' \in [d_1]$ and $j,j' \in [d_2]$:
	\begin{equation} \label{cntk:phi-dot-bound2}
		\left| \left< \dt{\phi}_{i,j}^{(h)}(y), \dt{\phi}_{i',j'}^{(h)}(z)\right> - \frac{1}{q^2} \cdot \tkappa'\left( \left<\mu_{i,j}^{(h)}(y), \mu_{i',j'}^{(h)}(z)\right> \right) \right| \le \bigo\left(\frac{\varepsilon}{L^2}\right) \cdot \widehat{B},
	\end{equation}
	where $\widehat{B}:= \frac{1}{q^2} \cdot \sqrt{\tkappa'\left(\|\mu_{i,j}^{(h)}(y)\|_2^2\right) \cdot \tkappa'\left(\|\mu_{i',j'}^{(h)}(z)\|_2^2\right)}$.
	By conditioning on the inductive hypothesis $P_1(h-1)$ and using \cref{properties-gamma} we have, 
	$\left| \left\| \mu_{i,j}^{(h)}(y) \right\|_2^2 - 1 \right| \le h \cdot \frac{\varepsilon}{60L^2}$ and $\left| \left\| \mu_{i',j'}^{(h)}(z) \right\|_2^2 - 1 \right| \le h \cdot \frac{\varepsilon}{60L^2}$.
	Therefore, the precondition of the theorem implies that $\left| \tkappa'\left(\|\mu^{(h)}_{i,j}(y)\|_2^2\right) - \tkappa'(1) \right| \le h \cdot \frac{\varepsilon}{20L^2}$ and $\left| \tkappa'\left(\|\mu^{(h)}_{i',j'}(z)\|_2^2\right) - \tkappa'(1) \right| \le h \cdot \frac{\varepsilon}{20L^2}$. Consequently, because $\tkappa'(1) \le 1.01 \kappa'(1) = 1.01$, we find that
	\[ \widehat{B} \le \frac{11}{10}\frac{1}{q^2}.\]
	By plugging this into \cref{cntk:phi-dot-bound2} we get the following, with probability at least $1 - \bigo\left( \frac{\delta}{L} \right)$,
	\begin{equation} \label{cntk:phi-dot-bound3}
		\left| \left< \dt{\phi}_{i,j}^{(h)}(y), \dt{\phi}_{i',j'}^{(h)}(z)\right> - \frac{1}{q^2} \cdot \tkappa'\left( \left<\mu_{i,j}^{(h)}(y), \mu_{i',j'}^{(h)}(z)\right> \right) \right| \le \bigo\left(\frac{\varepsilon}{q^2\cdot L^2}\right).
	\end{equation}
	Furthermore, 
% 	recall the notation $\gamma = \frac{\sum_{a=-\frac{q-1}{2}}^{\frac{q-1}{2}} \sum_{b=-\frac{q-1}{2}}^{\frac{q-1}{2}}  \Gamma_{i+a,j+b,i'+a,j'+b}^{(h-1)}\left( y , z \right)}{\sqrt{N^{(h)}_{i,j}(y) \cdot N^{(h)}_{i',j'}(z)}}$ and note that by \cref{properties-gamma} and \cref{eq:dp-cntk-norm-simplified}, $-1 \le \gamma \le 1$. Hence, 
    we can use the precondition of the theorem to find that \cref{eq:mu-innerprod-bound} implies the following,
	\[ \left|\tkappa'\left( \left<\mu_{i,j}^{(h)}(y), \mu_{i',j'}^{(h)}(z)\right> \right) - \tkappa'\left(A\right) \right| \le \frac{h \cdot \varepsilon}{20L^2}. \]
% 	where we recall $A = \frac{\sum_{a=-\frac{q-1}{2}}^{\frac{q-1}{2}} \sum_{b=-\frac{q-1}{2}}^{\frac{q-1}{2}}  \Gamma_{i+a,j+b,i'+a,j'+b}^{(h-1)}\left( y , z \right)}{\sqrt{N^{(h)}_{i,j}(y) \cdot N^{(h)}_{i',j'}(z)}}$.
	By incorporating the above inequality into \cref{cntk:phi-dot-bound3} using triangle inequality, we find that, with probability at least $1 - \frac{1}{\poly{n}}$, the following holds simultaneously for all $i,i' \in [d_1]$ and all $j,j'\in [d_2]$:
	\begin{equation} \label{cntk:phi-dot-bound4}
		\left| \left< \dt{\phi}_{i,j}^{(h)}(y), \dt{\phi}_{i',j'}^{(h)}(z)\right> - \frac{1}{q^2} \cdot \tkappa'\left(A\right) \right| \le \bigo\left(\frac{\varepsilon}{q^2 L^2}\right) + \frac{h}{q^2} \cdot \frac{\varepsilon}{20L^2}.
	\end{equation}
	Since $-1 \le A \le 1$, we can use the precondition of the theorem to conclude $\left| \tkappa'\left(A\right) - \kappa'\left(A\right) \right| \le \frac{\varepsilon}{15 L^2}$.
	By combining this inequality with \cref{cntk:phi-dot-bound4} via triangle inequality and using the fact that $\dt{\Gamma}_{i,j,i',j'}^{(h)}(y,z) = \frac{1}{q^2} \cdot \kappa'(A)$, we get the following bound simultaneously for all $i,i' \in [d_1]$ and all $j,j'\in [d_2]$, with probability at least $1 - \frac{1}{\poly{n}}$:
	\begin{equation}\label{cntk:phi-dot-bound-final}
		\left| \left< \dt{\phi}_{i,j}^{(h)}(y), \dt{\phi}_{i',j'}^{(h)}(z)\right> - \dt{\Gamma}_{i,j,i',j'}^{(h)}(y,z) \right| \le \frac{1}{q^2} \cdot \frac{\varepsilon}{8L}.
	\end{equation}
	Similarly we can prove that with probability at least $1 - \frac{1}{\poly{n}}$, the following hold simultaneously for all $i,i' \in [d_1]$ and all $j,j'\in [d_2]$,
	\begin{equation}\label{cntk:phi-norm-bound-final}
		\left| \left\| \dt{\phi}_{i,j}^{(h)}(y)\right\|_2^2 - \dt{\Gamma}_{i,j,i,j}^{(h)}(y,y) \right| \le \frac{1}{q^2} \cdot \frac{\varepsilon}{8L}, \text{ and } \left| \left\| \dt{\phi}_{i',j'}^{(h)}(z)\right\|_2^2 - \dt{\Gamma}_{i',j',i',j'}^{(h)}(z,z) \right| \le \frac{1}{q^2} \cdot \frac{\varepsilon}{8L}.
	\end{equation}
	We will use \cref{cntk:phi-dot-bound-final} and \cref{cntk:phi-norm-bound-final} to prove the inductive step for $P_2(h)$.

Next, we consider two cases for the value of $h$. When $h<L$, the vectors $\psi_{i,j}^{(h)}(y) , \psi_{i',j'}^{(h)}(z)$ are defined in line~\ref{psi-cntk} and when $h=L$, these vectors are defined differently in line~\ref{psi-cntk-last}. First we consider the case of $h<L$.
If we let $f_{i,j} := \psi^{(h-1)}_{i,j}(y) \otimes \dt{\phi}_{i,j}^{(h)}(y)$ and $g_{i',j'} := \psi^{(h-1)}_{i',j'}(z) \otimes \dt{\phi}_{i',j'}^{(h)}(z)$ and $\eta_{i,j}^{(h)}(y) := \left(Q^2\cdot f_{i,j}\right) \oplus \phi_{i,j}^{(h)}(y)$ and $\eta_{i',j'}^{(h)}(z) := \left(Q^2\cdot g_{i',j'}\right) \oplus \phi_{i',j'}^{(h)}(z)$, then by \cref{lem-polysketch} and union bound, with probability at least $1 - \frac{1}{\poly{n}}$, we have the following inequalities simultaneously for all $i,i' \in [d_1]$ and $j,j' \in [d_2]$:
	\begin{align}
		& \left|\left<\eta^{(h)}_{i,j}(y), \eta^{(h)}_{i',j'}(z)\right> - \langle f_{i,j},g_{i',j'} \rangle - \left<\phi_{i,j}^{(h)}(y), \phi_{i',j'}^{(h)}(z)\right> \right| \le \bigo\left( \frac{\varepsilon}{L} \right) \cdot \left\| f_{i,j} \right\|_2 \left\| g_{i',j'} \right\|_2  \nonumber\\
		& \left\| \eta^{(h)}_{i,j}(y)\right\|_2^2 \le \frac{11}{10} \cdot  \|f_{i,j}\|_2^2 + \left\| \phi_{i,j}^{(h)}(y) \right\|_2^2 \label{eta-innerprod-bound}\\
		& \left\| \eta^{(h)}_{i',j'}(z)\right\|_2^2 \le \frac{11}{10} \cdot \|g_{i',j'}\|_2^2 + \left\| \phi_{i',j'}^{(h)}(z) \right\|_2^2 \nonumber
	\end{align}
		Now we bound the term $\left|\left<\eta^{(h)}_{i,j}(y), \eta^{(h)}_{i',j'}(z)\right> - \langle f_{i,j},g_{i',j'} \rangle - \left<\phi_{i,j}^{(h)}(y), \phi_{i',j'}^{(h)}(z)\right> \right|$ using \cref{eta-innerprod-bound}, \cref{cntk:phi-norm-bound-final}, and \cref{properties-gamma} along with inductive hypotheses $P_2(h-1)$. With probability at least $1 - \frac{1}{\poly{n}}$ the following holds simultaneously for all $i,i' \in [d_1]$ and all $j,j' \in [d_2]$:
		
	\[
	\begin{split}
		&\left|\left<\eta^{(h)}_{i,j}(y), \eta^{(h)}_{i',j'}(z)\right> - \langle f_{i,j},g_{i',j'} \rangle - \left<\phi_{i,j}^{(h)}(y), \phi_{i',j'}^{(h)}(z)\right> \right| \\
		&\qquad \le \bigo\left( \frac{\varepsilon}{L} \right) \cdot \sqrt{\Pi_{i,j,i,j}^{(h-1)}(y,y) \cdot \dot{\Gamma}_{i,j,i,j}^{(h)}(y,y)\cdot \Pi_{i',j',i',j'}^{(h-1)}(z,z) \cdot \dot{\Gamma}_{i',j',i',j'}^{(h)}(z,z)}\\
		&\qquad = \bigo\left( \frac{\varepsilon \cdot h}{L} \right) \cdot \frac{ \sqrt{N_{i,j}^{(h)}(y) \cdot N_{i',j'}^{(h)}(z)}}{q^2},
	\end{split}
	\]
	where the last line above follows from \cref{properties-gamma} together with the fact that $\dot{\Gamma}_{i,j,i,j}^{(h)}(y,y) = \dot{\Gamma}_{i',j',i',j'}^{(h)}(z,z) = \frac{1}{q^2}$.
By combining the above with inductive hypotheses $P_1(h), P_2(h-1)$ and \cref{cntk:phi-dot-bound-final} via triangle inequality and invoking \cref{properties-gamma} we get that the following holds simultaneously for all $i,i' \in [d_1]$ and all $j,j' \in [d_2]$, with probability at least $1 - \frac{1}{\poly{n}}$,
	\small
	\begin{align}\label{eta-innerprod-bound2}
		&\left|\left<\eta^{(h)}_{i,j}(y), \eta^{(h)}_{i',j'}(z)\right> - \Pi_{i,j,i',j'}^{(h-1)}(y,z)\cdot \dot{\Gamma}_{i,j,i',j'}^{(h)}(y,z) - \Gamma_{i,j,i',j'}^{(h)}(y,z) \right| \nonumber\\
		& \le \frac{\varepsilon}{10} \cdot \frac{(h-1)^2}{L+1} \cdot \sqrt{N_{i,j}^{(h)}(y) \cdot N_{i',j'}^{(h)}(z)} \cdot \left(\left| \dot{\Gamma}_{i,j,i',j'}^{(h)}(y,z) \right|+ \frac{1}{q^2} \cdot \frac{\varepsilon}{8L} \right) + \frac{1}{q^2} \cdot \frac{\varepsilon}{8L} \cdot \left| \Pi_{i,j,i',j'}^{(h-1)}(y,z)\right| \nonumber\\ 
		& + \frac{(h+1) \cdot \varepsilon}{60L^2} \cdot \frac{\sqrt{N_{i,j}^{(h)}(y) \cdot N_{i',j'}^{(h)}(z)}}{q^2} + \bigo\left( \frac{\varepsilon \cdot h}{L} \right) \cdot \frac{ \sqrt{N_{i,j}^{(h)}(y) \cdot N_{i',j'}^{(h)}(z)}}{q^2}\nonumber\\
		& \le \frac{\varepsilon}{10} \cdot \frac{(h-1)^2}{L+1} \cdot \frac{\sqrt{N_{i,j}^{(h)}(y) \cdot N_{i',j'}^{(h)}(z)}}{q^2} \cdot \left(1 + \frac{\varepsilon}{8L} \right) + \frac{h-1}{q^2} \cdot \frac{\varepsilon}{8L} \cdot \sqrt{N_{i,j}^{(h)}(y) \cdot N_{i',j'}^{(h)}(z)}\nonumber\\
		&+ \left( \frac{(h+1) \cdot \varepsilon}{60L^2} + \bigo\left( \frac{\varepsilon \cdot h}{L} \right) \right) \cdot \frac{\sqrt{N_{i,j}^{(h)}(y) \cdot N_{i',j'}^{(h)}(z)}}{q^2} \nonumber\\
		&\le \frac{\varepsilon}{10} \cdot \frac{h^2-h/2}{L+1} \cdot \frac{\sqrt{N_{i,j}^{(h)}(y) \cdot N_{i',j'}^{(h)}(z)}}{q^2}.\nonumber
	\end{align}
	\normalsize
By plugging the above bound into the definition of  in line~\ref{psi-cntk} of the algorithm using triangle inequality and using \cref{eq:dp-cntk} we get the following with probability at least $1 - \frac{1}{\poly{n}}$:
	\begin{equation}\label{eq:psi-bound3}
		\begin{split}
			&\left| \left< \psi_{i,j}^{(h)}(y) , \psi_{i',j'}^{(h)}(z) \right> - \Pi_{i,j,i',j'}^{(h)}(y,z) \right|\\ 
			&\le \frac{\varepsilon}{10} \cdot \frac{h^2-h/2}{L+1} \cdot \sum_{a=-\frac{q-1}{2}}^{\frac{q-1}{2}} \sum_{b=-\frac{q-1}{2}}^{\frac{q-1}{2}}  \frac{\sqrt{N_{i+a,j+b}^{(h)}(y) \cdot N_{i'+a,j'+b}^{(h)}(z)}}{q^2}\\
			&\le  \frac{\varepsilon}{10} \cdot \frac{h^2-h/2}{L+1} \cdot \sqrt{\sum_{a=-\frac{q-1}{2}}^{\frac{q-1}{2}} \sum_{b=-\frac{q-1}{2}}^{\frac{q-1}{2}} \frac{N_{i+a,j+b}^{(h)}(y)}{q^2}} \cdot \sqrt{ \sum_{a=-\frac{q-1}{2}}^{\frac{q-1}{2}} \sum_{b=-\frac{q-1}{2}}^{\frac{q-1}{2}} \frac{N_{i'+a,j'+b}^{(h)}(z)}{q^2}}\\
			&\le \frac{\varepsilon}{10} \cdot \frac{h^2}{L+1} \cdot\sqrt{ N^{(h+1)}_{i,j}(y) \cdot N^{(h+1)}_{i',j'}(z)}.
		\end{split}
	\end{equation}
    
    Similarly, we can prove that with probability at least $1 - \frac{1}{\poly{n}}$ the following hold simultaneously for all $i,i' \in [d_1]$ and all $j,j' \in [d_2]$,
	\[ \begin{split}
		&\left| \left\| \psi_{i,j}^{(h)}(y)\right\|_2^2 - \Pi_{i,j,i,j}^{(h)}(y,y) \right| \le \frac{\varepsilon}{10} \cdot \frac{h^2}{L+1} \cdot N^{(h+1)}_{i,j}(y),\\ 
		&\left| \left\| \psi_{i',j'}^{(h)}(z)\right\|_2^2 - \Pi_{i',j',i',j'}^{(h)}(z,z) \right| \le \frac{\varepsilon}{10} \cdot \frac{h^2}{L+1} \cdot N^{(h+1)}_{i',j'}(z).
	\end{split} \]
	This is sufficient to prove the inductive step for statement $P_2(h)$, in the case of $h<L$, i.e., $\Pr[P_2(h)|P_2(h-1), P_1(h), P_1(h-1)] \ge 1 - \frac{1}{\poly{n}}$.

Now we prove the inductive step for $P_2(h)$ in the case of $h=L$. Similar to before, if we let $f_{i,j} := \psi^{(L-1)}_{i,j}(y) \otimes \dot{\phi}_{i,j}^{(L)}(y)$ and $g_{i',j'} := \psi^{(L-1)}_{i',j'}(z) \otimes \dot{\phi}_{i',j'}^{(L)}(z)$, then by \eqref{psi-cntk-last}, we have $\psi_{i,j}^{(L)}(y) = \left(Q^2\cdot f_{i,j}\right)$ and $\psi_{i',j'}^{(L)}(z) = \left(Q^2\cdot g_{i',j'}\right)$. Thus by \cref{lem-polysketch} and union bound, we find that, with probability at least $1 - \frac{1}{\poly{n}}$, the following inequality holds simultaneously for all $i,i' \in [d_1]$ and $j,j' \in [d_2]$:
	\[ \left|\left<\psi^{(L)}_{i,j}(y), \psi^{(L)}_{i',j'}(z)\right> - \langle f_{i,j},g_{i',j'} \rangle \right| \le \bigo\left( \frac{\varepsilon}{L} \right) \cdot \left\| f_{i,j} \right\|_2 \left\| g_{i',j'} \right\|_2.\]
	Therefore, using \eqref{cntk:phi-norm-bound-final} and \cref{properties-gamma} along with inductive hypotheses $P_2(L-1)$, with probability at least $1 - \frac{1}{\poly{n}}$, the following holds simultaneously for all $i,i' \in [d_1]$ and $j,j' \in [d_2]$,
	\[
	\begin{split}
		&\left|\left<\psi^{(L)}_{i,j}(y), \psi^{(L)}_{i',j'}(z)\right> - \langle f_{i,j},g_{i',j'} \rangle \right| \\
		&\qquad \le \bigo\left( \frac{\varepsilon}{L} \right) \cdot \sqrt{\Pi_{i,j,i,j}^{(L-1)}(y,y) \cdot \dot{\Gamma}_{i,j,i,j}^{(L)}(y,y)\cdot \Pi_{i',j',i',j'}^{(L-1)}(z,z) \cdot \dot{\Gamma}_{i',j',i',j'}^{(L)}(z,z)}\\
		&\qquad = \bigo\left( {\varepsilon} \right) \cdot \frac{\sqrt{N_{i,j}^{(L)}(y) \cdot N_{i',j'}^{(L)}(z)}}{q^2}.
	\end{split}
	\]
	By combining the above with inductive hypotheses $P_1(L), P_2(L-1)$ and \cref{cntk:phi-dot-bound-final} via triangle inequality and invoking \cref{properties-gamma} and also using the definition of $\Pi^{(L)}(y,z)$ given in \cref{eq:dp-cntk-last-layer}, we get that the following holds, simultaneously for all $i,i' \in [d_1]$ and $j,j' \in [d_2]$, with probability at least $1 - \frac{1}{\poly{n}}$,
		\begin{align}
		&\left|\left<\psi^{(L)}_{i,j}(y), \psi^{(L)}_{i',j'}(z)\right> - \Pi_{i,j,i',j'}^{(L)}(y,z) \right| \nonumber\\
		& \le \frac{\varepsilon}{10} \cdot \frac{(L-1)^2}{L+1} \cdot \sqrt{N_{i,j}^{(L)}(y) \cdot N_{i',j'}^{(L)}(z)} \cdot \left(\left| \dot{\Gamma}_{i,j,i',j'}^{(L)}(y,z) \right|+ \frac{1}{q^2} \cdot \frac{\varepsilon}{8L} \right) + \frac{1}{q^2} \cdot \frac{\varepsilon}{8L} \cdot \left| \Pi_{i,j,i',j'}^{(L-1)}(y,z)\right| \nonumber\\ 
		& + \frac{(L+1) \cdot \varepsilon}{60L^2} \cdot \frac{\sqrt{N_{i,j}^{(L)}(y) \cdot N_{i',j'}^{(L)}(z)}}{q^2} + \bigo\left( {\varepsilon} \right) \cdot \frac{ \sqrt{N_{i,j}^{(L)}(y) \cdot N_{i',j'}^{(L)}(z)}}{q^2}\nonumber\\
		& \le \frac{\varepsilon}{10} \cdot \frac{(L-1)^2}{L+1} \cdot \frac{\sqrt{N_{i,j}^{(L)}(y) \cdot N_{i',j'}^{(L)}(z)}}{q^2} \cdot \left(1 + \frac{\varepsilon}{8L} \right) +  \frac{\varepsilon}{8q^2} \cdot \sqrt{N_{i,j}^{(L)}(y) \cdot N_{i',j'}^{(L)}(z)}\nonumber\\
		&+ \left( \frac{(L+1) \cdot \varepsilon}{60L^2} + \bigo\left( {\varepsilon} \right) \right) \cdot \frac{\sqrt{N_{i,j}^{(L)}(y) \cdot N_{i',j'}^{(L)}(z)}}{q^2} \nonumber\\
		&\le \frac{\varepsilon \cdot (L-1)}{10} \cdot \frac{\sqrt{N_{i,j}^{(L)}(y) \cdot N_{i',j'}^{(L)}(z)}}{q^2}.\nonumber
	\end{align}
	This proves the inductive step for statement $P_2(h)$, in the case of $h=L$, i.e., $\Pr[P_2(L)|P_2(L-1), P_1(L), P_1(L-1)] \ge 1 - \frac{1}{\poly{n}}$.
	The induction is complete and hence the correctness of \cref{alg-cntk-sketch} is proved by union bounding over all $h = 0,1,2, \ldots L$.

The runtime of the algorithm immediately follows by invoking \cref{lem-polysketch} because computing vector $Z^{(h)}_{i,j,\ell}(x)$ for every $i,j,\ell$ and $h=1,2, \ldots L$ dominates the runtime of this algorithm.
\end{proof}

As an example, let us invoke \cref{alg-cntk-sketch} and \cref{thm-cntk-sketch} on the CNTK with GAP corresponding to the normalized Gaussian dual kernel $K_G$, defined per \cref{eq-normalzied-gaussian-kernel}. Note that the dot-product factor corresponding to this dual kernel is $\kappa(t) = \exp(t-1)$. The truncated Taylor series of this function is $\tkappa(t) = \sum_{j=0}^p \frac{t^j}{e \cdot j!}$ and the truncated Taylor series expansion of the derivative of this function is $\tkappa'(t) = \sum_{j=0}^p \frac{t^j}{e \cdot j!}$. If $p = \Omega(\log n)$ then it can be verified that polynomials $\tkappa(t), \tkappa'(t)$ satisfy the preconditions of \cref{thm-cntk-sketch}. Therefore, by \cref{thm-cntk-sketch}, we can sketch the CNTK kernel using $\bigo\left( \frac{L^4}{\varepsilon^2} \cdot d_1d_2  \cdot \poly{\log n} \right)$ running time. Also the target dimension of the sketch is $m' = \bigo\left( \frac{L^2}{\varepsilon^2} \log^3n \right)$. So the runtime of our \cref{alg-cntk-sketch} is only linear in the number of image pixels $d_1d_2$, which is in stark contrast to quadratic scaling of the exact CNTK computation \cite{arora2019exact}. In fact, using our CNTK sketching method, the kernel regression can be solved approximately in time $\bigo\left( \frac{L^4}{\varepsilon^2} \cdot d_1d_2  \cdot n ~\poly{\log n} + m'^2 \cdot n \right) = \bigo\left( \left( \frac{L^4}{\varepsilon^2} \cdot d_1d_2 + \frac{L^4}{\varepsilon^4} \right) \cdot n \cdot \poly{\log n} \right)$, which is significantly faster than the exact kernel regression which takes $\Omega\left( L (d_1d_2 \cdot n)^2 \right)$ when the number of pixels $d_1d_2$ or the training set size $n$ are large.

\section{Gauss-Hermite Quadrature Derivation}
\label{sec-appendix-guass-hermite-quadrature}

\begin{figure}[t]
\includegraphics[width=0.48\textwidth]{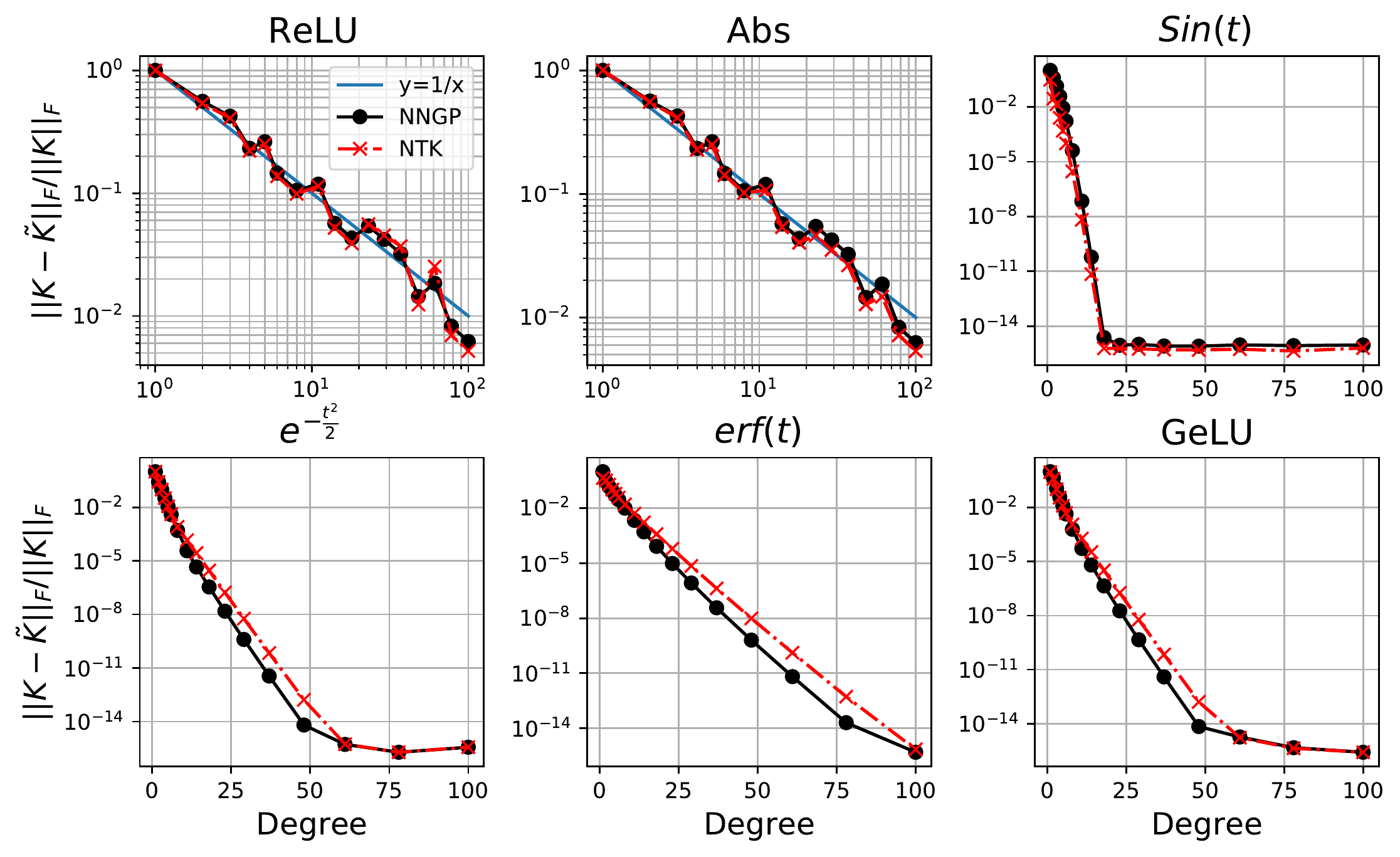}
\includegraphics[width=0.48\textwidth]{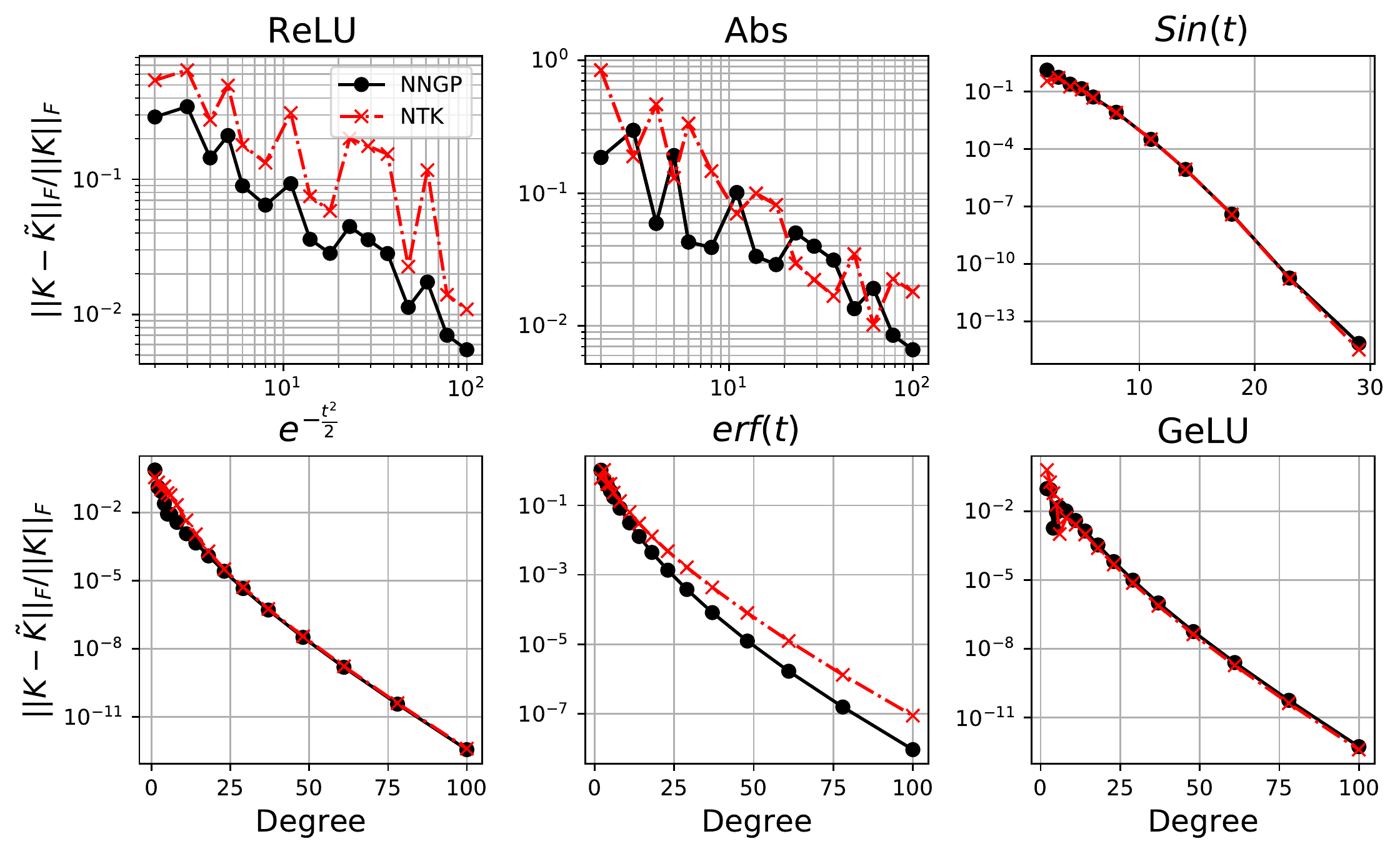}
\caption{Relative errors of dual kernel approximation via the Hermite polynomial approximation under synthetic dataset with \textbf{(left)} $n=1{,}000, d=256$ FC1, \textbf{(right)} $n=10, d=32\times32\times3$ Myrtle-5.}\label{fig-error}
\end{figure}

Here we provide more details on ~\cref{sec-guass-hermite-quadrature}. Utilizing the whitening transformation of covariance ${\bm\Lambda}$ used in the proof of \cref{lmm-derivative-dual-kernel} in \cref{sec-proof-lmm-derivative-dual-kernel} the dual activation function can be expressed as
\begin{align}
k_\sigma(a, b, c) &:= \mathop{\Ebb}_{(u,v)\sim \mathcal{N}(0,{\bm\Lambda})}\left[\sigma(u) \sigma(v)\right] \\
&=  \mathop{\Ebb}_{(\alpha,\beta)\sim \mathcal{N}(0,\I_2)}\left[\sigma(a\alpha) \cdot \sigma(bc \alpha +  b\sqrt{1-c^2} \beta)\right] \\
&= \frac{1}{2\pi}\int \int d\alpha d\beta e^{-\frac{\alpha^2}{2}} e^{-\frac{\beta^2}{2}}\left[\sigma(a\alpha) \cdot \sigma(bc \alpha +  b\sqrt{1-c^2} \beta)\right] \\
&= \frac{1}{\pi}\int \int d\alpha d\beta e^{-\alpha^2} e^{-\beta^2}\left[\sigma(\sqrt{2} a\alpha) \cdot \sigma(\sqrt{2} bc \alpha +  \sqrt{2}b\sqrt{1-c^2} \beta)\right] \\
& \approx \frac{1}{\pi} \sum^{q}_{i=1}\sum^{q}_{j=1} w_i w_j \left[\sigma(\sqrt{2} a x_i) \cdot \sigma(\sqrt{2}bc x_i + \sqrt{2} b\sqrt{1-c^2} x_j)\right]\,.
\end{align}
Here $(x_i, w_i)$,  correspond to roots of $q$-th degree (Physicist's) Hermite polynomial $H_q(x)$ and associated weights~\cite{abramowitz1988handbook}
\begin{align}
    w_i = \frac{2^{q-1} q! \sqrt{\pi}}{q^2 \left(H_{q-1}(x_i)\right)^2} = \frac{q! \sqrt{\pi}}{q^2 (h_{q-1}(\sqrt{2} x_i))^2}
\end{align}
where the conversion between physicist's to probabilist's convention $H_n(x) = 2^\frac{n}{2} h_n(\sqrt{2}x)$.
The roots are obtained by Golub-Welsch algorithm~\cite{golub1969calculation} and can be found in scientific computing package such as Scipy~\cite{2020scipy}'s \href{https://docs.scipy.org/doc/scipy/reference/generated/scipy.special.roots_hermite.html}{\texttt{scipy.special.roots\_hermite}} function. For alternative parameterization for multivariate Gauss-Hermite quadrature, refer to notes by \citet{jackel2005note}.

For activation function where exact dual activation is known, one can  measure the error from the quadrture. In \cref{fig-error}, we compute errors for $\mathrm{ReLU}$, Abs (i.e., $\sigma(t)=\abs{t}$), $\sin$, Gaussian, $\mathrm{erf}$ and GeLU activations. For non-smooth activation ($\mathrm{ReLU}$, Abs), approximation error decays as power-law like where as for smooth activation the error decays exponentially as one increases Hermite polynomial degree $q$. 

\begin{figure}[t]
	\begin{subfigure}{0.33\textwidth}
		\includegraphics[width=\textwidth]{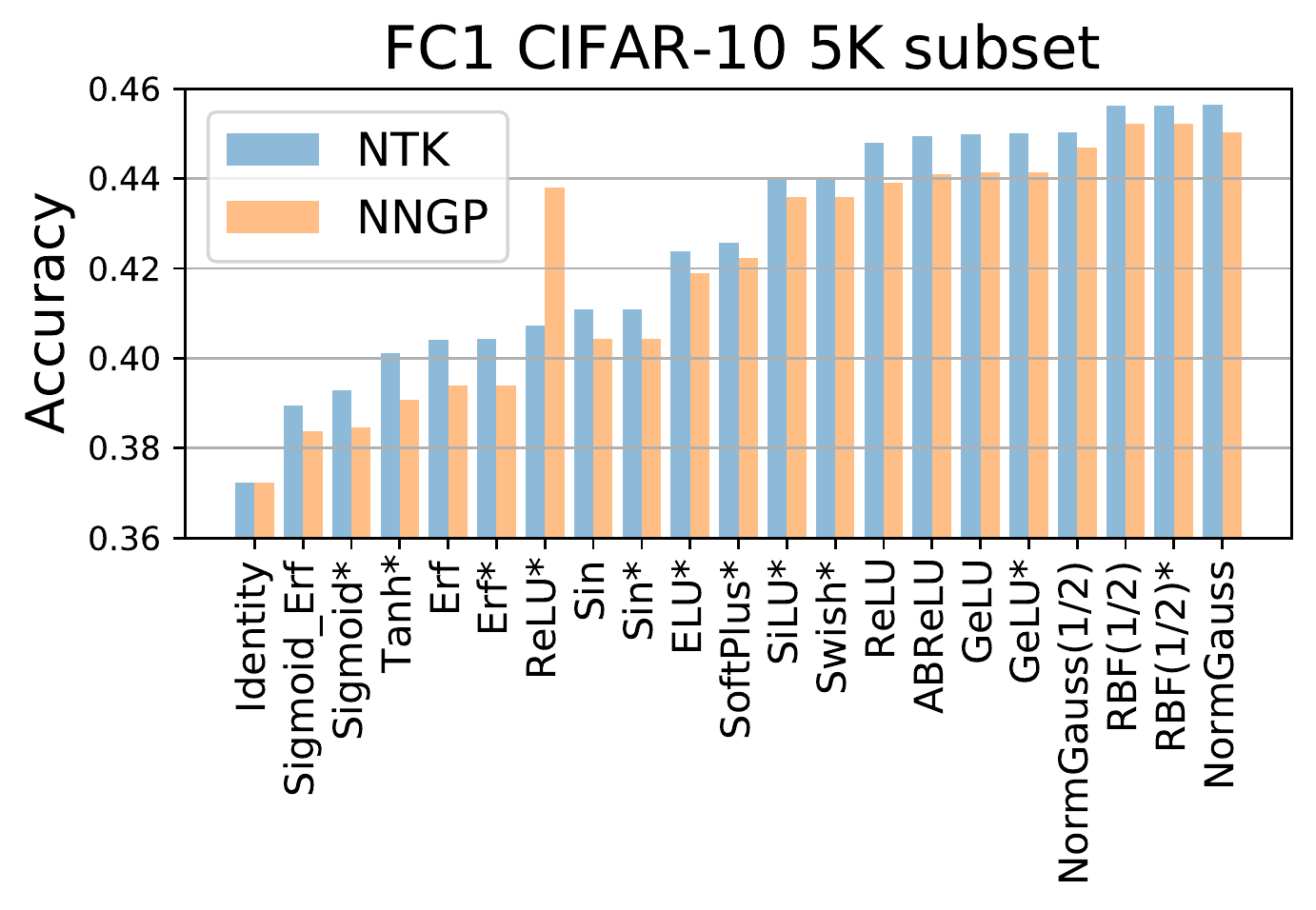}
% 		\caption{}  
	\end{subfigure}
	\begin{subfigure}{0.33\textwidth}
		\includegraphics[width=\textwidth]{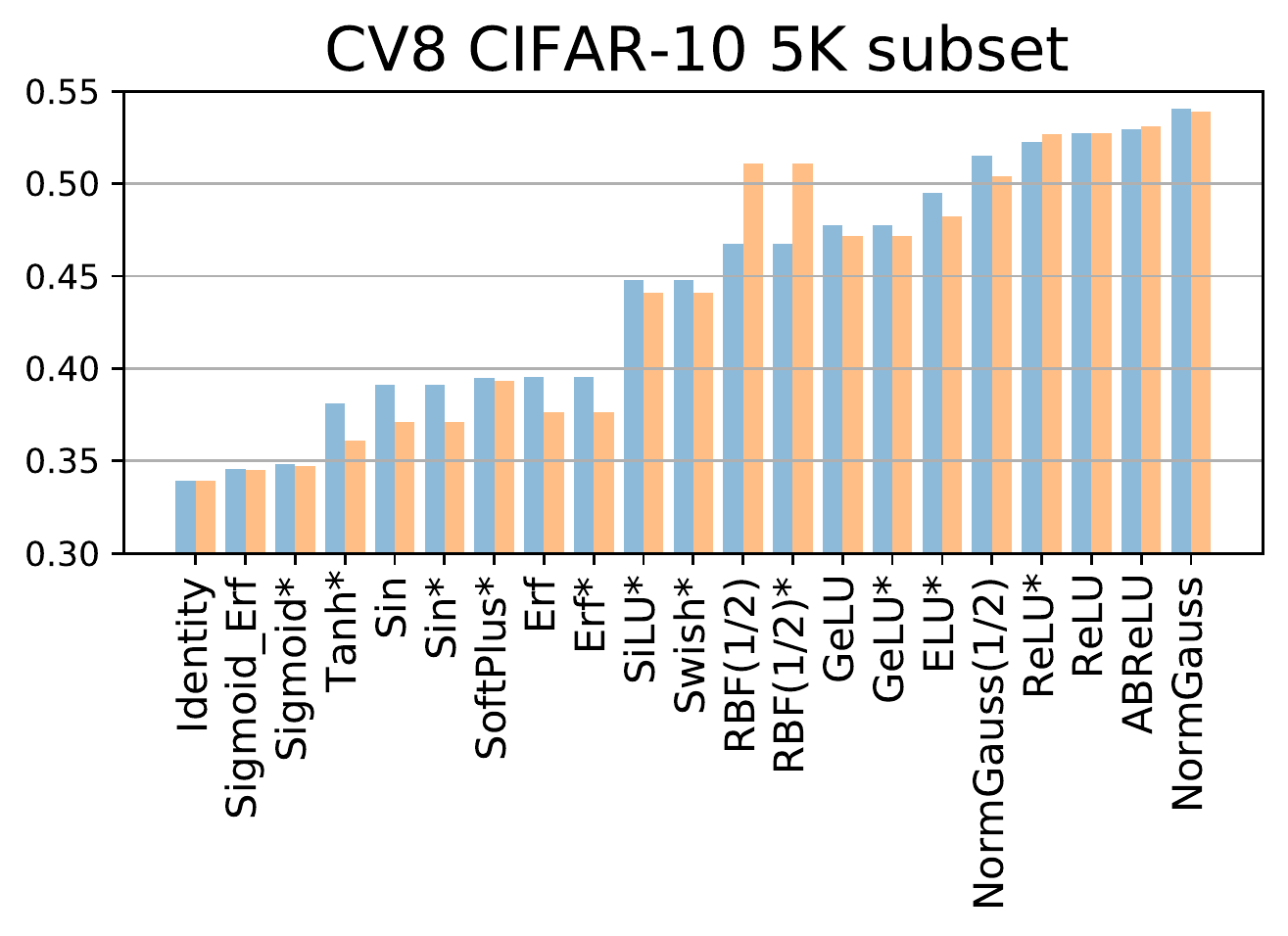}
% 		\caption{}  
	\end{subfigure}	
	\begin{subfigure}{0.33\textwidth}
		\includegraphics[width=\textwidth]{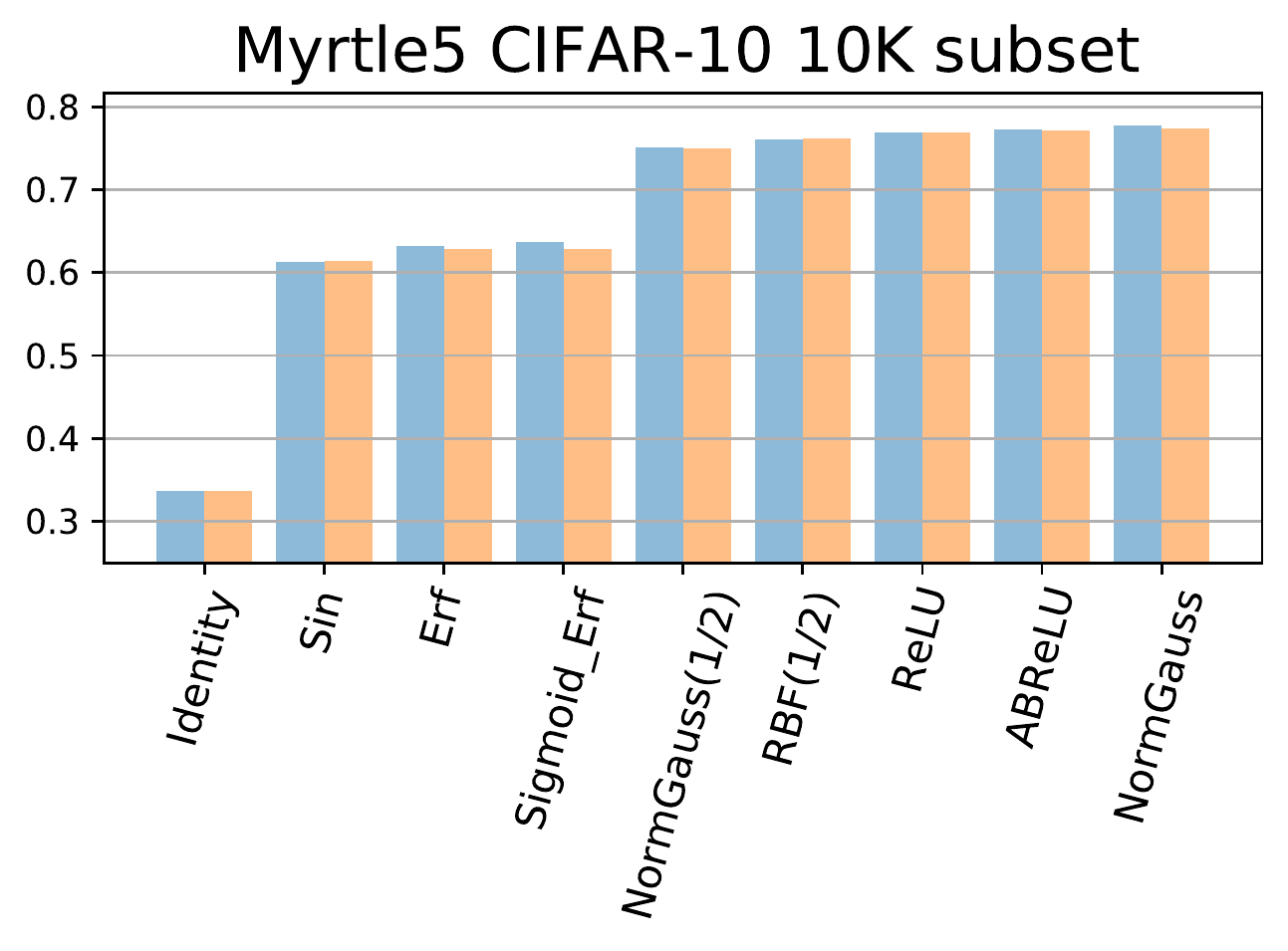}
% 		\caption{}  
	\end{subfigure}
	\caption{\textbf{Classification performance on a CIFAR-10 subset of various architectures.} We compare performance of various activation functions in neural kernels. $*$ denotes that Hermite-quadrature was used to numerically compute the dual activation functions. The slopes of ABReLU are chosen to match the Normalized Gaussian.} \label{fig-exact-activation-comparison}	
\end{figure}

We utilize this method as well as our expanded dual activation \cref{tab-dual-kernel} to compare performance of various activation functions on CIFAR-10 dataset. In \cref{fig-exact-activation-comparison}, we study three architectures; 1 hidden layer fully connected network (FC1, equivalent to pure dual activation kernel), 8 layer convolutional network with vectorization (CV8), and Myrtle5 network. We compared classification performance on subset of CIFAR-10. In each plot activation function is sorted by NTK's classification performance. One notable observation is that normalized Gaussian shows consistently best performance across architecture. Also note that smooth activations computed with Gauss-Hermite quadrature (denoted by $*$) shows almost identical performance when analytic form is available (e.g. GeLU, Erf, RBF(1/2)). Notable outlier is FC1 NTK with ReLU, however we expect that non-smooth activation may be approximated poorly.
It's also interesting to observer sigmoid-like activations (Sigmoid, Tanh, Erf) performs poorly across the board whereas ReLU-like activations (Normalized Gaussian, ABReLU, ReLU, GeLU, RBF) are among high performant group. 

% This is implemented as \texttt{ElementwiseNumerical} in our code.

\section{Table of dual activation functions} \label{sec-appendix-dual-activations}
% \jl{Many things to fix... set convention, determine whether table is the best form...}\insu{It might be a bit lengthy if we try to put all dual kernel formulation in single table. We can remain activation equations, reference papers and (sub)sections in our paper, or can put table in landscape.}
% \jl{Let's clean these part up by supplementary deadline.}\rcom{Since many dual kernel equations don't fit the table, perhaps let's just reference all of them?}

We describe dual kernel functions of several activations and their derivatives in \cref{table-dual-kernel-details}.  One can generalize duel kernels of affine transformations of these activations. Specifically, if $\widetilde{\sigma}(t) = A \cdot \sigma(B t) + C$ for some $A,B,C\in \Rbb$ then
\begin{align} \label{eq-dual-kernel-affine}
    k_{\widetilde{\sigma}}(a,b,c) = A^2 \cdot k_{\sigma}(Ba, Bb, c) + C^2 + A\, C 
    \Ebb_{t\sim\mathcal{N}(0,1)}[\sigma(Bat)+\sigma(Bbt)]
    % \frac{k_{\sigma}(Ba, 0, c) + k_{\sigma}(0, Bb, c)}{\sigma(0)}.
\end{align}
which follows from that
\begin{align}
    k_{\widetilde{\sigma}}(a,b,c) 
    % &= \Ebb_{(u,v)\sim \mathcal{N}(0,{\bm\Lambda}_{a,b,c})}\left[ \widetilde{\sigma}(u) \widetilde{\sigma}(v)\right] \nonumber\\
    &= \Ebb_{(u,v)\sim \mathcal{N}(0,{\bm\Lambda}_{a,b,c})}\left[ A^2\sigma(Bu) \sigma(Bv)+C^2 + AC \left(\sigma(Bu) + \sigma(Bv)\right)\right] \nonumber\\
    &=A^2 \Ebb_{(u,v)\sim \mathcal{N}(0,{\bm\Lambda}_{a,b,c})}\left[\sigma(Bu)\sigma(Bv)\right] + C^2 + AC \Ebb_{(u,v)\sim \mathcal{N}(0,{\bm\Lambda}_{a,b,c})}\left[\sigma(Bu) + \sigma(Bv)\right] \nonumber \\
    &= A^2 \cdot k_{\sigma}(Ba, Bb, c) + C^2 + AC \left(  \Ebb_{u\sim \mathcal{N}(0,a^2)}\left[\sigma(Bu)\right] + \Ebb_{u\sim \mathcal{N}(0,b^2)}\left[\sigma(Bv)\right]\right)\nonumber \\
    &= A^2 \cdot k_{\sigma}(Ba, Bb, c) + C^2 + AC \left(  \Ebb_{t\sim \mathcal{N}(0,1)}\left[\sigma(Bat)\right] + \Ebb_{t\sim \mathcal{N}(0,1)}\left[\sigma(Bbt)\right]\right).
\end{align}
% \jl{Can we write derivation of this formula. Trying to make sure it's correct.}
% \insu{Just added the derivation. Can we have a better formula? for example, can $\Ebb_{t\sim \mathcal{N}(0,1)}\left[\sigma(Bbt)\right]$ be written as a closed form using $k_\sigma$?}\jl{got it thank you! i should have reminded myself of marginal of bi-variate normal distribution. LGTM!}
Below we provide detailed expressions of omitted dual kernel formulations in the table.

\begin{table}
\renewcommand{\arraystretch}{2}
\scalebox{0.8}{
\begin{tabular}{@{}lllll@{}}
\toprule
Activation & $\sigma(t)$ & Dual kernel $k_\sigma (a, b, c)$ &
$k_{\dot \sigma} (a, b, c)$%$\abs{\sigma^{(\ell)}(0)}$ 
&
Implemented as
\\
\midrule
    \rowcolor{Gray}
    \Gape[0pt][2pt]{\makecell[l]{Rectified \\ monomials\\\citep{cho2009kernel}}} & $ t^n \cdot \mathbbm{1}_{\{t \geq 0\}}$ & $ \frac{(a b)^n}{2\pi} J_n\left( \cos^{-1}(c)\right)$ %\cref{sec-rectified-monomials} 
    &
    $ \frac{n^2 (a b)^{n-1}}{2\pi} J_{n-1}\left( \cos^{-1}(c)\right)$%\cref{sec-rectified-monomials} 
    & \texttt{RectifiedMonomial}
    \\
    
    \makecell[l]{ReLU\\\citep{cho2009kernel}} & $ \max(t,0)$ & $ \frac{a b}{2\pi} \left(\sqrt{1-c^2} + (\pi-\cos^{-1}(c))c\right)$& %\cref{sec-rectified-monomials}
    $\frac{1}{2\pi} (\pi-\cos^{-1}(c))$
    & \texttt{ReLU}
    \\
    
    \rowcolor{Gray}
    \Gape[0pt][2pt]{\makecell[l]{ABReLU\\\cite{tsuchida2018invariance,tsuchida2019richer,neuraltangents2020}}} & \Gape[0pt][2pt]{\makecell[l]{$A \max(t,0)$ \\ $+ B \max(-t,0)$}} & \Gape[0pt][2pt]{\makecell[l]{$\frac{ab(B-A)^2 \left( \sqrt{1-c^2} + \left(\pi-\cos^{-1}(c) \right)c\right)}{2\pi}$\\$+ABabc$}} & \cref{sec-abrelu} & \texttt{ABReLU}
    \\
    \multirow{3}{*}{\makecell[l]{Sinusoidal\\\cite{sitzmann2020implicit,tancik2020fourier}}}&$\sin(t)$ & $e^{ - \frac{a^2 + b^2}{2}} \sinh(abc)$ & $e^{ - \frac{a^2 + b^2}{2}} \cosh(abc)$ & \multirow{3}{*}{\texttt{Sin}}
    \\
    &$\cos(t)$ & $e^{ - \frac{a^2 + b^2}{2}} \cosh(abc)$ & $e^{ - \frac{a^2 + b^2}{2}} \sinh(abc)$ %$1\cdot \mathbbm{1}(\ell \text{ is even})$ & \texttt{Cos}
    \\
     &$A\sin(Bt + C)$ & \cref{eq-gen-sin-dual}& \cref{eq-gen-sin-dual-prime} %$1\cdot \mathbbm{1}(\ell \text{ is even})$ & \texttt{Sin}
    \\
    
    \rowcolor{Gray}
    \Gape[0pt][2pt]{\makecell[l]{Error function\\\cite{williams1996computing,lee2019wide}}} &$\mathrm{erf}(t)$ & $\frac{2}{\pi} \sin^{-1}\left( \frac{2abc}{\sqrt{(1 + 2a^2)(1 + 2 b^2)}} \right)$ & $\frac{4}{\pi} \frac{1}{\sqrt{(1+2a^2)(1+2b^2)-4(abc)^2}}$ & \texttt{Erf}
    \\
    
    \makecell[l]{Gaussian\\\cite{williams1996computing}} &$\exp(-A t^2)$ & $\frac{1}{\sqrt{(2Aa^2 + 1)(2A b^2 + 1)- (2A abc)^2}}$ & $\frac{4A^2 abc}{\left((2Aa^2 + 1)(2A b^2 + 1)- (2A abc)^2\right)^{3/2}}$ 
    & \texttt{Gaussian}
    \\
    
    \rowcolor{Gray}
    \Gape[0pt][2pt]{\makecell[l]{Exponential\\\cite{mairal2014convolutional,daniely2016toward}}} & $\exp(A t)$ & $\exp\left(\frac{A^2}{2}\left( a^2 + b^2 + 2abc\right) \right)$ & $A^2\exp\left(\frac{A^2}{2}\left( a^2 + b^2 + 2abc\right) \right)$ &\texttt{Exp}
    \\
    
    \makecell[l]{GeLU\\\cite{tsuchida2020avoiding}} &$\frac{t}{2} \left( 1 + \mathrm{erf}\left(\frac{t}{\sqrt{2}}\right)\right)$ & \cref{eq-dual-gelu} &  %$ \frac{\ell (\ell-3)!!}{\sqrt{2 \pi}} \cdot \mathbbm{1}(\ell \text{ is even})$ & \texttt{Gelu}
    \cref{sec-gelu} &  \texttt{Gelu}
    \\
    
    \rowcolor{Gray}
    Gabor & $\exp(-t^2) \sin(t)$ & \cref{eq-dual-activation-gabor} & \cref{sec-gabor} & \texttt{Gabor}	\\

    Polynomial &$\sum_{j} c_j t^j$ & \cref{thm-dual-kernel-expansion} & \cref{thm-dual-kernel-expansion} & \texttt{Polynomial}
    \\
    % $\displaystyle \frac{t}{1+e^{-t}}$ (Swish) & & \\
    
    \rowcolor{Gray}
    \Gape[0pt][2pt]{\makecell[l]{Normalized \\Gaussian~\cite{shankar2020neural}}} & Unknown & $ab\exp(c-1)$ & $\exp(c-1)$ & \texttt{ExpNormalized} \\
    
    \makecell[l]{RBF\\\cite{rahimi2007random}} & $\sqrt{2}\sin\left( \sqrt{2A}t + \frac{\pi}{4}\right)$ & $\exp\left(-A\left(a^2+b^2-2abc\right)\right)$ & $2A\exp\left(-A\left(a^2+b^2-2abc\right)\right)$ & \texttt{Rbf}\\
\bottomrule
\end{tabular}
}
\vspace{0.05in}
\caption{Dual kernels of activation and its derivative for various functions.} \label{table-dual-kernel-details}
\end{table}

\subsection{Rectified monomials} \label{sec-rectified-monomials} 
\citet{cho2009kernel} proposed closed-form expressions of dual kernel functions for rectified activations, i.e., $\sigma(t) = t^n \cdot \mathbbm{1}_{\{t \geq 0\}}$ for $n \ge 0$, as
% the dual kernel can be written as
\begin{align}
    k_\sigma(a,b,c) = \frac{(ab)^n}{2 \pi} \cdot J_n\left(\cos^{-1}(c)\right)
\end{align}
where for $\theta = \cos^{-1}(c) \in [0,\pi]$
\begin{align}
    J_n(\theta) := (-1)^n (\sin \theta)^{(2 n + 1)} \left( \frac{1}{\sin \theta} \frac{\partial}{\partial \theta}\right)^n \left(\frac{\pi - \theta}{\sin \theta} \right).
\end{align}
For $n=0$ and $1$ 
\begin{align}
    J_0(\theta) =\pi - \theta,\,~~\quad J_1(\theta) = \sin \theta + (\pi - \theta) \cos \theta.
\end{align}
Applying \cref{lmm-derivative-dual-kernel} provides that
\begin{align}
    k_{\sigma'}(a,b,c) = \frac{n^2 (ab)^{n-1}}{2 \pi} \cdot J_{n-1}\left( \cos^{-1}(c)\right).
\end{align}
These are implemented in our code as \texttt{RectifiedMonomial} (with a special case of \texttt{Sign} for convenience).

\subsection{ABReLU, Leaky ReLU, Abs} \label{sec-abrelu} 
ABReLU activation function is given by
\begin{align}
    \sigma(t) = A \min(t, 0) + B \max(t, 0), ~~~ \text{ for }~ A,B \in \Rbb
\end{align}
The dual kernel functions can be obtained by extension of \cite{cho2009kernel} which is worked out in \cite{tsuchida2018invariance, tsuchida2019richer}
\begin{align}
    k_{\sigma}(a,b,c) &= ab \left( \frac{(B-A)^2}{2\pi}J_1\left(\cos^{-1}(c) \right) + ABc\right) \\
    &= ab\left( \frac{(B-A)^2}{2 \pi} \left( \sqrt{1-c^2} + (\pi - \cos^{-1}(c))c)\right) + ABc\right)
\end{align}
and
\begin{align}
    k_{\sigma'}(a,b,c) &= \frac{(B-A)^2}{2 \pi} J_0(\cos^{-1}(c) + AB \\
    &= \frac{(B-A)^2}{2 \pi} \left( \pi - \cos^{-1}(c)\right) + AB.
\end{align}
A special case of ABReLU covers leaky ReLU~\cite{maas2013rectifier} ($B=1$), that is,
% equation command with $$ blocks the line counter. Let's use the `align` or `equation` or `\[...\]`.
\begin{align}
    \sigma(t) = A \min(t, 0) + \max(t, 0)\,,
\end{align}
and the corresponding dual kernel functions are
\begin{align}
    k_{\sigma}(a,b,c) &= ab \left( \frac{(1-A)^2}{2\pi}J_1\left(\cos^{-1}(c) \right) + A c\right),
\end{align}
and
\begin{align}
    k_{\sigma'}(a,b,c) &= \frac{(1-A)^2}{2 \pi} J_0\left(\cos^{-1}(c)\right) + A.
\end{align}
Another special case is the absolute value function (Abs) ($A=-1,B=1$), that is,
\begin{align}
    \sigma(t) = |t|\,,
\end{align}
and the corresponding dual kernel functions are
\begin{align}
    k_{\sigma}(a,b,c) &= ab \left(\frac{2}{\pi}J_1\left(\cos^{-1}(c) \right)  - c \right)
\end{align}
and
\begin{align}
    k_{\sigma'}(a,b,c) &= 1 - \frac{2}{\pi} \cos^{-1}(c) \,.
\end{align}
These are respectively implemented as \texttt{ABRelu}, \texttt{LeakyRelu}, and \texttt{Abs} in~\cite{neuraltangents2020}.

\subsection{Sinusoidal and RBF} \label{sec-sinusoidal}
% \citet{rahimi2007random} observed \cref{eq-random-features-sigma} as the Fourier transform with $\sigma(t) = e^{it}$. They proposed that the complex exponential activation can derive the Gaussian kernel:
% % the Random Fourier Features 
% % One can also consider the complex activation $\sigma(t) = \exp(-i t)$ (or $[\cos(t)~\sin(t)]$). 
% % this leads the Gaussian kernel function.
% \begin{align}
%     k_\sigma(x,y) &= \mathop{\mathbb{E}}_{w\sim \mathcal{N}(0,\I_d)} \left[ e^{i \inner{w,x}}~\overline{e^{i \inner{w,y}}} \right] 
%     = \mathop{\mathbb{E}}_{w\sim \mathcal{N}(0,\I_d)} \left[ e^{i \inner{w,x-y}}\right] = e^{-\frac{\norm{x-y}_2^2}{2}}.
% \end{align}
% For real-valued mapping, \citet{rahimi2007random} suggested to set $\sigma(\inner{w,x}) = \sqrt{2} \cos\left( \inner{w,x} + b\right)$ where $b$ is a random variable sampled uniformly from $[0,2\pi]$. This comes from that $\int_0^{2\pi} \sigma(\inner{w,x}) \sigma(\inner{w,y}) db  = \cos(\inner{w,x-y})$.
% \begin{align*}
%     \int_0^{2\pi} \sigma(\inner{w,x}) \sigma(\inner{w,y}) db  
%     % &= 2 \cos\left( \inner{w,x} + b\right) \cos\left( \inner{w,y} + b\right) \\
%     % &= \cos(\inner{w,x+y} + 2b) + \cos(\inner{w,x-y}) \\
%     &= \cos(\inner{w, x-y})
% \end{align*}
A generalized sinusoidal activation is given by
\begin{align}
    \sigma(t) =A \sin(B t +C). 
\end{align}
The corresponding dual kernels are
\begin{align}
    k_\sigma (a, b, c) &=  \frac{A^2}{2} \cdot e^{- \frac{B^2(a^2 + b^2)}{2}} \left(e^{a b c B^2} -  \cos(2C) e^{- a b c B ^2}\right) \label{eq-gen-sin-dual} \\
    k_{\sigma'} (a, b, c) &= \frac{A^2 B^2}{2} \cdot e^{- \frac{B ^2(a^2 + b^2)}{2}} \left(e^{a b c B^2} + \cos(2C) e^{- a b c B ^2}\right) \label{eq-gen-sin-dual-prime}.
\end{align}

% \jl{Add Rbf from general sinusoidal and connect to \citet{rahimi2007random} as well as mixture of activations in ~\citet{louart2018random},\citet{adlam2019random}}

Note that the generalized sinusoidal activation with $a = \sqrt{2}$, $b=\sqrt{2 A}$, and $c=\frac{\pi}{4}$ gives that
\begin{align}
    k_\sigma(a,b,c) &= \exp\left( -A \left( a^2 + b^2 - 2abc\right)\right), \\
    k_{\sigma'}(a,b,c) &= 2A\exp\left( -A \left( a^2 + b^2 - 2abc\right)\right)\,,
\end{align}
which corresponds to (translation invariant) the Gaussian RBF kernel:
\begin{equation}
    k_\text{RBF}(x,y) = \exp\left(- d A \norm{x - y}^2\right)\,.
\end{equation}
for some $x,y\in\Rbb^d$.

Moreover, one could consider mixture of activation functions as discussed in \citet{louart2018random, adlam2019random} of 50\% $\cos$ and 50\% $\sin$ which also leads to stationary kernel
\begin{align} 
k_{\cos + \sin}(a,b,c) &=  \frac{1}{2}\exp(- \frac{1}{2} (a^2 + b^2 -2 abc)). 
\end{align}
In order to obtain stationary kernel with respect to inputs, one only needs to insert these transformation at the first layer of the network as highlighted in implicit neural representation (e.g. NeRF)~\cite{sitzmann2020implicit,tancik2020fourier}.

These are implemented in our code as \texttt{Sin}, \texttt{Cos}, and \texttt{Rbf}.

\subsection{Error function} \label{sec-erf}
The error function is given by
\begin{align}
    \sigma(t) = \frac{2}{\sqrt{\pi}} \int_{0}^{t} e^{-x^2} dx.
\end{align}
Following \cite{williams1996computing} and applying \cref{lmm-derivative-dual-kernel}, we get
\begin{align}
    k_\sigma(a,b,c) &= \frac{2}{\pi} \sin^{-1} \left( \frac{2abc}{\sqrt{(1+a^2)(1+b^2)}}\right), \\
    k_{\sigma'}(a,b,c) &= \frac{4}{\pi} \frac{1}{\sqrt{(1+2a^2)(1+2b^2)-4(abc)^2}}.
\end{align}

% \jl{Affine Erf and leading to \texttt{Sigmoid\_like} Erf}
An affine transformation of the error function could behave similar to sigmoid activation function with range $(0, 1)$, that is, 
\begin{align}
    \sigma_{\textrm{sigmoid-like}} (x) = \frac{1}{2} \left( \operatorname{erf}\left(\frac{x}{2.4020563531719796}\right) + 1\right).
\end{align}

The corresponding dual kernels can be obtained by applying affine transformation to that of error function as discussed in \cref{eq-dual-kernel-affine}. The error function is implemented in \cite{neuraltangents2020} as \texttt{Erf}, and we release \texttt{Sigmoid\_like} in our code.
% We can parameterize Erf activation, such that it 
% In NT, implemented as \texttt{nt.stax.Erf(a, b, c)}.

% \subsubsection*{Sigmoid-like Erf}

% In NT, implemented as \texttt{nt.stax.Sigmoid\_like()} layer, where Affine Erf transformation rule is used.

\subsection{Gaussian function} \label{sec-gaussian}

Consider Gaussian function
\begin{align*}
    \sigma(t) = \exp(- At^2)\,.
\end{align*}

One can obtain $k_{\sigma}$~\cite{williams1996computing},
\begin{align}
    k_\sigma(a,b,c) &= \frac{1}{\sqrt{(2Aa^2 + 1)(2A b^2 + 1)- (2A abc)^2}}  
\end{align}
and using \cref{lmm-derivative-dual-kernel} obtain
\begin{align}
    k_{\sigma'}(a,b,c) = \frac{4A^2 abc}{\left((2Aa^2 + 1)(2A b^2 + 1)- (2A abc)^2\right)^{3/2}}.
\end{align}
Note that Gaussian function itself can be obtained as derivative of Affine Erf thus could use \cref{lmm-derivative-dual-kernel} with Affine Erf. This function is implemented as \texttt{Gaussian} in our code.

\subsection{GeLU} \label{sec-gelu}
The Gaussian Error Linear Unit (GeLU)~\cite{hendrycks2016gaussian} is defined as
\begin{align} \label{eq-gelu-activation}
    \sigma(t) = \frac{t}{2} \left( 1 + \mathrm{erf}\left(\frac{t}{\sqrt{2}}\right)\right) = \frac{t}{\sqrt{2\pi}} \int_{-\infty}^t e^{-\frac{s^2}{2}}ds\,,
\end{align}
where $\mathrm{erf}(\cdot)$ is the Gauss error function. For efficiency, sometimes approximate formulation
\begin{align} \label{eq-gelu-approx-activation}
    \tilde \sigma(t) = \frac{t}{2}\left(1 + \tanh \left(\sqrt{\frac{2}{\pi}}(t + 0.044715t^3)\right)\right)\,, 
\end{align}
is used. We note that GeLU activation function is becoming popular in recent language models such as BERT~\cite{devlin2018bert}, ALBERT~\cite{lan2019albert}, RoBERTa~\cite{liu2019roberta} and GPT~\cite{radford2019language, brown2020language}. 
The corresponding dual kernel is studied in \citet{tsuchida2020avoiding}:
\begin{align} \label{eq-dual-gelu}
    k_{\mathrm{GeLU}}(a,b,c) 
    % = \frac{a b}{4} \cos \theta + \frac{a^2 b^2}{2\pi} \Bigg( &\frac{(\cos(2\theta) + 3) + a^2 + b^2 + a^2 b^2 \sin^2 \theta}{2(1+a^2)(1+b^2)\sqrt{1+a^2 + b^2 + a^2 b^2 \sin^2 \theta}} \nonumber \\ &+
    % \frac{\cos\theta}{a b} \tan^{-1} \left( \frac{\cos \theta  \cdot a b}{\sqrt{1+a^2 + b^2 + a^2 b^2 \sin^2 \theta}}\right)\Bigg)
    = \frac{a b c}{4} + \frac{a^2 b^2}{2\pi} \Bigg( &\frac{c^2 + 1+a^2+b^2+a^2b^2(1-c^2)}{(1+a^2)(1+b^2)\sqrt{1+a^2 + b^2 + a^2 b^2(1-c^2)}} \nonumber \\ &+
    \frac{c}{a b} \tan^{-1} \left( \frac{a b c}{\sqrt{1+a^2 + b^2 + a^2 b^2 (1-c^2)}}\right)\Bigg).
\end{align}
% where $\theta = \cos^{-1}(c)$ for $c \in [-1,1]$.  
Using \cref{lmm-derivative-dual-kernel}, we have
\begin{align}
    k_{\mathrm{GeLU}'}(a,b,c) 
    = \frac14 &+  
    \frac{\left(2-a^2 b^2\right) abc (1+a^2)(1+b^2)+\left(a^2 b^2-1\right)
   (abc)^3}{2 \pi  (1+a^2)(1+b^2) \left(1+a^2 + b^2 + a^2 b^2 (1-c^2)\right){}^{3/2}} \nonumber \\
   &+\frac{1}{2\pi} \tan^{-1}\left( \frac{a b c}{\sqrt{1+a^2 + b^2 + a^2 b^2 (1-c^2)}}\right) \nonumber \\
   &+ \frac{abc}{2\pi} \frac{1}{\sqrt{1+a^2 + b^2 + a^2 b^2 (1-c^2)}}.
%   +\frac{q_{12}}{2 \pi  
%   \sqrt{s_{12}^2-q_{12}^2}}+\frac{\sin
%   ^{-1}\left(\frac{q_{12}}{s_{12}}\right)}{2 \pi }
%     &= \frac14 +  
%     \frac{\left(2-a^2 b^2\right) abc s_{12}^2+\left(a^2 b^2-1\right)
%   (abc)^3}{2 \pi  s_{12}^2 \left(s_{12}^2-q_{12}^2\right){}^{3/2}}
%   +\frac{q_{12}}{2 \pi  
%   \sqrt{s_{12}^2-q_{12}^2}}+\frac{\sin
%   ^{-1}\left(\frac{q_{12}}{s_{12}}\right)}{2 \pi }
%     \frac{\left(2-q_{11} q_{22}\right) q_{12} s_{12}^2+\left(q_{11} q_{22}-1\right)
%   q_{12}^3}{2 \pi  s_{12}^2 \left(s_{12}^2-q_{12}^2\right){}^{3/2}}
%   +\frac{q_{12}}{2 \pi  
%   \sqrt{s_{12}^2-q_{12}^2}}+\frac{\sin
%   ^{-1}\left(\frac{q_{12}}{s_{12}}\right)}{2 \pi }+\frac{1}{4} 
%   \\
%   &= \frac {q_{12}}{2 \pi \Delta^{1}}\left(\frac 1 {\Delta^2}  + \frac{1 - q_{11}q_{22}+ s_{12}^2}{s_{12}^2}  \right) + 
%   \frac{\sin
%   ^{-1}\left(\frac{q_{12}}{s_{12}}\right)}{2 \pi }+\frac{1}{4} 
    \label{eq-dual-gelu-prime}
\end{align}

This is implemented in our code as \texttt{Gelu}.

\subsection{Monomials}
Consider monomials
\begin{align}
    \sigma_n(t) = t^n, \quad n \in \mathbb{N} \,.
\end{align}

The dual activation function is given in terms of Hypergeometric function $_2F_1$.  For even power $n\in 2\mathbb{Z}$
\begin{align}
    k_{\sigma_n} (a, b, c)
%     &= 
%     \frac{ 
%   \left(2 q_{12} \tan (\theta )\right){}^n}{\pi } \Gamma \left(\frac{n+1}{2}\right)^2 \,
%   _2F_1\left(-\frac{n}{2},\frac{n+1}{2};\frac{1}{2};\cot ^2(\theta )\right) \,, \nonumber \\
&= \frac{(2ab)^n \left(1-c^2\right){}^{n/2} }{\pi } \Gamma
   \left(\frac{n+1}{2}\right)^2 \,
   _2F_1\left(-\frac{n}{2},\frac{n+1}{2};\frac{1}{2};\frac{c^2}{c^2-1}\right)
\end{align}

For odd power $n\in 2\mathbb{Z}+1$
\begin{align}
    k_{\sigma_n} (a, b, c)=& 
   \frac{2^n(ab)^{n+1} \left(1-c^2\right){}^{\frac{n-1}{2}} \Gamma
   \left(\frac{n}{2}+1\right) \Gamma \left(\frac{n}{2}\right) }{\pi  (n+1) c} \Bigg(2 c^2 \,
   _2F_1\left(\frac{1}{2}-\frac{n}{2},\frac{n}{2}+1;\frac{1}{2};\frac{c^2}{c^2-1}\right) \nonumber  \\
   &+a \left(\,
   _2F_1\left(\frac{1}{2}-\frac{n}{2},\frac{n}{2}+1;\frac{1}{2};\frac{c^2}{c^2-1}\right)-\,
   _2F_1\left(\frac{1}{2}-\frac{n}{2},\frac{n}{2}+1;-\frac{1}{2};\frac{c^2}{c^2-1}\right)\right)\Bigg)\,,
\end{align}

% For first few $n$, 
% \begin{align}
% k_0 &= 1, \\ 
% k_1 &= q_{12}, \\
% k_2 &= 2 q_{12}^2+q_{11} q_{22}\\ 
% k_3&=6 q_{12}^3+9 q_{11} q_{22} q_{12} \,, \\
% k_4 &= 3( 8q_{12}^4+24 q_{11} q_{22} q_{12}^2+3 q_{11}^2 q_{22}^2)\,, \\
% k_5&=15 q_{12} \left(8 q_{12}^4+40 q_{11} q_{22} q_{12}^2+15 q_{11}^2 q_{22}^2\right)\\
% &\dots \nonumber
% \end{align}

The first five $k_{\sigma_n}$s are 
\begin{align}
k_{\sigma_0}(a,b,c) &= 1, \\ 
k_{\sigma_1}(a,b,c) &= a b c, \\
k_{\sigma_2}(a,b,c) &= a^2 b^2 \,  (2c^2+1) \,,\\ 
k_{\sigma_3}(a,b,c) &= 3a^3 b^3 c \, (2c^2 +3) \,, \\
k_{\sigma_4}(a,b,c) &=3 a^4b^4(8c^4 + 24c^2 + 3)\,, \\
k_{\sigma_5}(a,b,c) &= 15 a^5 b^5 c (8 c^4 + 40 c^2 + 15).
% &\dots \nonumber
\end{align}
Note that dual activation functions of monomials are also obtained from \cref{thm-dual-kernel-expansion} by choosing $c_n=1, c_{n-1}=\dots=c_0 = 0$.
Moreover, obtaining $k_{\sigma_n'}$ is simple either by $\sigma_n (t)' = n t^{n-1}$ or applying \cref{lmm-derivative-dual-kernel} to above expressions on $k_{\sigma_n}$.

These are implemented in our code as \texttt{Monomial}.

% Also obtaining $\dot \T$ is simple using the Lemma or using the relation $\dot \T_{n} = n^2 \T_{n-1}$.
% \begin{align}
%     \dot T_0 &= 0 \\
%     \dot T_1 &= 1 \\
%     \dot T_2 &=4 q_{12} \\
%     \dot T_3 &= 9 (2q_{12} + q_{11}q_{22}) \\
%     \dot T_4 &= 48 q_{12} \left(2 q_{12}^2+3 q_{11} q_{22}\right) \\
%     \dot T_5 &= 75 \left(8 q_{12}^4+24 q_{11} q_{22} q_{12}^2+3 q_{11}^2 q_{22}^2\right)
% \end{align}

\def\T{\mathcal{T}}

\subsection{Gabor} \label{sec-gabor}
Let us consider a simple version of localized oscillatory activation function given by
\begin{equation}
    \sigma_{\textrm{Gabor}}(t) = \exp(-t^2) \sin(t)\,.
\end{equation}

The dual actiavtion of Gabor function can be expressed as
\begin{align} \label{eq-dual-activation-gabor}
    k_{\mathrm{Gabor}}(a,b,c) = 
   %  \frac{\exp \left(-\frac{-2 (abc)^2+b^2+a^2 \left(2 b^2+1\right)}{2 \left(1+ a^2+b^2+
   % a^2 b^2 -(abc)^2\right)}\right) }{\sqrt{1+ a^2+b^2+
   % a^2 b^2 -(abc)^2}} \sinh \left(\frac{abc}{1+ a^2+b^2+
   % a^2 b^2 -(abc)^2}\right)
   \frac{\exp \left(-\frac{-4 a^2 b^2 c^2+2 a b c+4 a b+a+b}{-8 a^2 b^2 c^2+8 a b+4 a+4 b+2}\right) \left(\exp \left(\frac{2 a b c}{-4 a^2 b^2 c^2+4 a b+2 a+2 b+1}\right)-1\right)}{\sqrt{-4 a^2 b^2 c^2+a (4 b+2)+2 b+1}}
\end{align}

and that of derivative of Gabor function can be obtained using \cref{lmm-derivative-dual-kernel} as
\begin{multline}
    k_{\mathrm{Gabor}^\prime}(a,b,c) = 
   %  \frac{\exp \left(-\frac{-2 (abc)^2+b^2+a^2 \left(2 b^2+1\right)}{2 \left(-(abc)^2+b^2+a^2
   % \left(b^2+1\right)+1\right)}\right)}{\left(-(abc)^2+b^2+a^2
   % \left(b^2+1\right)+1\right){}^{5/2}} \left(X + Y\right)
    \exp \left(-\frac{-4 a^2 b^2 c^2+2 a b c+4 a b+a+b}{-8 a^2 b^2 c^2+8 a b+4 a+4 b+2}\right) \times \\ \times\Biggl[\Bigl(4 a b c \left(-4 a^2 b^2 c^2+a b c+3 b+2\right)+2 a \left(8 a b^2 c+6 a b c+2 b+1\right)+2 b+1\Bigr)\times \\ \times\exp \left(\frac{2 a b c}{-4 a^2 b^2 c^2+4 a b+2 a+2 b+1}\right)+4 a b c \left(4 a^2 b^2 c^2+a b c-3 b-2\right)+\\+2 a \left(-8 a b^2 c-6 a b c+2 b+1\right)+2 b+1\Biggr] \Bigg/ \left(-4 a^2 b^2 c^2+a (4 b+2)+2 b+1\right)^{5/2}
    .
\end{multline}
% where
% \begin{align}
%    X &= abc \left(-(abc)^2+2 b^2+a^2 \left(b^2+2\right)+3\right) \sinh
%    \left(\frac{abc}{-(abc)^2+b^2+a^2 \left(b^2+1\right)+1}\right)\\
%    Y &= \left((abc)^2+b^2+a^2
%    \left(b^2+1\right)+1\right) \cosh \left(\frac{abc}{-(abc)^2+b^2+a^2
%    \left(b^2+1\right)+1}\right).
% \end{align}

This is implemented in our code as \texttt{Gabor}.

% \begin{align}
%     \T_\texttt{Gabor}(\Sigma) = \frac{\exp \left(-\frac{-2 q_{12}^2+q_{22}+q_{11} \left(2 b^2+1\right)}{2 \left(1+ q_{11}+q_{22}+
%   q_{11} q_{22} -q_{12}^2\right)}\right) }{\sqrt{1+ q_{11}+q_{22}+
%   q_{11} q_{22} -q_{12}^2}} \sinh \left(\frac{q_{12}}{1+ q_{11}+q_{22}+
%   q_{11} q_{22} -q_{12}^2}\right)
% \end{align}
% \begin{align}
%     \dot \T_\texttt{Gabor}(\Sigma) =& \frac{\exp \left(-\frac{-2 q_{12}^2+q_{22}+q_{11} \left(2 q_{22}+1\right)}{2 \left(-q_{12}^2+q_{22}+q_{11}
%   \left(q_{22}+1\right)+1\right)}\right)}{\left(-q_{12}^2+q_{22}+q_{11}
%   \left(q_{22}+1\right)+1\right){}^{5/2}}
%   \Bigg(\\
%   &q_{12} \left(-q_{12}^2+2 q_{22}+q_{11} \left(q_{22}+2\right)+3\right) \sinh
%   \left(\frac{q_{12}}{-q_{12}^2+q_{22}+q_{11} \left(q_{22}+1\right)+1}\right)\\
%   &+\left(q_{12}^2+q_{22}+q_{11}
%   \left(q_{22}+1\right)+1\right) \cosh \left(\frac{q_{12}}{-q_{12}^2+q_{22}+q_{11}
%   \left(q_{22}+1\right)+1}\right)
%   \Bigg)
% \end{align}

\subsection{ELU}
For Exponential Linear Unit (ELU)~\cite{clevert2015fast}
\begin{align*}
    \sigma(t) = \mathrm{step}(t) t + \mathrm{step}(-t)(e^t -1)\,.
\end{align*}
The $k_\sigma (a, b, c)$ is computed in \citet{tsuchida2020avoiding} and we refer to the original paper for the expression.

Note that $k_{\sigma'}$ for ELU has not been computed but \cref{lmm-derivative-dual-kernel} allows to simply obtain it using expression in ~\citet{tsuchida2020avoiding}.

\section{Additional Experiment: Kernel Informed Activation} \label{sec-appendix-abrelu}
We explore an activation informed by the normalized Gaussian kernel that achieves the best performance among neural kernels~\cite{shankar2020neural}. Although the exact activation is unknown, one can conduct a reverse engineering to find a proper activation whose dual kernel is known and close to the normalized Gaussian. In particular, we focus on the ABReLU activation and recall that its dual kernel is 
\[
k_{\mathrm{ABReLU}}(a,b,c) = ab \left( \frac{(B-A)^2 \left( \sqrt{1-c^2} + \left(\pi-\cos^{-1}(c) \right)c\right)}{2\pi}+ABc\right)
\]
for some $A,B \in \Rbb$. Observe that $k_{\mathrm{ABReLU}}$ is also homogeneous as like the normalized Gaussian, i.e., $k_\sigma(a,b,c)=ab \cdot \kappa_{\sigma}(c)$ for $c\in[-1,1]$. 
We find two slope variables $A,B$ by fitting $\kappa_{\sigma}$ at extreme points, i.e., $\kappa_{\mathrm{ABReLU}}(c)=\exp(c-1)$ for $c=\pm1$. This turns into a quadratic equation and gives us \[\mathrm{ABReLU}(t)=-0.096 \min(t,0)+1.411\max(t,0)\] which is illustrated in \cref{fig-cifar10} ({\bf left}). 
We train a $5$-layer ConvNet (known as Myrtle-5~\cite{shankar2020neural}) of $128$ width for CIFAR-10 classification. Similar to the CNTK experiment in \cref{sec-exp}, we convert image classes into $10$-dimensional one-hot vectors and pre-process CIFAR-10 images with regularized ZCA~\cite{shankar2020neural,lee2020finite}. We use the SGD optimizer with initial learning rate $0.1$, Nesterov momentum with factor $0.9$ and $\ell_2$ regularizer $0.0005$. The batch size is set to $64$. The network is trained by minimizing the mean-squared-error (MSE) loss and we report the best test accuracy for $200$ epochs.
Interestingly, the ABReLU can achieve the highest test accuracy compared to ReLU, GeLU, Erf and parameterized ReLU (PReLU) activations. This supports a connection between infinite width neural kernels and finite width networks in aspect of activation. 

\begin{figure}[h]
    \centering
    \begin{subfigure}{0.45\textwidth}
        \includegraphics[width=0.9\textwidth]{./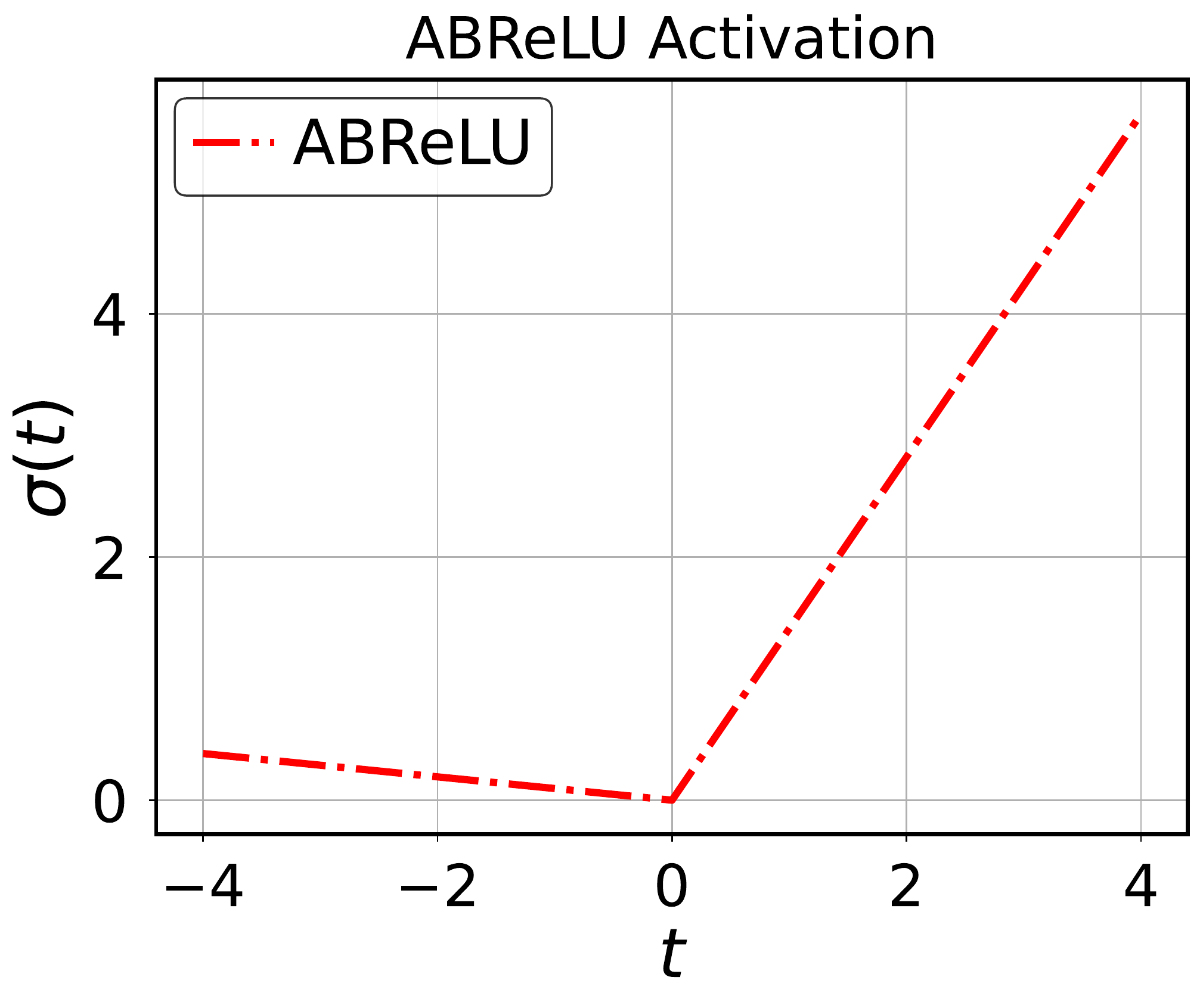}
    \end{subfigure}
    \hspace{0.01in}
    \begin{subfigure}{0.45\textwidth}
        \includegraphics[width=0.9\textwidth]{./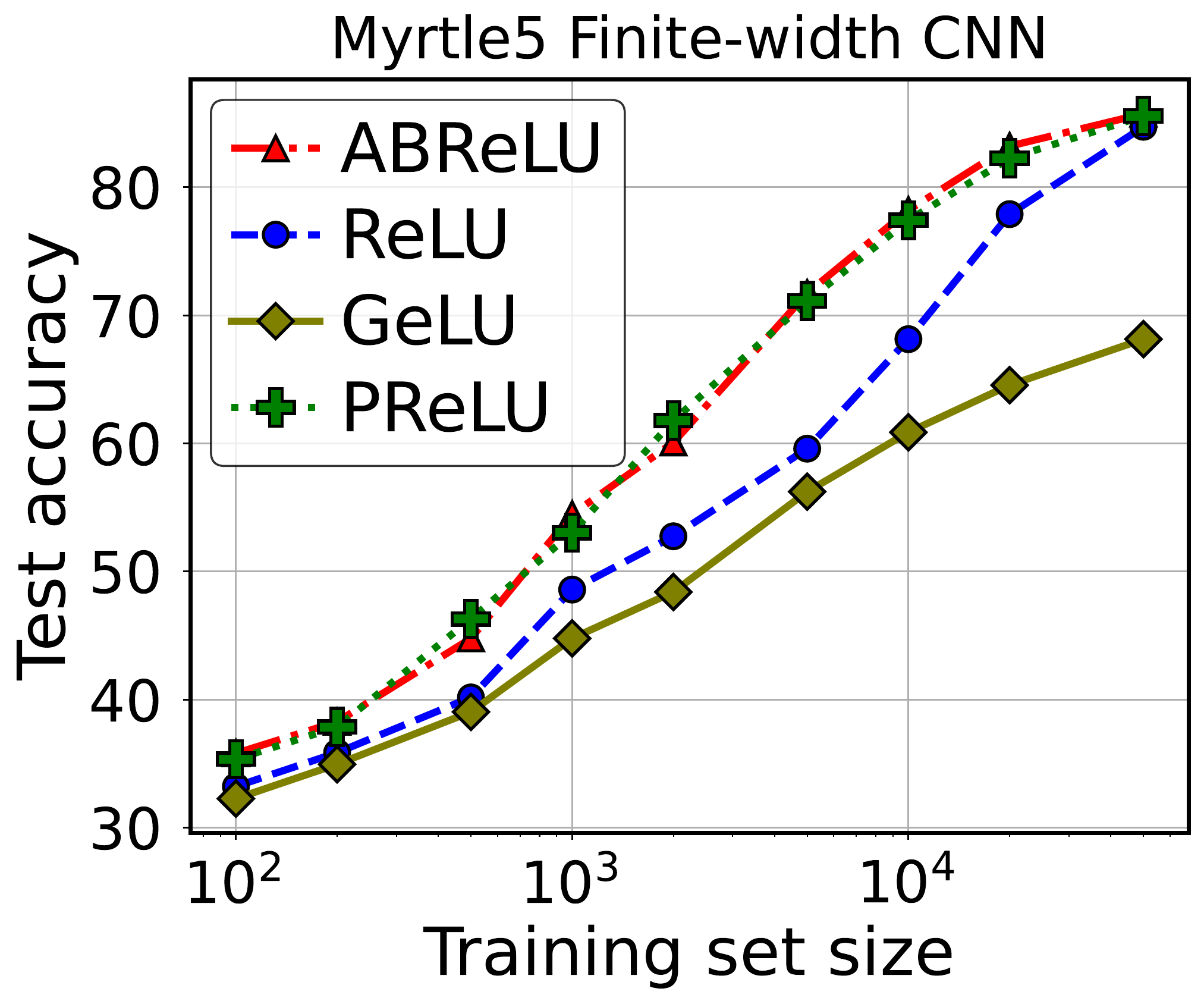}
    \end{subfigure}
    \caption{Kernel informed ABReLU ({\bf left}) and test accuracy of finite-width Myrtle5 networks with various activations ({\bf right}).}
    \label{fig:my_label}
\end{figure}

% \begin{align*}
% \left(c w_1 + \sqrt{1-c^2} w_2\right)^2 
% &= c^2 w_1^2 + (1-c^2) w_2^2 + 2 c \sqrt{1-c^2} w_1 w_2 \\
% &= c^2 w_1^2 + (1-c^2) w_2^2 + 2 \left(\sqrt{1-c^2} w_1\right) \cdot \left( c w_2\right) \\
% &\leq c^2 w_1^2 + (1-c^2) w_2^2  + (1-c^2) w_1^2 + c^2 w_2^2 \\
% &= w_1^2 + w_2^2
% \end{align*}

\end{document}